\documentclass{article}

\usepackage[margin=1in]{geometry}
\usepackage{natbib}

\usepackage[utf8]{inputenc} % allow utf-8 input
\usepackage[T1]{fontenc}    % use 8-bit T1 fonts
\usepackage{hyperref}       % hyperlinks
\hypersetup{colorlinks=true,       % false: boxed links; true: colored links
    linkcolor=blue,          % color of internal links
    citecolor=blue}
\usepackage{url}            % simple URL typesetting
\usepackage{booktabs}       % professional-quality tables
\usepackage{amsfonts}       % blackboard math symbols
\usepackage{nicefrac}       % compact symbols for 1/2, etc.
\usepackage{microtype}      % microtypography
\usepackage{xcolor}         % colors
\usepackage{graphicx}
\usepackage{caption}
\usepackage{subcaption}

\usepackage{amsthm}

\usepackage[english]{babel}

\usepackage[T1]{fontenc}
\usepackage{amsmath}
\usepackage{amssymb}
\usepackage{amsfonts}
\usepackage{multirow}
\usepackage{mathtools}
\usepackage{breqn}
\usepackage{algorithm}
\usepackage[noend]{algpseudocode}
\usepackage{float}
\newfloat{algorithm}{t}{lop}

\usepackage{bbm}
\usepackage{dsfont}
\usepackage{graphicx}
\usepackage{natbib}
\usepackage{cases}
\usepackage[colorinlistoftodos]{todonotes}

\renewcommand{\hat}{\widehat}
\def\shownotes{0}  \ifnum\shownotes=1
\newcommand{\authnote}[2]{[#1:#2]}
\else
\newcommand{\authnote}[2]{}
\fi

\renewcommand{\paragraph}[1]{\smallskip\noindent\textbf{#1}\;}

\newtheorem{theorem}{Theorem}[section]
\newtheorem{assumption}[theorem]{Assumption}
\newtheorem{proposition}[theorem]{Proposition}
\newtheorem{lemma}[theorem]{Lemma}
\newtheorem{definition}[theorem]{Definition}
\newtheorem{corollary}[theorem]{Corollary}

\newtheorem{remark}[theorem]{Remark}

\newtheorem*{remark*}{Remark}

\newtheorem*{observation*}{Observation}

\numberwithin{equation}{section}

\newcommand{\ds}{d_s}
\newcommand{\E}{\mathbb{E}}

\newcommand{\R}{\mathbb{R}}

\newcommand{\cE}{\mathcal{E}}

\newcommand{\cH}{\mathcal{H}}

\newcommand{\cN}{\mathcal{N}}
\newcommand{\cP}{\mathcal{P}}
\newcommand{\cQ}{\mathcal{Q}}

\newcommand{\cX}{\mathcal{X}}
\newcommand{\cY}{\mathcal{Y}}

\DeclareMathOperator*{\argmin}{arg\,min}

\newcommand{\Gnorm}[1]{{\left\vert\kern-0.25ex\left\vert\kern-0.25ex\left\vert #1 
		\right\vert\kern-0.25ex\right\vert\kern-0.25ex\right\vert}}
\newcommand{\gnorm}[1]{{\vert\kern-0.25ex\vert\kern-0.25ex\vert #1 
		\vert\kern-0.25ex\vert\kern-0.25ex\vert}}

\DeclareMathOperator{\sign}{\textup{sgn}}

\title{Iterative Feature Matching: \\Toward Provable Domain Generalization \\with Logarithmic Environments}

\author{
  Yining Chen\\Stanford University\\\texttt{cynnjjs@stanford.edu}
  \and Elan Rosenfeld\\Carnegie Mellon University\\\texttt{elan@cmu.edu}
  \and Mark Sellke\\Stanford University\\\texttt{msellke@stanford.edu}
  \and Tengyu Ma\\Stanford University\\\texttt{tengyuma@stanford.edu}
  \and Andrej Risteski\\Carnegie Mellon University\\\texttt{aristesk@andrew.cmu.edu}
}

\date{}

\begin{document}
\maketitle

\begin{abstract}
Domain generalization aims at performing well on unseen test environments with data from a limited number of training environments. Despite a proliferation of proposal algorithms for this task, assessing their performance both theoretically and empirically is still very challenging. Distributional matching algorithms such as (Conditional) Domain Adversarial Networks~\citep{ganin2016domain, NEURIPS2018_ab88b157} are popular and enjoy empirical success, but they lack formal guarantees. Other approaches such as Invariant Risk Minimization (IRM) require a prohibitively large number of training environments --- linear in the dimension of the spurious feature space $\ds$--- even on simple data models like the one proposed by~\citet{rosenfeld2021the}. Under a variant of this model, we show that both ERM and IRM cannot generalize with $o(\ds)$ environments. We then present an iterative feature matching algorithm that is guaranteed with high probability to yield a predictor that generalizes after seeing only $O(\log \ds)$ environments. Our results provide the first theoretical justification for a family of distribution-matching algorithms widely used in practice under a concrete nontrivial data model.
\end{abstract}

\section{Introduction}

Domain generalization aims at performing well on unseen environments using labeled data from a limited number of training environments~\citep{blanchard2011generalizing}. In contrast to transfer learning or domain adaptation, domain generalization assumes that neither labeled or unlabeled data from the test environments is available at training time. For example, a medical diagnostic system may have access to training datasets from only a few hospitals, but will be deployed on test cases from many other hospitals~\citep{choudhary2020advancing}; a traffic scene semantic segmentation system may be trained on data from some specific weather conditions, but will need to perform well under other conditions~\citep{yue2019domain}.

In the empirical literature, invariance of a ``signal'' feature distribution conditioned on the label, i.e. $P(\Phi(x) \mid y)$, is the underlying assumption in widely adopted algorithms such as Correlation Alignment (CORAL)~\citep{CORAL, sun2016deep}, Maximum Mean Discrepancy (MMD, \citet{gretton2012kernel})~\citep{li2018domain}, and (Conditional) Domain Adversarial Networks~\citep{ganin2016domain, NEURIPS2018_ab88b157}. These algorithms are popular and enjoy empirical success in both domain adaptation and generalization, but they lack formal guarantees. Previous empirical works usually justify these algorithms using the generalization bounds based on $\cH$-divergence~\citep{ben2010theory}, but those bounds are vacuous for modern settings of practical interest and thus cannot explain their success. To obtain concrete guarantees, it is necessary to study data models that encode structure reflective of settings of interest. Prior works attempting to theoretically characterize the performance of feature matching algorithms emphasize lower bounds~\citep{zhao2019learning, tachet2020domain}. In this work, we seek to give the first positive theoretical justification for feature distribution matching algorithms.

The other major alternative assumption in the literature is enforcing invariance of the label distribution conditioned on the signal features. Invariant Risk Minimization (IRM)~\citep{arjovsky2019invariant} assumes $\E[y\mid\Phi(x)]$ is invariant, and follow-up works assume invariance of higher moments~\citep{xie2020risk,jin2020domain, mahajan2020domain, krueger2020out, bellot2020generalization}.
However, empirical results for these algorithms are mixed:~\citet{gulrajani2021in, aubin2021linear} present experimental evidence that these methods do not consistently outperform ERM for either realistic or simple linear data models, when fairly evaluated. 

Recent theoretical works~\citep{rosenfeld2021the, pmlr-v130-kamath21a} also question the theoretical foundations of IRM and its variants, shedding light on their failure conditions. These works study specific data generative models; a common assumption is that, conditioned on the label, some \emph{invariant features} have identical distribution for all environments, and other \emph{spurious features} have varying distributions across environments. The goal of domain generalization is then to obtain an \emph{invariant predictor}, i.e. a classifier which uses only the invariant features. These works also often assume each training environment contains infinite samples.
Thus, the central measure of domain generalization is the number of \emph{environments} needed to recover an invariant predictor---we refer to this measure as the \emph{environment complexity} of a learning algorithm. \citet{rosenfeld2021the} prove that even for a simple generative model and linear classifiers, the environment complexity of IRM---and other objectives based on the same principle of invariance---is at least as large as the dimension of the spurious latent features, $\ds$. Further results by~\citet{pmlr-v130-kamath21a, ahuja2021empirical} also point to a linear environment complexity. Although the models in these works are simple, they help elucidate why existing algorithms fail and can therefore help inform better algorithmic design.

IRM's linear environment complexity is prohibitive for realistic applications. We usually expect there to be many more spurious dimensions than signal dimensions, whereas the number of environments observed is presumed to be much fewer. In this paper, we show that a variant of distributional matching algorithm (Althorithm~\ref{alg}) generalizes with sublinear ---even logarithmic environment complexity.

We study a more natural ``smoothed covariance'' extension of the data model in \citet{rosenfeld2021the}. Instead of assuming that the spurious features have isotropic covariances, we model their covariances as generic random positive definite matrices with adversarial biases, i.e., the covariances can be arbitrary with added noise. Under this new model, 
we show that ERM and IRM still do not generalize after seeing fewer than $\ds$ environments (Theorem~\ref{thm:ERM}, Theorem~\ref{thm:irm}).
On the other hand, we show that a conceptually simple algorithm based on iterative feature matching (IFM) is guaranteed with high probability to recover only the invariant features with environment complexity $E = O(\log \ds)$.
Our method therefore \emph{provably} achieves generalization to the worst-case test environment with a much more reasonable number of observed environments (Theorem~\ref{thm:main_wrapper}). 

The main idea behind IFM is to iteratively project the features to a lower dimension, in each round matching the label-conditioned feature distributions on a small, disjoint subset of the training environments. As a projection which induces invariance in the non-invariant features across one subset of environments is unlikely to do so for a different subset, we can show that with high probability, each projection removes only spurious feature dimensions. By avoiding an end-to-end training scheme, we effectively prevent the different environments from ``colluding'' to create a misleading solution which which depends on spurious features.
As a result, IFM recovers optimal invariant predictor after $O(\log \ds)$ iterations, requiring $O(\log \ds)$ environments. 

To corroborate the advantages of the proposed method, we perform experiments on a Gaussian dataset and a semi-synthetic \emph{Noised MNIST}~\citep{mnist} dataset, where the background noise spuriously correlates with the label. Our results suggest that practitioners may benefit from feature matching algorithms when the distinguishing property of the signal feature is indeed conditional distributional invariance, and may get additional advantage via matching at multiple layers with diminishing dimensions, echoing existing empirical observations~\citep{pmlr-v37-long15, NIPS2017_a8baa565}.

\subsection{Additional related works}

\paragraph{Broader theoretical study of domain generalization.}
Other works analyze the task of domain generalization more generally in different settings. \citet{blanchard2011generalizing, muandet2013domain} assume a fixed prior over environments and present classification algorithms with generalization bounds that depend on properties of the prior. 
Considering instead convex combinations of domain likelihoods, \citet{albuquerque2019generalizing} give a generalization bound for distributions with sufficiently small $\mathcal{H}$-divergence, while \citet{rosenfeld2021online} model domain generalization as an online game, showing that generalizing beyond the convex hull is NP-hard.
\section{Preliminaries}
\subsection{Domain generalization}
 
In domain generalization, we are given a set of $E$ training environments $\cE_{tr}$ indexed by $e \in [E]$,\footnote{We define $[n]=\{1, \dots, n\}$; $\mathbf{0}_{n \times m} \in \R^{n \times m}$ denotes an all-zero matrix; $\mathbb{S}^{d}$ is the unit sphere in $\R^{d+1}$; $\sign{(c)} \in \{\pm 1, 0\}$ is the sign of scalar $c \in \R$. $\dagger$ denotes the Moore-Penrose pseudo-inverse.}
and a set of test environments $\cE_{ts}$. For environment $e$ we have $n$ examples $\{(X_i^e, Y_i^e)\}_{i=1}^n$ drawn from the distribution $P_e$. In this work we study the infinite sample limit $n \rightarrow \infty$ so as to separate the effect of limited training environments from that of limited samples \emph{per} environment, as is done in previous theoretical works~\citep{rosenfeld2021the, pmlr-v130-kamath21a}.
Let $\cX$, $\cP$, $\cY$ denote the space of inputs, intermediate features, and labels. For a featurizer $\Phi: \cX \rightarrow \cP$ and classifier $w: \cP \rightarrow \cY$, their risk on environment $e$ is denoted by $R_{\Phi,w}^e = \E_{(X, Y) \sim P_e}[l( w \circ \Phi(X), Y)]$ for any common loss function $l$. In this paper we focus on $\cY=\{\pm 1\}$, linear featurizers $\Phi(X)=U X$ for $U \in \R^{k \times d}$, and unit-norm predictors $\hat{Y}=\sign{(w^\top U x)}$ where $w \in \R^k$ and $\|w^\top U\|_2=1$ for some feature dimension $k \le d$ chosen by the algorithm. A predictor's 0-1 risk on environment $e$ is denoted by $R_{U, w}^e = \Pr_{(X, Y) \sim P_e}[\sign{(w^\top U X)} \ne Y]$. We focus on unit-norm predictors because we evaluate on the 0-1 risk on test environments, which are invariant to the scaling of $w^\top U$ under our data model. 

\subsection{Baseline algorithms}

We analyze the performance of our proposed method and compare it to two baseline algorithms, ERM and IRM. 

ERM learns a classifier that minimizes the average loss over all training environments, where $l$ is any common training loss such as the logistic loss: 
\begin{align*}
    \min_{w \in \mathbb{S}^{d-1}} \frac{1}{E} \sum_{e \in [E]} \E_{(X, Y) \sim P_e}[l( w^\top X, Y)].
\end{align*}
IRM learns a featurizer $\Phi(X) \in \R^k$ such that the optimal classifier on top of the featurizer is invariant across training environments. As we focus on linear classifier, it is equivalent to learning a linear transformation $U \in \R^{k \times d}$ such that it induces a classifier $w$ that is optimal for all $e \in \E_{tr}$:

\begin{align*}
\min_{U \in \R^{k \times d}, w \in \R^{k}, \|w^\top U\|_2=1} \frac{1}{E} \sum_{e \in [E]} \E_{(X, Y) \sim P_e} l((w^\top U X), Y) \\
\textup{s.t.}\quad w \in \argmin_{w'\in \R^k}{\E_{(X, Y) \sim P_e}[l(({w'}^\top U X), Y)]}, \forall e \in \cE_{tr}.
\end{align*}

Note that this is objective is \emph{not} the same as feature distribution matching; IRM only tries to match the first moment. Observe that this constrained objective is intended to solve a minimax domain generalization problem, as opposed to ERM which is typically viewed as minimizing the risk in expectation.

\section{Problem setup}
\label{sec:problem-setup}

We first recall the data model from \citet{rosenfeld2021the}. We assume without loss of generality that the label $Y$ is uniformly randomly drawn from $\{\pm 1\}$ (extension of our theorems to $Y=1$ with probability $\eta \ne 0.5$ is straightforward). Latent variable $Z$ consists of invariant features $Z_1 \in \R^r$ and spurious features $Z_2 \in \R^{d_s}$ where $d_s = d-r$. The number of spurious features can be much larger than the number of invariant features, i.e. $d_s \gg r$. The input $X \in \R^d$ is generated via a linear transformation of latent variable $Z$, i.e. $X=S Z$ for a matrix $S \in \R^{d \times d}$ such that its left $r$ columns have rank $r$ (so that there are $r$ invariant dimensions). 

For each training environment indexed by $e \in [E]$, the invariant features conditioned on $Y$ are drawn from a Gaussian distribution with mean $Y \cdot \mu_1 \in \R^r$ and nonsingular covariance $\Sigma_1 \in \R^{r \times r}$. The spurious features conditioned on $Y$ have mean $Y \cdot \mu_2^e \in \R^{d_e}$ and covariance $\Sigma_2^e \in \R^{d_e \times d_e}$ where $\mu_e$'s and $\Sigma_e$'s vary across $e \in [E]$. The assumption of symmetric class center with respect to the origin can also be relaxed.
Define $\mu^e = [\mu_1, \mu_2^e]$ and $\Sigma^e = [\Sigma_1, \mathbf{0}_{r \times \ds}; \mathbf{0}_{\ds \times r}, \Sigma_2^e]$. The overall data model for training environments is summarized below:
\begin{align*}
    Y &\sim \textup{unif}\{\pm1\} \\
    Z_1 | Y &\sim N(Y \cdot \mu_1, \Sigma_1) \in \R^r \\
    Z_2 | Y &\sim N(Y \cdot \mu_2^e, \Sigma_2^e) \in \R^{d_s} \\ Z&=[Z_1, Z_2] \in \R^d \\
    X &= S Z.
\end{align*}
Since the goal of invariant feature learning is to learn a predictor that only uses the invariant features, one reasonable measure for domain generalization is a predictor's performance on test environments where the spurious features $Z_2$ are drawn from a different distribution---in particular, they are usually chosen adversarially. A classifier that predicts using the spurious features will perform badly on such test environments. When modeling the test environments, we consider the difficult scenario where there is one corresponding test environment for each training environment, whose parameters are the same except that the spurious means are flipped. Formally, for each environment $e \in \cE_{tr}$ we construct a corresponding test environment $e'
\in \cE_{test}$ where
\begin{align*}
    Z_2 &\sim N(-Y \cdot \mu_2^e, \Sigma_2^e) \in \R^{d_s}.
\end{align*}

In this setting where the observations $X$ are a linear function of the latents $Z$, \citet{rosenfeld2021the} assume that the covariances of spurious features are isotropic and vary only in magnitude:
\begin{assumption}[Data model for spurious covariances in~\citet{rosenfeld2021the}]\hfill
\label{ass:old_cov}
\begin{center}
$\Sigma_2^e = \sigma_e^2 I_{d_s}$, where $\sigma_e$ is a scalar for an environment indexed by $e$.
\end{center}
\end{assumption}

We consider a generalized model where the covariances of spurious features for each environment is a generic random PSD matrix, instead of only random in scaling:
\begin{assumption}[Data model for spurious covariances in this work]\hfill
\label{ass:new_cov}

\begin{center}
    $\Sigma_2^e \sim \overline{\Sigma_2^e} + G_e G_e^\top$, where $\overline{\Sigma_2^e} \in \R^{\ds \times \ds}$ is arbitrary (and can be adversarial),\\ and $[G_e]_{i,j} \stackrel{iid}{\sim} N(0, 1)$ for all $i, j \in [\ds]$.\\ Furthermore,$\max_e{\|\overline{\Sigma_2^e}\|_2^2} \le D$.
\end{center}
\end{assumption}

This form of assumption is common in smoothed analysis~\citep{10.1145/990308.990310}. In the next section, we show that baseline algorithms ERM and IRM still have $o(\ds)$ environment complexity for under assumption~\ref{ass:new_cov}, whereas an iterative feature matching algorithm (Algorithm~\ref{alg}) requires only $O(\log{\ds})$ training environments. Note that the environment complexity of our algorithm only depends logarithmically on the norm bound $D$.
\section{Main results}
Armed with Assumption~\ref{ass:new_cov}, we are now ready to present our main results. We begin by presenting our algorithm based on iterative feature matching. In the following subsections, we provide formal guarantees for its environment complexity and compare it to ERM and IRM.

\subsection{Iterative feature matching algorithm}
\label{subsec:alg}
\begin{algorithm}
\caption{Iterative Feature Matching (IFM) algorithm \label{alg}}
\begin{algorithmic}[1]
\Require Invariant feature dimension $r$, total feature dimension $d$, 
number of training environments $E=|\cE_{tr}|$, infinite samples $\{(X_i^e,Y_i^e)\}_{i=1}^\infty \sim P_e$ from each environment $e \in \cE_{tr}$.
\State Let $\{\cE_t\}_{t=1}^T$ be a partition of $\cE_{tr}$.
\State $r_0 \leftarrow d$, $t \leftarrow 0$.
\While {$r_t > r$}
    \State $t \leftarrow t+1$.
    \State Find the maximum dimension $r_t$ such that there exists orthonormal $U_t \in \R^{r_t \times r_{t-1}}$ and $C_t \in \R^{r_t \times r_t}$, where for all $e \in \cE_t$,
    \begin{align}
        \E_{(X,Y) \sim P_e}[U_t \dots U_1 X X^\top U_1^\top \dots U_t^\top  | Y]=C_t.
        \label{eq:match}
    \end{align}
\EndWhile
\State Return a classifier on projected features that minimizes the average risk

$
    \hat{w} = \min_{w \in \mathbb{S}^{r-1}} \frac{1}{E} \sum_{e \in [E]} \E_{(X,Y) \sim P_e} l(U_t \dots U_1 X, Y)$.
\end{algorithmic}
\end{algorithm}

We hope to recover the invariant features by imposing constraints that are satisfied by them but not the spurious ones. A natural idea is to match the feature means and covariances across $\cE_{tr}$. Since $\mu_1, \Sigma_1$ are constant, any orthonormal featurizer $U \in \R^{r \times d}$ such that $U S$ has only non-zero entries in the first $r$ rows yields invariant means $U S \mu^e$ and covariances $U S \Sigma^e S^\top U^\top$. Thus we need $E$ large enough such that any $U' \in \R^{r \times d}$ using spurious dimensions cannot match the means and covariances. Informally, for each $e \in [E]$ we get $r \times r$ equations from matching covariances $U \Sigma^e U^\top = C$, and we have $r \times d$ parameters to estimate in $U$. Rough parameter counting suggests that if we match covariances of all $E$ environments jointly, we need at least $E>d/r$ environments to find a unique solution. Our key observation is that, due to the independence of randomness in $\Sigma_2^e$, we can split $E$ environments into disjoint groups $\cE_1, \dots, \cE_T$, and use $\cE_t$ to train an orthonormal featurizer that shrinks the feature dimensions from $r_{t-1}$ to $r_{t}$. Thus, in each round we shrink the dimension by a constant factor using a constant number of environments. The main theoretical challenge that remains is to show that in each iteration, with high probability, \emph{only} spurious features are projected out.

This brings us to IFM (Algorithm~\ref{alg}), which proceeds in $T=O(\log{d_s})$ rounds. Starting with an input dimension $r_0=d$, each round we learn an orthonormal matrix $U_t$ projecting features from $r_{t-1}$ to $r_t$ dimensions so that the feature covariances after projection match across a fresh set of training environments. To ensure that all invariant dimensions are preserved, we always find the projection with the maximum possible dimension $r_t$ that still matches the covariances, until we are left with only $r$ dimensions. 

CORAL~\citep{CORAL} also matches feature means and covariances. However, there are several salient differences between IFM
and CORAL:  first, CORAL does not enforce that the featurizer is orthonormal; second, IFM learns to extract features in an unsupervised manner, whereas CORAL jointly minimizes the supervised loss and feature distribution discrepancy; third, IFM matches the feature distributions at multiple layers and uses a disjoint set of environments for each layer---this iterative process is necessary for the theoretical guarantees we provide. Despite these differences, our theoretical results serve as a justification for using feature matching algorithms in general, when the distinguishing attribute of signal vs. spurious features is that the former have invariant distributions across all environments. In section~\ref{sec:experiments} we empirically evaluate whether bridging the gap between IFM and CORAL may improve test accuracy.

The following theorem states that the environment complexity of IFM is logarithmic in the spurious feature dimension. A proof sketch is given in Section~\ref{sec:proof_sketch}.

\begin{theorem}[IFM upper bound]
\label{thm:main_wrapper}
Under assumption~\ref{ass:new_cov}, suppose at each round IFM uses $|\cE_t|=\Tilde{\Omega}(1)$ \footnote{$\Tilde{\Omega}(\cdot)$ hides logarithmic factors in $D, r, \ds$.} training environments, with probability $1-\exp{(-\Omega(d_s))}$, IFM terminates in $O(\log{d_s})$ rounds, and outputs $\hat{w}=w^*$.
\end{theorem}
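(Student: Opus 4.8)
The plan is to track, round by round, exactly which subspace of feature directions survives the covariance-matching constraint~\eqref{eq:match}, and to argue (i) the true invariant directions are always feasible, so $r_t \ge r$ for all $t$, and (ii) with high probability no spurious direction can be kept, so the maximal feasible $r_t$ shrinks by a constant factor each round. Combining (i) and (ii) gives termination in $O(\log d_s)$ rounds with $r_T = r$ and a featurizer whose row space is exactly the invariant subspace; then the final ERM step on the projected (now purely invariant) features recovers $w^\star$ by a standard argument, since on the invariant-only features the distribution is identical across all environments and test environments, so the Bayes-optimal linear classifier is well-defined and is what ERM converges to in the infinite-sample limit.

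First I would set up coordinates so that, after composing with $S$, the feature space at round $t$ splits as a ``realized invariant part'' $V_t \in \R^{r\times r_{t-1}}$ and a ``realized spurious part'' $W_t$, and write the conditional covariance seen in environment $e\in\cE_t$ as $U_{t-1}\cdots U_1 S \Sigma^e S^\top U_1^\top\cdots U_{t-1}^\top$, whose invariant block is a fixed matrix independent of $e$ and whose spurious block involves $\overline{\Sigma_2^e} + G_e G_e^\top$. For the lower bound $r_t \ge r$: the projection onto the invariant block is always a valid choice of $U_t$ with the output dimension at least $r$, because it annihilates the environment-dependent spurious covariance while leaving a common $C_t$; so IFM, which picks the \emph{maximum} feasible $r_t$, never drops below $r$, and in particular the invariant subspace is never projected out. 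This direction is essentially deterministic.

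The hard part — and the main obstacle — is (ii): showing that with probability $1 - \exp(-\Omega(d_s))$, using only $|\cE_t| = \tilde\Omega(1)$ environments, \emph{any} orthonormal $U_t$ that matches covariances across $\cE_t$ must kill all remaining spurious directions, i.e. $r_t \le r + $ (something forcing geometric decay). The argument I would run is an anti-concentration / net argument on the space of candidate projections: fix a candidate $U_t$ that retains at least one spurious direction; the matching constraint $\E[\,\cdot\mid Y] = C_t$ for all $e\in\cE_t$ forces the spurious contributions $P_e(U_t) := $ (spurious block of the projected covariance in environment $e$) to be equal across all $e \in \cE_t$. But these depend on the independent Gaussian matrices $G_e$, and for a \emph{fixed} $U_t$ retaining a spurious direction, $P_e(U_t) - P_{e'}(U_t)$ is a nondegenerate Gaussian-type random matrix (Wishart minus Wishart plus adversarial shift), so $\Pr[P_e(U_t) = P_{e'}(U_t)]$ is tiny — quantitatively, one gets that the event that the difference has operator norm below $\varepsilon$ has probability $\le (\varepsilon \cdot \poly)^{\Omega(d_s)}$ by the smallest-singular-value behavior of Gaussian matrices. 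Then take a union bound over an $\varepsilon$-net of the Grassmannian of $r_t$-dimensional subspaces of $\R^{r_{t-1}}$, whose size is $(\poly/\varepsilon)^{O(r_{t-1} r_t)} = \exp(O(d_s^2))$-ish; here the delicate accounting is that the per-point failure probability decays like $\exp(-\Omega(d_s^2))$ when the retained spurious block is full-dimensional, so the net is beaten, while for nearly-all-invariant candidates we instead use that $r_t$ strictly less than the current spurious count is forced. Wrapping a Lipschitz/continuity bound around the net (so that control on the net transfers to all $U_t$) closes the round, and a union bound over the $O(\log d_s)$ rounds — each costing only $\exp(-\Omega(d_s))$ — gives the stated overall probability. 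The role of Assumption~\ref{ass:new_cov} is exactly to make this anti-concentration go through with the norm bound $D$ entering only inside logarithms (hence the $\tilde\Omega(1)$ environment count per round and $\log D$ dependence); the role of the \emph{disjointness} of the $\cE_t$ is that the $G_e$ used in round $t$ are independent of the featurizer $U_{t-1}\cdots U_1$ built from earlier rounds, so conditioning on the past leaves the round-$t$ randomness fresh.

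Finally, once $r_T = r$ and $U_T\cdots U_1 S$ has row space equal to the invariant subspace, the projected data $(U_T\cdots U_1 X, Y)$ has the \emph{same} Gaussian law $N(Y\mu, \Sigma)$ in every environment (train and adversarial test), so the population logistic risk $\frac1E\sum_e \E\, l(w^\top U_T\cdots U_1 X, Y)$ collapses to a single environment-independent convex objective whose unique minimizer on $\mathbb{S}^{r-1}$ is the invariant optimal predictor $w^\star$; hence $\hat w = w^\star$. I expect this last step to be routine given the earlier structural results; the entire technical weight sits in the net-plus-anti-concentration argument of step (ii).
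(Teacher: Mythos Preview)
Your high-level plan matches the paper's: an $\varepsilon$-net over candidate projections, anti-concentration for the Wishart-difference block to kill each net point, a union bound, and iteration over disjoint batches $\cE_t$ so that the round-$t$ randomness is independent of $U_{t-1}\cdots U_1$. Two points deserve correction, one of them a genuine gap.

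\textbf{Invariant preservation is not a consequence of $r_t\ge r$.} You argue that ``the projection onto the invariant block is always a valid choice of $U_t$ with output dimension at least $r$, so IFM never drops below $r$, and in particular the invariant subspace is never projected out.'' The last inference does not follow: knowing the maximum feasible dimension is $\ge r$ does not tell you that the \emph{maximizing} $U_t$ keeps all $r$ invariant directions; a priori it could trade an invariant direction for a spurious one at the same $r_t$. Worse, your premise at round $t$ (that an $r$-dimensional purely-invariant choice exists) already presupposes inductively that rounds $1,\dots,t-1$ kept all invariant directions. The paper closes this with an explicit extension lemma: if a matching $U_t$ fails to span the full invariant part (i.e.\ $\mathrm{rank}\big((I-P_{B_0})A_0\;\text{through }U_t\big)<r$), one can append an extra invariant row $u^+$ orthogonal to the rows of $U_t$ and to the spurious column space, producing a matching $U_t'$ of strictly larger dimension. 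Maximality of $r_t$ then forces the chosen $U_t$ to retain all $r$ invariant directions. You need this step (or an equivalent) before your inductive scheme gets off the ground.

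\textbf{The anti-concentration exponent is off, and with it the geometric-decay mechanism.} For a candidate retaining $k$ spurious directions, $Q\Delta_2^e Q^\top$ is a $k\times k$ matrix whose entries have scale $\sqrt{\ds}$; the small-ball/lower-tail event per pair of environments costs you roughly $\exp(-\Omega(k^2))$, not $\exp(-\Omega(\ds))$. Over $|\cE_t|=E$ environments this becomes $\exp(-\Omega(Ek^2))$, while the net over the \emph{spurious part} alone has size $\exp\big(O(k(k_{t-1}-k)\log(\cdot))\big)$. The union bound therefore succeeds exactly when $Ek\gtrsim k_{t-1}$, i.e.\ with $E=\tilde\Omega(1)$ you can rule out every candidate with $k\ge k_{t-1}/c$ but \emph{not} candidates with small $k$---and you do not need to, since those already give $r_t=r+k<r+k_{t-1}/c$. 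This is the actual ``delicate accounting'' that yields geometric shrinkage; your description (net $\exp(O(\ds^2))$ vs.\ failure $\exp(-\Omega(\ds^2))$, plus an unclear clause for ``nearly-all-invariant candidates'') obscures this tradeoff and would not produce the $\tilde\Omega(1)$-per-round claim without stratifying by $k$. The paper implements the per-point bound via decoupling and Hanson--Wright on $\sum_{e}\sum_{i\ne j}(q_i^\top\Delta_2^e q_j)^2$ rather than a smallest-singular-value estimate, but either route must respect the $k$-dependent exponent.
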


By way of remarks, we note that there is in fact a simple algorithm that achieves $O(1)$ environment complexity under assumption~\ref{ass:new_cov} (see Appendix~\ref{sec:simple_app}). However, this algorithm is extremely brittle and reliant on the data model, and cannot be extended to other settings, whereas feature matching algorithms can be applied to general model architecture and realistic datasets. The goal of this paper is to provide theoretical justification for distribution matching matching algorithms.

\subsection{ERM and IRM still have linear environment complexity}
\label{ref:subsec:ERMIRM}
In the previous section, we showed that IFM has low environment complexity thanks to the additional structure we imposed in our model. However, we haven't excluded the possibility that this additional structure also allows ERM and IRM to succeed. These next two results demonstrate that this is not the case.

\paragraph{ERM has low test accuracy}
In contrast to IFM, ERM still suffers from linear environment complexity under Assumption~\ref{ass:new_cov}. The first theorem says there are hard instances where the ERM solution has worse-than-random performance on the test environments.

\begin{theorem}[ERM lower bound]
\label{thm:ERM}
Suppose $E \le \ds$, parameters $\mu_1 \in \R^r$, $\Sigma_1 = \sigma_1^2 I_r$, $\mu_2^e \in \R^{\ds}$, $\overline{\Sigma_2^e} = \sigma_2^2 I_{\ds}$ (recall $\Sigma_2^e \sim \overline{\Sigma_2^e} + G_e G_e^\top$).
Then any unit-norm linear classifier which achieves accuracy $\ge \Phi\left(\frac{2\|\mu_1\|}{\min(\sigma_1,\sigma_2)}\right)$ on all training environments will suffer 0-1 error at least $\frac{1}{2}$ on every test environment with flipped spurious mean, where $\Phi$ is the standard Normal CDF.
\end{theorem}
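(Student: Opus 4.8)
Because the statement is a lower bound it suffices to exhibit one hard instance, and in fact the implication holds for any generating matrix $S$, so the plan is to keep $S$ general and reduce the whole claim to a one-dimensional Gaussian comparison. First I would fix an arbitrary unit-norm classifier $w$, introduce the effective weight on the latents $\tilde w := S^\top w$ (so that $w^\top X = \tilde w^\top Z$), and split it as $\tilde w = (\tilde w_1,\tilde w_2) \in \R^r \times \R^{\ds}$ along the invariant/spurious blocks. Conditioned on $Y$, the score $\tilde w^\top Z$ is Gaussian with signed mean $Y\,(\tilde w_1^\top\mu_1 + \tilde w_2^\top\mu_2^e)$ and variance $v_e := \sigma_1^2\|\tilde w_1\|^2 + \tilde w_2^\top\Sigma_2^e\tilde w_2$; using the $Y \mapsto -Y$ symmetry of the model, the classifier's accuracy on training environment $e$ equals $\Phi\big((\tilde w_1^\top\mu_1 + \tilde w_2^\top\mu_2^e)/\sqrt{v_e}\big)$, and on the corresponding flipped test environment $e'$ it equals $\Phi\big((\tilde w_1^\top\mu_1 - \tilde w_2^\top\mu_2^e)/\sqrt{v_e}\big)$ (same variance, spurious mean negated). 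The statement thus becomes an inequality between these two numbers.

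The crux is a purely deterministic variance bound: since $\Sigma_2^e = \sigma_2^2 I_{\ds} + G_e G_e^\top \succeq \sigma_2^2 I_{\ds}$ for \emph{every} realization of $G_e$, we get $v_e \ge \sigma_1^2\|\tilde w_1\|^2 + \sigma_2^2\|\tilde w_2\|^2 \ge \min(\sigma_1,\sigma_2)^2\,\|\tilde w\|^2$. Assuming the training accuracy on $e$ is at least $\Phi\big(2\|\mu_1\|/\min(\sigma_1,\sigma_2)\big)$, monotonicity of $\Phi$ gives $\tilde w_1^\top\mu_1 + \tilde w_2^\top\mu_2^e \ge \tfrac{2\|\mu_1\|}{\min(\sigma_1,\sigma_2)}\sqrt{v_e} \ge 2\|\mu_1\|\,\|\tilde w\|$, and combining with Cauchy--Schwarz $\tilde w_1^\top\mu_1 \le \|\tilde w_1\|\|\mu_1\| \le \|\tilde w\|\|\mu_1\|$ yields $\tilde w_2^\top\mu_2^e \ge \|\mu_1\|\,\|\tilde w\| \ge \tilde w_1^\top\mu_1$. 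Hence the test score's signed mean $\tilde w_1^\top\mu_1 - \tilde w_2^\top\mu_2^e$ is nonpositive, so the test accuracy on $e'$ is at most $\Phi(0) = \tfrac12$, i.e.\ the 0-1 error there is at least $\tfrac12$. The degenerate cases take one line: if $\tilde w = 0$ the hypothesis already fails ($w^\top X \equiv 0$ has zero accuracy while every Gaussian-CDF value is strictly positive), and $\sigma_1,\sigma_2 > 0$ so the threshold is well defined and $v_e > 0$. The argument never touches the law of $G_e$, so the conclusion holds for every test environment and every draw --- in fact deterministically.

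I do not expect a genuine obstacle inside this core argument: it is a short deterministic computation, with positive-semidefiniteness of $G_eG_e^\top$ doing all the work, and in particular it does not even use $E \le \ds$. The point I would take care with is the link back to ERM, i.e.\ checking that in a genuine hard instance the ERM solution actually meets the accuracy hypothesis on \emph{all} training environments (so the theorem is not vacuous), and this is exactly where $E \le \ds$ enters: with $\mu_2^e = R\,u_e$ for orthonormal $\{u_e\}_{e\in[E]}$ and $R$ large (possible precisely because $E \le \ds$), a unit-norm $w$ with a small component along $\mu_1$ and the remainder proportional to $\sum_e u_e$ already clears the threshold on every environment, and since the environments are exchangeable up to the noise $G_e$ --- whose operator norm concentrates at $O(\sqrt{\ds})$ --- a short symmetry-plus-concentration argument shows the minimizer of the average logistic loss inherits above-threshold accuracy on each environment individually. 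The only mild subtlety in that supporting step is upgrading a bound on the average loss to a per-environment accuracy guarantee, which the symmetry of the construction makes routine.
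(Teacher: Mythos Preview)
Your proposal is correct and follows essentially the same route as the paper's proof: compute the Gaussian accuracy $\Phi$ of the signed margin, lower-bound the variance via $\Sigma_2^e \succeq \sigma_2^2 I_{\ds}$, and combine monotonicity of $\Phi$ with Cauchy--Schwarz on $\tilde w_1^\top\mu_1$ to force $\tilde w_1^\top\mu_1 - \tilde w_2^\top\mu_2^e \le 0$. Your explicit handling of $S$ through $\tilde w = S^\top w$ (and carrying $\|\tilde w\|$ rather than assuming it equals~$1$) is a small refinement over the paper, which tacitly works as if $S=I$; the additional paragraph on non-vacuousness and the role of $E\le \ds$ goes beyond what the paper's formal proof contains but matches the informal discussion following the theorem.
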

A complete proof of Theorem~\ref{thm:ERM} is in Appendix~\ref{sec:ERM_proof}. Note that it is quite reasonable to assume that the ERM solution satisfies the accuracy condition.
In particular, it is common to model the spurious features as having much greater magnitude than the invariant features, since they have much greater dimensionality. For example, with a unit-norm mean we would expect $\|\mu_1\|^2 \approx r/d,\;\|\mu_2^e\|^2 \approx \ds/d$. Then for $r \ll d$ and $\sigma_1,\sigma_2= \omega(1/\sqrt{d})$ one can verify that $2\|\mu_1\|/\min(\sigma_1,\sigma_2)= o(\sqrt{r/d})$ is very close to 0, meaning the lower bound $\Phi\left(\frac{2\|\mu_1\|}{\min(\sigma_1,\sigma_2)}\right)$ is only slightly larger than $\frac{1}{2}$.

\paragraph{IRM fails to learn invariant features}
Our next theorem proves that even under Assumption~\ref{ass:new_cov}, IRM is still not guaranteed to find an invariant predictor. We can show this by proving that when $E \le d_s$, we can find a featurizer that only uses spurious dimensions, i.e., $u_s \in \R^{d_s}$, such that $u_s^\top \Sigma_2^e u_s = u_s^\top \mu_2^e$ for all $e \in \cE_{tr}$. If so, the optimal predictor on top of features $u_s^\top Z_2$, $\hat{w^e} = (u_s^\top \Sigma_2^e u_s)^{-1} u_s^\top \mu^e$ is invariant across all $e$, and can therefore be the preferred solution IRM when the spurious features have larger margin on the training environments.

\begin{theorem}[IRM lower bound]
\label{thm:irm}
Suppose $E \le \ds$. If $\mu_2^1,\dots, \mu_2^E\in\mathbb R^{\ds}$ are linearly independent, then there exists $u_s \in \R^{d_s}$, $\|u_s\|_2 > 0$, such that $u_s^\top \Sigma_2^e u_s = u_s^\top \mu_2^e$ for all $e \in [E]$.
\end{theorem}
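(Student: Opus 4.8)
The plan is to reduce the system of $E$ quadratic equations in $\ds$ unknowns to a fixed-point problem on the standard $(E-1)$-simplex and then apply Brouwer's theorem. First I would change coordinates using a biorthogonal (dual) basis: set $M=[\mu_2^1\ \cdots\ \mu_2^E]\in\R^{\ds\times E}$, which has full column rank by hypothesis, and let $W:=M(M^\top M)^{-1}\in\R^{\ds\times E}$, so that $W^\top M=I_E$ (in particular $W^\top\mu_2^e$ is the $e$-th standard basis vector of $\R^E$) and $W$ has trivial kernel. Searching for $u_s$ of the form $u_s=c\,W\beta$ with $c>0$ and $\beta\in\R^E$, the linear side becomes $u_s^\top\mu_2^e=c\,\beta_e$ and the quadratic side becomes $u_s^\top\Sigma_2^e u_s=c^2\,\beta^\top A_e\beta$ with $A_e:=W^\top\Sigma_2^e W\succeq 0$. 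Hence it suffices to produce $\beta\neq 0$ and $c>0$ with $c^2\,\beta^\top A_e\beta=c\,\beta_e$ for every $e$; writing $q(\beta):=(\beta^\top A_1\beta,\dots,\beta^\top A_E\beta)$, this is exactly the statement that $q(\beta)=\lambda\beta$ for some $\lambda>0$, after which one takes $c=1/\lambda$.

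Because each $A_e$ is PSD, $q$ sends $\R^E$ into the nonnegative orthant, and $q$ is $2$-homogeneous, so it is enough to find such a $\beta$ on the simplex $\Delta=\{\beta\ge 0:\sum_e\beta_e=1\}$. I would then define $F:\Delta\to\R^E$ by $F(\beta)=q(\beta)/\big(\sum_{e'}\beta^\top A_{e'}\beta\big)=q(\beta)/(\beta^\top A\beta)$, where $A=\sum_e A_e=W^\top\big(\sum_e\Sigma_2^e\big)W$. Since $W$ is injective and $\sum_e\Sigma_2^e$ is positive definite (which holds almost surely under Assumption~\ref{ass:new_cov}), $A\succ 0$, so the denominator is strictly positive on $\Delta$; thus $F$ is continuous, and by construction its values have nonnegative entries summing to $1$, i.e.\ $F(\Delta)\subseteq\Delta$. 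Brouwer's fixed-point theorem then yields $\beta^\star\in\Delta$ with $F(\beta^\star)=\beta^\star$, equivalently $q(\beta^\star)=\lambda\beta^\star$ with $\lambda:=(\beta^\star)^\top A\beta^\star>0$. Setting $u_s:=\tfrac1\lambda W\beta^\star$ gives $u_s^\top\Sigma_2^e u_s=\tfrac1{\lambda^2}(\beta^\star)^\top A_e\beta^\star=\tfrac1\lambda\beta^\star_e=u_s^\top\mu_2^e$ for all $e$, and $u_s\neq 0$ because $\beta^\star\neq 0$ and $\ker W=\{0\}$.

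The coordinate change and the final verification are routine; the two places that actually carry the argument are (i) the dual-basis ansatz, which is what turns the linear functionals $u\mapsto u^\top\mu_2^e$ into plain coordinates and is precisely where linear independence of the $\mu_2^e$ (hence $E\le\ds$) enters, and (ii) the observation that PSD-ness of the $\Sigma_2^e$ makes the normalized quadratic map $F$ preserve the simplex, which is what permits Brouwer. The main obstacle to watch is ensuring the normalization is legitimate and that the resulting proportionality constant $\lambda$ is \emph{strictly} positive — otherwise the rescaling $u_s=\tfrac1\lambda W\beta^\star$ fails — and this is exactly what nondegeneracy of $\sum_e\Sigma_2^e$ (equivalently, that no nonzero element of $\mathrm{span}(\mu_2^1,\dots,\mu_2^E)$ lies in $\bigcap_e\ker\Sigma_2^e$) buys us.
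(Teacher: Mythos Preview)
Your argument is correct and is genuinely different from the paper's proof. The paper proves the result by differential topology: it views each constraint as an ellipsoid $E_{A_e,b_e}=\{x:x^\top A_e x=b_e^\top x\}$ through the origin, shows (via transversality and mod~2 intersection theory, Propositions~\ref{prop:transversality0}--\ref{prop:mod2}) that for generic positive-definite $A_e$ the intersection $\bigcap_e E_{A_e,b_e}$ is a finite set of even cardinality, and concludes that besides the trivial origin there must be a second point; a limiting argument then removes the genericity assumption. Your route instead reduces the problem to a nonlinear eigenvector equation $q(\beta)=\lambda\beta$ on $\R^E$ via the dual basis $W=M(M^\top M)^{-1}$, and finds the required $\beta$ by applying Brouwer's theorem to the normalized map $F(\beta)=q(\beta)/\beta^\top A\beta$ on the simplex.

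A few remarks on what each approach buys. Your proof is considerably more elementary---Brouwer on the simplex versus intersection theory on manifolds---and handles the case $E<\ds$ directly rather than by passing through the square case $E=\ds$ of Theorem~\ref{thm:irm_main}. It also yields a little extra structure for free: the solution lies in $\mathrm{span}(\mu_2^1,\dots,\mu_2^E)$ and satisfies $u_s^\top\mu_2^e\ge 0$ for every $e$. On the other hand, your argument leans essentially on PSD-ness of the $\Sigma_2^e$ (this is what forces $q(\Delta)\subseteq\R_{\ge 0}^E$), whereas the paper's mod~2 argument is more geometric and in principle extends to other families of quadrics. Your explicit nondegeneracy caveat---that $A=W^\top(\sum_e\Sigma_2^e)W\succ 0$, equivalently that no nonzero vector in $\mathrm{span}(\mu_2^e)$ lies in $\bigcap_e\ker\Sigma_2^e$---is exactly the right thing to flag; it parallels the paper's implicit use of positive-definiteness in Theorem~\ref{thm:irm_main} and holds almost surely under Assumption~\ref{ass:new_cov}.
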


Observe that each environment provides an ellipsoidal constraint $E_{e}=\{u_s \in\mathbb R^\ds: u_s^{\top} \Sigma_2^e u_s-u_2^\top \mu_2^e=0\}$. The origin is a trivial intersection. We prove the existence of a non-trivial intersection using tools from differential topology. The key lemma is that the total number of intersection points between two manifolds of complementary dimensions $k,d-k$ is even when certain tranversality conditions hold. Using these techniques, we show that $\left|\bigcap_{e} E_e\right|\geq 2$ for almost all matrices $\Sigma_2^1, \dots, \Sigma_2^E$, as long as the means are linearly independent. A complete proof of Theorem~\ref{thm:irm} is in Appendix~\ref{sec:irm_proof}.

\section{Proof sketch for the main upper bound Theorem~\ref{thm:main_wrapper}}
\label{sec:proof_sketch}

To argue that IFM outputs a featurizer $U_1\dots U_T$ that does not use the spurious features, we need to show that the right $\ds$ columns of matrix $U_T \dots U_1 S$ are all-zero. The main lemma below says that this happens with high probability if we match $\Tilde{\Omega}(1)$  environments at every iteration,

\begin{lemma}

\label{thm:upper_ours}

If for all $1 \le t \le T$,
$|\cE_t|=E_t = \Omega\left(c (\log{(D/\ds)}+\log{d_s}) \right)$, and $U_1, \dots, U_T$ are the orthonormal matrices returned by IFM, then with probability $1-\exp{(-\Omega(d_s))}$, $r_t-r < (r_{t-1}-r+1) / c$ for all $t$, and if we write $U_T \dots U_1 S = [A, B]$, where $B \in \R^{r \times d_s}$, then $B=\mathbf{0}_{r \times d_s}$.
\end{lemma}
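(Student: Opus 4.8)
The plan is to analyze a single iteration of IFM and show that, conditioned on the randomness in the previous rounds, with probability $1 - \exp(-\Omega(d_s))$ the matrix $U_t$ produced by matching covariances over the fresh environment set $\cE_t$ (i) never projects out an invariant direction, and (ii) shrinks the number of \emph{surviving spurious} directions by at least a factor $c$. The key algebraic fact is that after round $t-1$, if we write $U_{t-1}\cdots U_1 S = [A_{t-1}, B_{t-1}]$ with $B_{t-1} \in \R^{r_{t-1}\times d_s}$ having column space of dimension $s_{t-1} := r_{t-1}-r$ (the number of spurious directions still ``live''), then for $e\in\cE_t$ the conditional covariance of the projected features is $U_{t-1}\cdots U_1 S \,\Sigma^e\, S^\top U_1^\top\cdots U_{t-1}^\top = A_{t-1}\Sigma_1 A_{t-1}^\top + B_{t-1}\Sigma_2^e B_{t-1}^\top$, and by Assumption~\ref{ass:new_cov} the second term equals $B_{t-1}\overline{\Sigma_2^e}B_{t-1}^\top + (B_{t-1}G_e)(B_{t-1}G_e)^\top$, where $B_{t-1}G_e$ is (conditionally on $B_{t-1}$) an $r_{t-1}\times d_s$ Gaussian matrix whose row space is exactly the column space of $B_{t-1}$. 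Thus the only ``moving part'' across $e\in\cE_t$ lives in the $s_{t-1}$-dimensional spurious subspace, and everything reduces to a statement about matching a collection of $s_{t-1}$-dimensional random PSD matrices with independent Gaussian components plus adversarial offsets.

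\textbf{Step 1: Invariant directions are always matchable.}
First I would observe that there is always a valid choice of $U_t$ of dimension at least $r$ whose composition with $S$ has zero spurious block: namely any $U_t$ that acts as the identity on the $r$ invariant coordinates of $U_{t-1}\cdots U_1 S$ and kills the spurious block produces conditional covariance $C_t = (\text{fixed } r\times r \text{ matrix involving }\Sigma_1)$, independent of $e$. Hence $r_t \ge r$ always, and the \textbf{While} loop makes sense. The real content is the upper bound on $r_t$.

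\textbf{Step 2: No feasible $U_t$ can retain too many spurious directions.}
This is the crux. Fix the output from previous rounds; it suffices to show that for any orthonormal $U_t \in \R^{r_t\times r_{t-1}}$ with $r_t - r > (s_{t-1}+1)/c =: k$, the matching constraint $\E[\,U_t M^e U_t^\top \mid Y\,] = C_t$ for all $e \in \cE_t$ fails with probability $1-\exp(-\Omega(d_s))$, where $M^e := U_{t-1}\cdots U_1 S\,\Sigma^e\,S^\top(\cdots)^\top$. Writing $V$ for the projection of $U_t$ onto the spurious subspace $\mathrm{col}(B_{t-1})$ (an object of rank at least $k$, since $U_t$ retains at least $k$ more than $r$ dimensions and only $r$ of them can be invariant), the constraint forces $V(\overline\Psi^e + W_eW_e^\top)V^\top$ to be constant across $e$, where $W_e = B_{t-1}G_e$ restricted to the spurious subspace is a fresh $s_{t-1}\times d_s$ iid Gaussian matrix and $\overline\Psi^e$ is the (adversarial, norm-bounded) projection of $\overline{\Sigma_2^e}$. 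I would take an $\epsilon$-net over all such rank-$\ge k$ orthonormal $V$ inside the $s_{t-1}$-dimensional space (net size $\exp(O(s_{t-1}^2\log(1/\epsilon)))$, which is $\exp(O(d_s^2 \cdot \mathrm{poly}))$ — this is why $E_t$ needs to be $\Omega(\log d_s + \log(D/d_s))$ times a large constant, to beat the union bound after taking many environments), and for each fixed $V$ argue that with probability $1-\exp(-\Omega(E_t \cdot k^2))$ the Gaussian images $V W_e W_e^\top V^\top$ are spread out enough that no single constant $C_t$ (after subtracting the adversarial $V\overline\Psi^e V^\top$, which has bounded operator norm $\le \sqrt{D}$) can match all of them. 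Concretely, matching two environments $e, e'$ forces $V(W_eW_e^\top - W_{e'}W_{e'}^\top)V^\top = V(\overline\Psi^{e'}-\overline\Psi^{e})V^\top$, a fixed matrix of operator norm $\le 2\sqrt{D}$; but $V(W_eW_e^\top - W_{e'}W_{e'}^\top)V^\top$ is, for $V$ of rank $k$, a difference of two independent Wishart-type matrices with $\Theta(d_s)$ degrees of freedom each, whose smallest nonzero behavior and anticoncentration I would control via standard Gaussian concentration / Wishart spectrum bounds to conclude it cannot land within distance $2\sqrt D$ of a $V$-independent target except with probability $\exp(-\Omega(d_s))$ (using $d_s \gg \log D$). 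Union-bounding over pairs in $\cE_t$ and over the net, and choosing $E_t$ large enough as a function of $c$, gives the claim that $r_t - r \le (s_{t-1}+1)/c$.

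\textbf{Step 3: Iterate and conclude.}
Given Step 2, $s_t = r_t - r \le (s_{t-1}+1)/c$ for every round, so starting from $s_0 = d_s$ and taking $c$ a suitable constant $>1$, the sequence $s_t$ decays geometrically and hits $0$ after $T = O(\log d_s)$ rounds; at that point $r_T = r$, so $B = U_T\cdots U_1 S$ restricted to the last $d_s$ columns has column space of dimension $s_T = 0$, i.e. $B = \mathbf{0}_{r\times d_s}$. The failure probabilities across the $O(\log d_s)$ rounds are each $\exp(-\Omega(d_s))$ and sum to $\exp(-\Omega(d_s))$. I expect \textbf{Step 2} to be the main obstacle: making the net argument quantitatively work requires a clean anticoncentration statement for $V(W_eW_e^\top - W_{e'}W_{e'}^\top)V^\top$ that holds \emph{uniformly} over the (continuum of) candidate rank-$k$ projections $V$ and is strong enough — failure probability $\exp(-\Omega(d_s))$ per environment-pair — to survive a union bound over a net of size $\exp(\mathrm{poly}(d_s))$; the saving grace is that taking $E_t$ as a large constant multiple of $\log d_s + \log(D/d_s)$ amplifies the per-pair bound to $\exp(-\Omega(E_t d_s))$, which absorbs the net, and that $d_s$ being large relative to $\log D$ lets the Gaussian fluctuations dominate the adversarial offset bound $\sqrt D$.
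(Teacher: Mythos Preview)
Your high-level strategy --- net over candidate projections, anticoncentration for each net point, then iterate --- is exactly the paper's, and you even state the correct per-net-point failure rate $\exp(-\Omega(E_t k^2))$ early in Step~2. But the derivation you offer (``concretely\ldots'') and the final accounting (``saving grace\ldots'') switch to a different, incorrect rate and do not close. Your claim that a single environment pair rules out a fixed rank-$k$ projection $V$ except with probability $\exp(-\Omega(d_s))$ is wrong: the object $V(W_eW_e^\top-W_{e'}W_{e'}^\top)V^\top$ is a $k\times k$ symmetric matrix, so it imposes only $\Theta(k^2)$ scalar constraints; the parameter $d_s$ governs the \emph{scale} of each entry (variance $\Theta(d_s)$), not the number of constraints, and hence the correct per-pair exponent is $\Theta(k^2)$ up to logs, not $\Theta(d_s)$. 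Plugging your claimed $\exp(-\Omega(E_t d_s))$ against your net of size $\exp(O(s_{t-1}^2\log(1/\epsilon)))$ forces, in the first round where $s_0=d_s$, the requirement $E_1=\Omega(d_s)$ --- destroying the $\tilde O(1)$ environment budget. The paper repairs both sides of the ledger: it nets only over orthonormal $Q\in\R^{k\times s_{t-1}}$ at the single threshold rank $k$, using a Grassmannian covering of size $(c\sqrt{k}/\epsilon)^{k(s_{t-1}-k)}$; and it proves the lower-tail bound $\Pr\bigl[\sum_{e,\,i\ne j}(q_i^\top\Delta_2^e q_j)^2\le \tfrac12\cdot\text{mean}\bigr]\le\exp(-\Omega(E_tk(k-1)))$ by first decoupling the bilinear forms $q_i^\top G_eG_e^\top q_j$ (via de la Pe\~na--Montgomery-Smith) and then applying Hanson--Wright conditionally on one copy. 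The resulting balance $E_tk(k-1)\gtrsim k(s_{t-1}-k)\log(\cdot)$ yields exactly $E_t\gtrsim c\log(\cdot)$ when $k\approx s_{t-1}/c$.

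There is also a smaller gap in Step~1. Showing $r_t\ge r$ is not the same as showing $\mathrm{rank}(B_{t-1})=r_{t-1}-r$, which you use implicitly in Step~2 when asserting the spurious projection $V$ has rank at least $k$ (the column spaces of $A_{t-1}$ and $B_{t-1}$ could in principle overlap). The paper handles this by a separate argument: if $(I-P_{B_t})A_t$ has rank $<r$, one can append a new orthonormal row to $U_t$ lying in the invariant subspace orthogonal to $\mathrm{col}(B_t)$ while still matching all covariances, contradicting IFM's maximal choice of $r_t$. This forces $\mathrm{rank}(B_t)=r_t-r$ at every round, which is what makes your recursion $s_t=r_t-r$ track the actual spurious rank.
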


Theorem~\ref{thm:main_wrapper} follows from Lemma~\ref{thm:upper_ours} as follows: We take $c=2$ and the algorithm terminates in $T=O(\log{d_s})$ rounds. Therefore with an environment complexity of $O(\log{d_s})$, we learn a feature extractor $U = U_T \dots U_1$ that does not use any spurious dimensions. Since $U$ is orthonormal, it must contain all signal dimensions. The predictor on top of this representation uses all and only signal dimensions, so with high probability, IFM outputs $\hat{w}=w^*$.
 
The first step towards proving Lemma~\ref{thm:upper_ours} is to show that with high probability, any \textit{one-layer} featurizer $Q_1 \in \R^{k_1 \times \ds}$ that uses \emph{only} spurious dimensions cannot match feature covariances from $\tilde{\Omega}(\ds/k_1)$ environments. If a featurizer $U_1 \in \R^{r_1 \times d}$ uses $k_1$ spurious dimensions, there is a corresponding rank-$k_1$ featurizer $Q_1 \in \R^{k_1 \times \ds}$ that uses \emph{only} spurious dimensions. So Lemma~\ref{lem:only_spu} implies that any $U_1$ that matches covariances in $\cE_1$ must use at most $\ds/E_1$ spurious dimensions. We will then apply this argument recursively until we have 0 spurious dimensions.

\begin{lemma}[Informal version of Lemma~\ref{lem:main}]
\label{lem:only_spu}
For any integer $2 \le k_1 \le \ds/2$, when
$ |\cE_1|= E_1 = \Omega\left( \frac{\ds-k_1}{k_1-1} \max\left\{1, \log\left(\frac{D}{ (k_1-1) \ds}\right), \log\left(\frac{\ds}{ k_1-1}\right)\right\} \right)
$, with probability $1-O( \exp{(-\ds)})$, no orthonormal $Q \in \R^{k_1 \times \ds}$  satisfies that for some constant  $C_1 \in \R^{k_1 \times k_1}$.
\begin{align}
\forall e \in [E_1],\quad Q \Sigma_2^e Q^\top=C_1.
\label{eq:match_Q}
\end{align}
\end{lemma}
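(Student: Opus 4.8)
The plan is an $\epsilon$-net argument over orthonormal $Q$, combined with a sharp center-uniform anti-concentration estimate for the \emph{projected} spurious covariances; the independence of the Gaussian smoothing $G_e$ across environments is what makes the net affordable. Before anything else I would condition on the high-probability event $\mathcal{G}=\{\max_{e\in[E_1]}\|\Sigma_2^e\|_{\mathrm{op}}\le R\}$ with $R:=\sqrt{D}+C\ds$: since $\Sigma_2^e=\overline{\Sigma_2^e}+G_eG_e^\top$ with $\|\overline{\Sigma_2^e}\|_{\mathrm{op}}\le\sqrt D$ and a $\ds\times\ds$ Gaussian has operator norm $\le 2\sqrt{\ds}+t$ with probability $1-2e^{-t^2/2}$, a union bound gives $\Pr[\mathcal{G}^c]\le e^{-\Omega(\ds)}$, and the remainder of the argument may assume $\mathcal{G}$.

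The core step is to fix one orthonormal $Q\in\R^{k_1\times\ds}$ and bound the probability that it (approximately) matches. Because $QQ^\top=I_{k_1}$, the matrix $QG_e$ has i.i.d.\ $N(0,1)$ entries, so
\begin{align*}
Q\Sigma_2^e Q^\top \;=\; A_e + W_e,\qquad A_e:=Q\overline{\Sigma_2^e}Q^\top\ \text{(fixed)},\quad W_e:=(QG_e)(QG_e)^\top\sim\mathrm{Wishart}_{k_1}(\ds),
\end{align*}
with the $W_e$ independent across $e$. If some $C_1$ matched all environments exactly, then a fortiori --- relaxing to the event $\mathcal{B}_Q^\delta:=\{\exists C_1:\|Q\Sigma_2^eQ^\top-C_1\|_F\le\delta\ \forall e\}$, which is what the net will actually deliver --- each $W_e$ with $e\ge2$ lies in the Frobenius ball $B_F(M_e^0,2\delta)$ about the point $M_e^0:=Q\Sigma_2^1Q^\top-A_e$. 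Conditioning on $G_1$ (which fixes all the $M_e^0$) and using independence,
\begin{align*}
\Pr[\mathcal{B}_Q^\delta\mid G_1]\;\le\;\prod_{e=2}^{E_1}\Pr[W_e\in B_F(M_e^0,2\delta)]\;\le\;\Bigl(\sup_{M}\Pr[W_2\in B_F(M,2\delta)]\Bigr)^{E_1-1}.
\end{align*}
The key estimate is that the projected Wishart is center-uniformly anti-concentrated at the \emph{full} rate: its density $p(W)\propto(\det W)^{(\ds-k_1-1)/2}e^{-\mathrm{tr}(W)/2}$ is log-concave on the PSD cone (using $\ds-k_1-1\ge\ds/2-1>0$, i.e.\ the hypothesis $k_1\le\ds/2$), hence bounded by its value at $W^\star=(\ds-k_1-1)I_{k_1}$; evaluating that value through the Bartlett factorization $W=LL^\top$ ($L_{ii}^2\sim\chi^2_{\ds-i+1}$, $L_{ij}\sim N(0,1)$, Jacobian $2^{k_1}\prod_i L_{ii}^{k_1-i+1}$) gives $\sup_W p(W)\le(C/\sqrt{\ds})^{\binom{k_1+1}{2}}$, because the $\binom{k_1+1}{2}$ bounded chi/Gaussian factors are outweighed by the $(\ds-k_1-1)^{\binom{k_1+1}{2}/2}$ in the Jacobian denominator. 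Multiplying by the volume $(C'\delta)^{\binom{k_1+1}{2}}$ of a radius-$2\delta$ Frobenius ball in the $\binom{k_1+1}{2}$-dimensional space of symmetric matrices yields $\sup_M\Pr[W_2\in B_F(M,2\delta)]\le(C_0\delta/\sqrt{\ds})^{\binom{k_1+1}{2}}$ uniformly in the center, hence $\Pr[\mathcal{B}_Q^\delta]\le(C_0\delta/\sqrt{\ds})^{\binom{k_1+1}{2}(E_1-1)}$.

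Next I would discretize: the compact Stiefel manifold of orthonormal $Q\in\R^{k_1\times\ds}$ has an $\epsilon$-net $\mathcal{N}$ of orthonormal matrices with $|\mathcal{N}|\le(C\sqrt{k_1}R/\delta)^{k_1\ds}$ once $\epsilon\asymp\delta/R$, and on $\mathcal{G}$ replacing $Q$ by a nearby net point $Q'$ changes $Q\Sigma_2^eQ^\top$ by $O(\epsilon R)\le\delta$ in Frobenius norm, turning exact matching at some $Q$ into $\mathcal{B}_{Q'}^\delta$ at some $Q'\in\mathcal{N}$. A union bound then gives
\begin{align*}
\Pr\bigl[\exists\ \text{orthonormal }Q,C_1:\ Q\Sigma_2^eQ^\top=C_1\ \forall e\bigr]\;\le\;e^{-\Omega(\ds)}+\Bigl(\tfrac{C\sqrt{k_1}R}{\delta}\Bigr)^{k_1\ds}\Bigl(\tfrac{C_0\delta}{\sqrt{\ds}}\Bigr)^{\binom{k_1+1}{2}(E_1-1)}.
\end{align*}
Writing $\delta=e^{-L}$, the log of the second term is (up to lower-order terms) $k_1\ds\bigl(L+O(\log(D\ds))\bigr)-\binom{k_1+1}{2}(E_1-1)L$; since $\binom{k_1+1}{2}\ge k_1^2/2$, as soon as $E_1-1\ge 4\ds/(k_1+1)$ the coefficient of $L$ is $\le-k_1\ds$, and then taking $L=\Theta(\log(D\ds))$ drives the whole exponent below $-\ds$. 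Tracking constants, this goes through once $E_1=\Omega\bigl(\tfrac{\ds-k_1}{k_1-1}\max\{1,\log\tfrac{D}{(k_1-1)\ds},\log\tfrac{\ds}{k_1-1}\}\bigr)$ (the hypothesis $k_1\ge2$ makes $\binom{k_1+1}{2}>1$, which is exactly what lets the exponent beat the net), proving the lemma. The conceptual reason for this shape: for fixed $Q$ and $C_1$, a matching environment is cut out by codimension $\binom{k_1+1}{2}$, against the $k_1\ds-\binom{k_1+1}{2}$ degrees of freedom in $Q$, so a matching generically vanishes once $(E_1-1)\binom{k_1+1}{2}>k_1\ds-\binom{k_1+1}{2}$, i.e.\ $E_1\gtrsim\ds/k_1$; the statement is this count made robust to the adversarial shifts $\overline{\Sigma_2^e}$.

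\textbf{Main obstacle.} The hard part is the center-uniform Wishart anti-concentration at the full exponent $\binom{k_1+1}{2}$ rather than the naive $k_1$ (which one gets from using only the diagonal of $Q\Sigma_2^eQ^\top$): this is precisely what upgrades an $E_1\gtrsim\ds$ count to $E_1\gtrsim\ds/k_1$, and it forces the density bound to degrade gracefully as $k_1\uparrow\ds/2$. The log-concavity / Bartlett route above is the clean way; a more hands-on alternative --- peeling off the entries of $W_e$ in Gram--Schmidt order --- works but is delicate, because to make each conditional density $O(1/\sqrt{\ds})$ one must condition on the residual blocks being well-conditioned, which costs $e^{-\Omega(\ds)}$ per environment and reintroduces an additive floor that erodes the bound when $k_1$ is large. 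The other bookkeeping point is that the net resolution must be $\epsilon\asymp\delta/R$ with $R\approx\sqrt D+\ds$; this is the only place $D$ enters, it contributes an additive $\log D$ to $L$, and since $L$ cancels in the exponent the final environment count depends on $D$ only logarithmically, as claimed.
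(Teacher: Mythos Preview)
Your argument is correct and takes a genuinely different route from the paper. The paper pairs environments, sets $\Delta_2^e=\Sigma_2^e-\Sigma_2^{e+1}$, and shows the \emph{concentration} lower bound $\sum_{e,\,i\ne j}(q_i^\top\Delta_2^e q_j)^2\gtrsim A+Ek(k-1)\ds$ for every net point, via de~la~Pe\~na--Montgomery-Smith decoupling together with Hanson--Wright and Gaussian tail bounds; the quantitative lower bound is what lets the estimate survive passing from the cover (in the projector metric $\|Q^\top Q-{Q'}^\top Q'\|_F$) to arbitrary $Q$. You instead run an \emph{anti-concentration} argument: conditioning on $G_1$ pins down a target for each subsequent Wishart $W_e=(QG_e)(QG_e)^\top$, and the log-concave density bound $\sup_W p(W)\le(C/\sqrt{\ds})^{\binom{k_1+1}{2}}$ controls the small-ball probability directly; you then cover in the cruder Frobenius metric on the Stiefel manifold and relax exact matching to $\delta$-matching from the outset. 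Your approach avoids the decoupling machinery entirely and uses all $\binom{k_1+1}{2}$ entries of $Q\Sigma_2^eQ^\top$ (the paper's decoupling forces $i\ne j$, so it uses only the off-diagonal $k(k-1)$ entries); in fact, carried through carefully your calculation yields $E_1\gtrsim \ds/k_1$ without the logarithmic factor in the paper's bound, so you are proving a slightly stronger statement than claimed. The paper's route, on the other hand, outputs a quantitative positive lower bound on $\sum\|Q\Delta_2^eQ^\top\|_F^2$ rather than merely ``nonzero,'' and its projector-metric covering is a bit more economical. Both approaches extend identically to the recursive Corollary, since for fixed orthonormal $P$ the matrix $QPG_e$ again has i.i.d.\ standard Gaussian entries.
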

The formal statement Lemma~\ref{lem:main} and its proof can be found in Appendix~\ref{sec:upper_proof}. On a high level, we discretize over the space of $Q$, and show that for fixed $Q$, denoting by $q_i$ its $i$-th row, the probability that $q_i^\top G_1 G_1^\top q_j - q_i^\top G_2 G_2^\top q_j=0$ for all $i \ne j$ is small, so Equation~\eqref{eq:match_Q} cannot be true for this fixed $Q$. Union bounding over the covering, with high probability, no $Q$ can satisfy Equation~\eqref{eq:match_Q}.

The next claim says that Lemma~\ref{lem:only_spu} can be applied iteratively, i.e. fixing a featurizer from previous iterations that uses $k_{t-1}$ spurious dimensions, with high probability, any $U_t$ that matches features from $\Omega\left(k_{t-1}/k_t\right)$ new environments uses at most $k_t$ spurious dimensions.
\begin{corollary}[Informal version of Corollary~\ref{cor:recursive}]
\label{cor:recursive_informal}
Suppose $2 \le k_t \le k_{t-1}/2 \le \ds/2$. When
$ |\cE_t|=E_t = \Omega\left(\frac{k_{t-1}-k_t}{k_t-1} \max\left\{1, \log\left(\frac{D}{ (k_t-1) \ds}\right), \log\left(\frac{\ds}{ k_t-1}\right)\right\}\right)$,
for fixed orthonormal $P \in \R^{k_{t-1} \times \ds}$, with probability $1- O(\exp{(-\ds)})$, no orthonormal $Q \in \R^{k_t \times k_{t-1}}$ satisfies $\forall e \in [E_t]$, $Q P \Sigma_2^e P^\top Q^\top=C_t$ for some constant $C_t \in \R^{r_t \times r_t}$.
\end{corollary}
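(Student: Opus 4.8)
The plan is to deduce Corollary~\ref{cor:recursive_informal} from Lemma~\ref{lem:only_spu} --- more precisely, from its formal version Lemma~\ref{lem:main} --- applied to a lower-dimensional instance of the problem obtained by ``looking through'' the fixed projection $P$. The key observation is that for any orthonormal $Q \in \R^{k_t \times k_{t-1}}$ the composition $W := QP \in \R^{k_t \times \ds}$ is again orthonormal, and the constraint $Q P \Sigma_2^e P^\top Q^\top = C_t$ is literally $W\Sigma_2^e W^\top = C_t$. The difference from Lemma~\ref{lem:only_spu} is that the rows of $W$ are now forced to lie in the \emph{fixed} $k_{t-1}$-dimensional subspace spanned by the rows of $P$, so the union bound ranges over the Stiefel manifold of orthonormal $k_t \times k_{t-1}$ matrices rather than orthonormal $k_t \times \ds$ matrices; this smaller covering is exactly what turns the $\ds - k_1$ factor of Lemma~\ref{lem:only_spu} into the $k_{t-1} - k_t$ factor claimed here.

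I would make this precise by passing to the reduced covariances $\tilde\Sigma_2^e := P \Sigma_2^e P^\top \in \R^{k_{t-1} \times k_{t-1}}$, which decompose as $\tilde\Sigma_2^e = \bar A_e + \tilde G_e \tilde G_e^\top$ with $\bar A_e := P \overline{\Sigma_2^e} P^\top$ and $\tilde G_e := P G_e$. Three elementary facts certify that $\{\tilde\Sigma_2^e\}$ is an instance of the hypothesis of Lemma~\ref{lem:main} with ambient dimension $k_{t-1}$ in place of $\ds$: (i) since $P P^\top = I_{k_{t-1}}$ and each column of $G_e$ is an isotropic Gaussian in $\R^\ds$, each column of $\tilde G_e$ is $N(0, I_{k_{t-1}})$ and the columns are independent, so $\tilde G_e$ has i.i.d.\ $N(0,1)$ entries; (ii) $\|\bar A_e\|_2 \le \|\overline{\Sigma_2^e}\|_2$, hence $\max_e \|\bar A_e\|_2^2 \le D$; (iii) $\bar A_e$ is a function only of the adversary's choice $\overline{\Sigma_2^e}$ and the fixed $P$, so it is independent of $\{\tilde G_e\}$, and the smoothed-analysis structure survives the projection. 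The one genuine discrepancy with the statement of Lemma~\ref{lem:only_spu} is that $\tilde G_e \in \R^{k_{t-1} \times \ds}$ has more columns ($\ds$) than rows ($k_{t-1}$); here I would check that the proof of Lemma~\ref{lem:main} uses only that the Gaussian factor has i.i.d.\ entries and at least as many columns as rows, so that it applies verbatim, the extra columns serving only to sharpen the per-net-point anti-concentration bound.

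Granting that, applying Lemma~\ref{lem:main} to $\{\tilde\Sigma_2^e\}_{e \in [E_t]}$ under the substitution $(\ds, k_1) \mapsto (k_{t-1}, k_t)$ gives: when $E_t = \Omega\big(\frac{k_{t-1}-k_t}{k_t-1}\max\{1,\log(\frac{D}{(k_t-1)\ds}),\log(\frac{\ds}{k_t-1})\}\big)$, with probability $1 - O(\exp(-\ds))$ there is no orthonormal $Q \in \R^{k_t \times k_{t-1}}$ and matrix $C_t$ with $Q\tilde\Sigma_2^e Q^\top = C_t$ for all $e \in [E_t]$; since $Q\tilde\Sigma_2^e Q^\top = Q P \Sigma_2^e P^\top Q^\top$, this is precisely the claim. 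The failure probability remains $\exp(-\ds)$, rather than $\exp(-k_{t-1})$, because the per-point anti-concentration estimate is controlled by the number $\ds$ of columns of the Gaussian factor, while only the covering number shrinks with the ambient dimension.

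I expect the main obstacle to be exactly the uniformity check flagged above: confirming that the formal Lemma~\ref{lem:main} is stated --- or that its covering/union-bound argument extends --- robustly enough to accommodate both a rectangular Gaussian factor ($\ds$ columns, $k_{t-1}$ rows) and the smaller Stiefel manifold $\{Q \in \R^{k_t\times k_{t-1}} : QQ^\top = I_{k_t}\}$, with every dimension-dependent quantity tracked so that the environment count emerges with $k_{t-1}-k_t$ rather than $\ds - k_t$. The remaining pieces --- the change of variables $W = QP$, the rotation-invariance computation for $\tilde G_e$, and the operator-norm bound on $\bar A_e$ --- are routine. Finally, a small remark worth recording: treating $P$ as fixed is legitimate within Algorithm~\ref{alg} because there $P = U_{t-1}\cdots U_1$ depends only on the environment groups $\cE_1, \dots, \cE_{t-1}$, which are disjoint from $\cE_t$; conditioning on them leaves $\{G_e\}_{e \in \cE_t}$ i.i.d.\ Gaussian and still independent of the adversarial $\{\overline{\Sigma_2^e}\}_{e \in \cE_t}$, so the conditional statement proved here suffices for the recursion in Lemma~\ref{thm:upper_ours}.
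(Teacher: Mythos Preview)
Your proposal is correct and takes essentially the same approach as the paper: both open up the proof of Lemma~\ref{lem:main}, keep the per-point anti-concentration analysis unchanged (since the rows of $QP$ are orthonormal in $\R^{\ds}$, so the random vectors $V_{i,e}=G_e^\top(QP)_i$ are still $N(0,I_{\ds})$ and the $\exp(-\ds)$ tail and the $\ds$-dependent $\epsilon$ carry over verbatim), and replace only the covering of the Stiefel manifold by one over $\{Q\in\R^{k_t\times k_{t-1}}:QQ^\top=I\}$, which is what converts $\ds-k_t$ into $k_{t-1}-k_t$. Your framing via the reduced covariances $\tilde\Sigma_2^e=P\Sigma_2^e P^\top$ with rectangular noise $\tilde G_e=PG_e$ is just a repackaging of this; the paper states it as ``the proof mostly follows that of Lemma~\ref{lem:main}, with a few modifications'' and works directly with $v_i=(QP)_i$.
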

The formal statement Corollary~\ref{cor:recursive} and its proof can be found in Appendix~\ref{sec:upper_proof}. Lemma~\ref{thm:upper_ours} follows from iterative application of Corollary~\ref{cor:recursive_informal}, as shown below.
\begin{proof}[Proof of Lemma~\ref{thm:upper_ours}]
We shall prove that for all $t < T$, if we write $U_t S = [A_t, B_t]$ where $A_t, B_t$ are the left $r$ and right $\ds$ columns of $U_t S$, then $rank((I-P_{B_t})A_t)=r$ and $k_t = rank(B_t) = r_t-r < (r_{t-1}-r+1)/c$ with probability $1- O(t \exp(-\ds))$. Since $T=O(\ds)$,  for $t=T-1$, the probability $1-  O(T\exp(-\ds)) = 1- \exp{(-\Omega(\ds))}$.

Lemma~\ref{lem:preserve} in Appendix~\ref{sec:upper_proof} says that if $rank((I-P_{B_t})A_t)<r$, then we can construct orthonormal $U'_t$ with higher dimension $r'_t>r_t$ that still matches the covariances for environments $\cE_t$. Hence IFM always finds $U_t$ with $rank((I-P_{B_t})A_t)=r$.

To show that the number of spurious dimension decreases, we prove by induction on $t$. For the base case $t=1$, Lemma~\ref{lem:only_spu} says $r_1-r \le k_1  < (\ds-r+1)/c$ with probability $1- O( \exp(-\ds))$. For $t \ge 2$, suppose to the contrary that there is orthonormal $U_t \in \R^{r_t \times r_{t-1}}$ satisfying~\eqref{eq:match} such that $U_t \dots U_1 S = [A_t, B_t]$ where $B_t \in \R^{r_t \times \ds}$, and $rank(B_t) = k_t > (r_{t-1}-r+1)/c$. By induction hypothesis, with probability $1-O((t-1)\exp{(-\ds)})$, we can write $U_{t-1} \dots U_1 S = [A_{t-1}, B_{t-1}]$ where $B_{t-1} \in \R^{r_{t-1} \times \ds}$ has rank $k_{t-1} \ge r_{t-1}-r$. Below we condition on this event.

Writing $B_{t-1}$ in terms of the compact SVD, we get $B_{t-1} = P_{t-1} \Lambda_{t-1} Q_{t-1}$, where $Q_{t-1} \in \R^{k_{t-1} \times \ds}$. Therefore
\begin{align*}
    U_t [A_{t-1}, B_{t-1}] \begin{bmatrix}
    \Sigma_1 & 0\\
    0 & \Sigma_2^e 
    \end{bmatrix} [A_{t-1}, B_{t-1}]^\top U_t^T = C_t \\ \implies U_t B_{t-1} \Sigma_2^e B_{t-1}^\top U_t^\top = C_t'.
\end{align*}
Writing $U_t P_{t-1} \Lambda_{t-1}$ in terms of its compact SVD, we get $U_t P_{t-1} \Lambda_{t-1} = P_t \Lambda_t Q_t$ where $\Lambda_t$ has $k^*$ non-zero singular values and $Q_t \in \R^{k^* \times k_{t-1}}$. Therefore
\begin{align}
    B_t = U_t B_{t-1} = P_t \Lambda_t Q_t Q_{t-1}.
    \label{eq:new_SVD}
\end{align} Note that $Q=Q_t Q_{t-1} \in \R^{k^* \times \ds}$ satisfies $Q Q^\top=I$, since both $Q_t$ and $Q_{t-1}$ satisfies this. Therefore ~\eqref{eq:new_SVD} forms an SVD decomposition of $B_t$, and due to the uniqueness of non-zero singular values up to permutation, we have $k^* = k_t$.

Therefore $Q_t \in \R^{k_t \times k_{t-1}}$ satisfies 
$\forall e \in \cE_t,  Q_t Q_{t-1} \Sigma_2^e Q_{t-1}^\top Q^\top = C_t''$.

Applying Corollary~\ref{cor:recursive_informal} with $P=Q_{t-1}$, $Q=Q_t$, with probability $1-O(\exp{(-\ds)})$, no $Q_t$ satisfies $\forall e \in [E]$, $Q_t Q_{t-1} \Sigma_2^e Q_{t-1}^\top Q_t^\top=C_t$ for some constant  $C_t \in \R^{r_t \times r_t}$.

For the last iteration $t=T$, $E_{T}=3$. We assume without loss of generality $rank(B_{T-1})=k_{T-1} \in \{1, 2\}$, since we can always half the spurious dimensions $r_t-r \le (r_{t-1}-r)/2$ until $r_{t-1}-r=2$.

Lemma~\ref{lem:upper_2to1} and Lemma~\ref{lem:upper_1to0} in Appendix~\ref{sec:upper_proof} deal with the cases when $k_{T-1}=2$ and $k_{T-1}=1$, respectively. Suppose $rank(B_{T-1})=2$, its associated orthonormal matrix $Q_{T-1} \in \R^{2 \times \ds}$. Lemma~\ref{lem:upper_2to1} says that with yields that, with probability 1, no vector on the unit circle $q_T \in \mathbb{S}^1$ satisfies $q_T^\top Q_{T-1} (\Sigma_2^e-\Sigma_2^{e+1}) Q_{T-1}^\top q_T=0$ for $e \in \{1, 2\}$. Suppose $rank(B_{T-1})=1$, its associated unit-norm vector $q_{T-1}  \in \R^{\ds}$. Lemma~\ref{lem:upper_1to0} says that with probability 1, no non-zero scalar $q_T$ satisfies $q_T^2 q_{T-1}^\top (\Sigma_2^1-\Sigma_2^{2}) q_{T-1} =0$. Combining the two cases, with probability $1-O((T-1) \exp(-\ds))$, $rank(B_{T})=k_{T}=0$.
\end{proof}
\section{Experiments}
\label{sec:experiments}

\begin{figure}[ht]
\centering
\begin{minipage}[b]{0.49\linewidth}
\includegraphics[width=\linewidth]{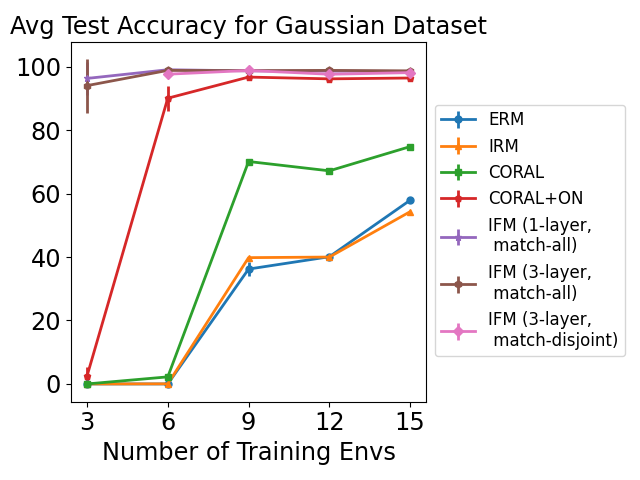}
  \caption{For Gaussian dataset, our algorithm IFM achieves highest test accuracy with the same number of training environments.}
\label{fig:1}
\end{minipage}
\hfill
\begin{minipage}[b]{0.49\linewidth}
\includegraphics[width=\linewidth]{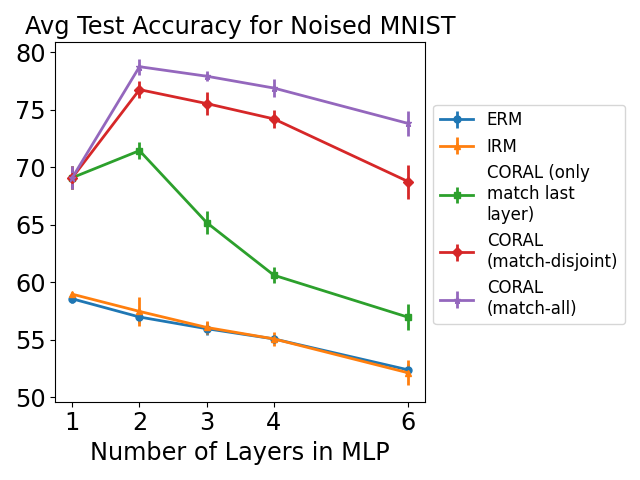}
  \caption{For Noised MNIST, matching feature distributions from multiple layers improves over naive CORAL across different architectures.}
\label{fig:2}
\end{minipage}
\end{figure}

In light of the differences between IFM and CORAL discussed in section~\ref{subsec:alg}, we test several questions inspired by our theory: (Q1) Do feature matching algorithms (IFM and CORAL) have much smaller environment complexity compared to ERM and IRM, with finite samples drawn from data models similar to our assumptions? (Q2) Can decoupling feature matching and supervised training of the classifier (IFM) improve over joint training (CORAL)? (Q3) For neural network featurizers, can matching feature distributions at multiple layers improve over matching at only the last layer (naive CORAL)? (Q4) Can matching disjoint sets of environments at each layer perform as well as matching all environments at all layers? (Q5) Is it important to shrink feature dimensions? We use two tasks to investigate those questions empirically. Appendix~\ref{sec:experiments_app} contains additional details.

\paragraph{Gaussian dataset} is a binary classification task that closely reflects our assumptions in section~\ref{sec:problem-setup}. We take $r=3$, $\ds=32$, $\mu_1 = \mathbf{1}_{r}$, $\Sigma_1 = I_{r}$, $\mu_2^e \sim \cN(0, 10 I_{\ds})$, and $\Sigma_2^e = G_e G_e^\top$. We use $1k$ samples per environment and vary the number of training / test environments from $E=3$ to $E=15$.

\paragraph{Noised MNIST} is a 10-way semi-synthetic classification task modified from~\citet{mnist} to test generalization of our theory to multi-class classification and different neural network architectures. The construction is inspired by the situation where certain background features spuriously correlate with labels (``most cows appear in grass and most camels appear in sand")~\citep{Beery_2018_ECCV, arjovsky2019invariant, aubin2021linear}, but the covariance of the background features changes across environments. Concretely, we divide the 60k images into $E=12$ groups. Each group is further divided into a training and a test environment with ratio 9:1. We add an additional row of noise ($28$ pixels) to the original grayscale digits of dimension $28 \times 28$. In training environments, the added noise is the spurious feature that, conditioned on the label, has identical mean but changing covariances across environments. In test environments, the noise is uncorrelated with the label.

\paragraph{Algorithms and architectures.}
For Gaussian dataset we use linear predictors. \textbf{IRM} follows the implementation in~\citet{arjovsky2019invariant}; \textbf{CORAL} jointly minimizes average supervised loss on training environments and $L_{coral}$, which is the average of squared distances in conditonal feature means (in $l_2$ norm) and covariances (in Frobenius norm) between adjacent training environments; \textbf{CORAL+ON} adds orthonormal penalty loss $L_{on}(U)=\|U U^\top-I\|_F^2$ where $U$ is the featurizer; \textbf{IFM} is our Algorithm~\ref{alg}, where for each layer $U_t$, the training objective is $L_t(U_t) = \lambda_1 L_{coral} + \lambda_2 L_{on}$. We test IFM with 1 vs. 3-layer featurizers, either matching all (\textbf{match-all}) or a disjoint set of training environments  (\textbf{match-disjoint}) at each layer.

For Noised MNIST we use ReLU networks with 1 to 6 layers.
Here the unsupervised feature matching stage of IFM would fail to extract features informative of the label; nonetheless, our theory inspires us to test whether bridging the gap between IFM and CORAL can improve test accuracy. Thus, we compare naive \textbf{CORAL} which only matches feature distributions at the last layer, to variants that match at all layers post-activation, using either all (\textbf{match-all}) or a disjoint subset of training environments (\textbf{match-disjoint}) per layer.

\paragraph{Results.}
(Q1) Figures~\ref{fig:1} and~\ref{fig:2} show that IFM and CORAL have much smaller environment complexity compared to ERM and IRM in both datasets. (Q2) In Gaussian dataset, IFM improves over CORAL. (Q3) In Noised MNIST dataset, matching feature distributions at multiple layers (CORAL match-all, CORAL match-disjoint) improves over matching at only the last layer (CORAL). (Q4) In both datasets, matching disjoint sets of environments at each layer (IFM match-disjoint, CORAL match-disjoint) is almost as good as matching all environments at all layers (IFM match-all, CORAL match-all) while saving computation. (Q5) In Noised MNIST dataset (Table~\ref{tab:shrink} in Appendix~\ref{sec:experiments_app}), shrinking feature dimensions is crucial for the advantage of feature matching at multiple layers, e.g. matching features at 3 layers with widths $[24, 24, 24]$ does not significantly improve over matching features at the last layer (CORAL). Overall, our results suggest that practitioners may benefit from feature matching algorithms when the data is similar to our assumed model, and may get additional advantage via matching at multiple layers with diminishing dimensions, echoing existing empirical works~\citep{pmlr-v37-long15, NIPS2017_a8baa565}.

\section{Conclusion}

This work presents the first domain generalization algorithm which provably recovers an invariant predictor with a number of environments that scales sub-linearly with the spurious feature dimension. Our results demonstrate that generalization which does not suffer from the ``curse of dimensionality'' is possible, and based on our theory we believe the use of an \emph{iterative} approach is a key insight which could lead to additional positive results for out-of-distribution generalization. Notably, this work also represents the first theoretical justification for the empirical success of existing algorithms which use feature distribution matching. However, there remains much room for improvement. Our results are for a linear data model with fairly special assumptions. It would be interesting to understand data models in which the observables are a non-linear function of the latent variables.
\section*{Acknowledgements}
We would like to thank Colin Wei, Sang Michael Xie, Ananya Kumar, and Ciprian Manolescu for helpful discussions. YC is supported by Stanford Graduate Fellowship and NSF IIS 2045685. MS is supported by NSF and Stanford Graduate Fellowships. TM acknowledges support of Google Faculty Award and NSF IIS 2045685.

\bibliographystyle{plainnat}
\bibliography{references}

\newpage
\onecolumn
\appendix
\section{Appendix}

\subsection{Proof of Lemma~\ref{thm:upper_ours}}
\label{sec:upper_proof}
The first lemma says IFM always finds $U_t$ that uses all invariant dimensions.
\begin{lemma}
\label{lem:preserve}
Let $A_0 \in \R^{d \times r}, B_0 \in \R^{d \times \ds}$ be the left $r$ and right $\ds$ columns of $S$. Define projection matrix onto the column span of $B$, $P_{B_0}=B_0 (B_0^\top B_0)^{-1} B_0^\top$. Suppose orthonormal $U_t \in \R^{r_t \times d}$ satisfies that $U_t S = [A_t, B_t]$ where $rank(U_t (I-P_{B_0}) A_0) < r$, and for all $e \in \cE$, $U S \Sigma^e S^\top U = C_t \in \R^{r_t \times r_t}$. Then there exist orthonormal $U'_t \in \R^{r_t' \times d}$ such that $r_t' > r_t$, $rank(U_t' (I-P_{B_0}) A_0) > rank(U_t (I-P_{B_0}) A_0)$, and for all $e \in \cE$, $U_t' S \Sigma^e S^\top {U_t'}^\top = C'_t \in \R^{r_t' \times r_t'}$.
\end{lemma}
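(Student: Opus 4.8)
The plan is to build $U_t'$ by appending a single well-chosen unit row $v^\top$ to $U_t$, i.e. $U_t' = \begin{bmatrix} U_t \\ v^\top \end{bmatrix} \in \R^{(r_t+1)\times d}$, where $v$ is required to satisfy three things at once: (i) $v$ is orthogonal to every row of $U_t$, so that $U_t'$ is again orthonormal; (ii) $v^\top B_0 = 0$, so that the new blocks that $v$ introduces into $U_t' S \Sigma^e S^\top U_t'^\top$ do not depend on $e$; and (iii) $v$ supplies a genuinely new invariant direction, so that $\mathrm{rank}(U_t'(I-P_{B_0})A_0) > \mathrm{rank}(U_t(I-P_{B_0})A_0)$. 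The structural observation that makes (ii) and (iii) compatible is that $\mathrm{col}(W) \subseteq \mathrm{col}(B_0)^\perp$ for $W := (I-P_{B_0})A_0$ — indeed $B_0^\top(I-P_{B_0}) = 0$ — so any direction taken from $\mathrm{col}(W)$ automatically annihilates the spurious block.

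First I would record the non-degeneracy fact $\mathrm{rank}(W) = r$ (which holds whenever $S$ is invertible, so that $\mathrm{col}(A_0)\cap\mathrm{col}(B_0) = \{0\}$; I would state this explicitly as a standing model assumption, since otherwise even the notion of recovering the invariant features degenerates). Writing $V := \mathrm{rowspan}(U_t)$, orthonormality gives $\ker(U_t) = V^\perp$ and $\dim V = r_t$, so rank--nullity applied to the restriction $U_t|_{\mathrm{col}(W)}$ yields $\mathrm{rank}(U_t W) = r - \dim\!\big(\mathrm{col}(W)\cap V^\perp\big)$. The hypothesis $\mathrm{rank}(U_t W) < r$ therefore forces $\dim(\mathrm{col}(W)\cap V^\perp) \ge 1$, so I can pick a unit vector $v \in \mathrm{col}(W)\cap V^\perp$; by construction $v$ satisfies (i) since $v \in V^\perp$ and (ii) since $v \in \mathrm{col}(W) \subseteq \mathrm{col}(B_0)^\perp$.

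It then remains to verify the three conclusions. Orthonormality of $U_t'$ and $r_t' = r_t + 1 > r_t$ are immediate. For the rank increase, set $V' := V + \mathrm{span}(v) = \mathrm{rowspan}(U_t')$, so $(V')^\perp = V^\perp \cap v^\perp$; since $v \in \mathrm{col}(W)\cap V^\perp$ but $v \notin v^\perp$, the subspace $\mathrm{col}(W)\cap V^\perp$ is not contained in the hyperplane $v^\perp$, hence intersecting with $v^\perp$ drops its dimension by exactly one, giving $\mathrm{rank}(U_t' W) = \mathrm{rank}(U_t W) + 1$. For the matching, expand $U_t' S \Sigma^e S^\top U_t'^\top$ as a $2\times 2$ block matrix: the top-left block equals $C_t$ by assumption, and since $S^\top v = [A_0^\top v;\, B_0^\top v] = [A_0^\top v;\, 0]$ we get $\Sigma^e S^\top v = [\Sigma_1 A_0^\top v;\, 0]$ and hence $S\Sigma^e S^\top v = A_0 \Sigma_1 A_0^\top v$, independent of $e$; consequently the off-diagonal block $U_t S \Sigma^e S^\top v$ and the scalar $v^\top S\Sigma^e S^\top v$ are also $e$-independent, so $C_t' := U_t' S \Sigma^e S^\top U_t'^\top$ is a well-defined constant.

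The block-matrix expansion and the rank--nullity bookkeeping are routine; the only step needing genuine care is the choice of $v$, and it is precisely the failure of full rank in the hypothesis that makes $\mathrm{col}(W)\cap V^\perp$ nonzero, while the containment $\mathrm{col}(W)\subseteq \mathrm{col}(B_0)^\perp$ is what makes such a $v$ harmless for the covariance matching. The one subtlety I would flag up front is the non-degeneracy of $S$ (equivalently $\mathrm{rank}((I-P_{B_0})A_0) = r$), without which the claimed rank strictly below $r$ could already be the maximum achievable and no improving $U_t'$ need exist.
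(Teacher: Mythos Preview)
Your proposal is correct and follows essentially the same construction as the paper: both append to $U_t$ a single unit vector lying in $\mathrm{col}\big((I-P_{B_0})A_0\big)\cap \ker(U_t)$, using that this column space is orthogonal to $\mathrm{col}(B_0)$ to kill the $e$-dependence in the new covariance blocks. Your treatment is in fact slightly more complete than the paper's, since you explicitly verify the rank increase via rank--nullity (the paper constructs the same $u^+$ but does not spell out this step) and you flag the standing assumption $\mathrm{rank}\big((I-P_{B_0})A_0\big)=r$, which the paper uses implicitly when asserting $a^+\neq 0$.
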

\begin{proof}
We construct $U_t'$ by adding one additional row $u^+$ to $U_t$. Denote the columns of $(I-P_{B_0}) A_0$ as $a_1, \dots, a_r \in \R^d$. Since $U_t (I-P_{B_0}) A_0$ does not have full column rank, there is one column that can be written as linear combination of others. Assume without loss of generality that $U_t a_0 = \sum_{j=1}^r \alpha_j U_t a_j$, which implies that $U_t (a_0-\sum_{j=1}^r \alpha_j  a_j)=0$. Since $(I-P_{B_0}) A_0$ has full column rank $r$, $a^+ := a_0-\sum_{j=1}^r \alpha_j  a_j \ne 0$. Define $u^+ := a^+ / \|a^+\|_2$. Since $U_t a^+ = 0$, we have that $u_i^\top u^+ = 0$, for all existing rows of $U_t$ ($i \in [r_t]$). Furthermore, since each $a_j$ is orthogonal to the column space of $B_0$, ${u^+}^\top B_0 = 0$. Hence $U'_t = \begin{bmatrix}U_t \\ u^+\end{bmatrix}$ is orthonormal, $r_t' = r_t+1$, and $U'_t B_0 = \begin{bmatrix}U_t B_0 \\ \mathbf{0}_{1 \times \ds}\end{bmatrix}$ so
\begin{align*}
    U'_t S \Sigma^e S^\top {U'_t}^\top = U'_t A_0 \Sigma_1 A_0^\top {U'_t}^\top + U'_t B_0 \Sigma_2^e B_0^\top {U'_t}^\top = U'_t A_0 \Sigma_1 A_0^\top {U'_t}^\top + \begin{bmatrix}
    U_t B_0\Sigma_2^e B_0^\top U_t^\top & \mathbf{0}_{\ds \times 1} \\
    \mathbf{0}_{1 \times \ds} & 0
    \end{bmatrix}
\end{align*} which is constant for all $e \in \cE$.
\end{proof}

We use the following lemma to discretize the space of orthonormal matrices $\cQ =\{Q: Q Q^\top = I_k, Q \in \R^{k \times {\ds}}\}$. For any $Q, Q'\in \cQ$, we define the metric $\rho(Q, Q') = \|Q^\top Q - {Q'}^\top Q'\|_F$. We recall the following lemma about the existence of a cover of $\cQ$ with respect to the metric $\rho$: 
\begin{lemma}[Proposition 8 of~\citet{pajor1998metric}]
\label{lem:covering}
For $1 \le k \le \ds/2$, there exists absolute constant $c_3$ and covering $\Tilde{Q} \subset \cQ$ such that for all $\epsilon>0$, $|\Tilde{Q}| \le (c_3 \sqrt{k}/\epsilon)^{k (\ds-k)}$, and $\forall Q^* \in \cQ$, $\exists Q \in \Tilde{Q}$ such that $\rho(Q, Q^*) \le \epsilon$.
\end{lemma}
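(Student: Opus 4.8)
The plan is to observe that $\rho$ depends only on the row spaces of $Q$ and $Q'$, to identify $(\cQ,\rho)$ with the Grassmannian $G(\ds,k)$ of $k$-dimensional subspaces of $\R^{\ds}$ under the metric $d_F(V,W)=\fronorm{P_V-P_W}$ (with $P_V$ the orthogonal projection onto $V$), and then to run the standard volumetric packing argument on this homogeneous space while tracking the dependence on $k$. For orthonormal $Q$ one has $Q^\top Q=P_{\mathrm{row}(Q)}$, so $\rho(Q,Q')=d_F(\mathrm{row}(Q),\mathrm{row}(Q'))$; hence any $\epsilon$-net of $(G(\ds,k),d_F)$ of size $N$ lifts to an $\epsilon$-net of $(\cQ,\rho)$ of size $N$ by choosing one orthonormal basis per subspace. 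It thus suffices to bound the covering number $\cN(G(\ds,k),d_F,\epsilon)\le(c_3\sqrt k/\epsilon)^{k(\ds-k)}$. Note $\dim G(\ds,k)=k(\ds-k)$, and since $2k\le\ds$ the $d_F$-diameter of $G(\ds,k)$ equals $\sqrt{2k}$ (attained by orthogonal subspaces), so only $\epsilon\le\sqrt{2k}$ is nontrivial.

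Next I would take a maximal $\epsilon$-separated set $\cM\subseteq G(\ds,k)$ — automatically an $\epsilon$-net — and let $\mu$ be the $O(\ds)$-invariant probability measure. By homogeneity $\mu(B(V,\epsilon/2))$ is independent of $V$; call it $v(\epsilon/2)$. Disjointness of the half-radius balls gives $|\cM|\,v(\epsilon/2)\le1$, so the statement reduces to the ball-volume estimate $v(\delta)\ge(c\,\delta/\sqrt k)^{k(\ds-k)}$ for $0<\delta\le\sqrt{2k}$. To prove this I would pass to the affine chart at $V_0=\mathrm{row}[\,I_k\ \ \mathbf 0\,]$: the planes $V_M=\mathrm{row}[\,I_k\ \ M\,]$, $M\in\R^{k\times(\ds-k)}$, form a dense open cell; a principal-angle computation gives $d_F(V_M,V_0)^2=2\sum_i\sin^2\phi_i\le2\fronorm M^2$ with $\tan\phi_i=\sigma_i(M)$, so $\{\fronorm M\le\delta/\sqrt2\}\subseteq B(V_0,\delta)$; and in this chart $\mu$ has density $c_{k,\ds}\det(I_k+MM^\top)^{-\ds/2}$. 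On the relevant ball $\det(I_k+MM^\top)\le(1+\fronorm M^2/k)^k\le e^{\delta^2/2}\le e^{k}$, so the density is at least $c_{k,\ds}e^{-k\ds/2}$ there, whence $v(\delta)\ge c_{k,\ds}\,e^{-k\ds/2}\,\omega_m\,(\delta/\sqrt2)^{m}$ with $m=k(\ds-k)$ and $\omega_m$ the volume of the Euclidean unit ball in $\R^m$ ($\omega_m^{1/m}\asymp m^{-1/2}$).

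Taking $m$-th roots and using $\ds/(2(\ds-k))\le1$ for $k\le\ds/2$, this reads $v(\delta)^{1/m}\gtrsim c_{k,\ds}^{1/m}\,m^{-1/2}\,\delta$, so everything comes down to the normalization bound $c_{k,\ds}^{1/m}\gtrsim\sqrt{m/k}=\sqrt{\ds-k}$, i.e. $\int_{\R^{k\times(\ds-k)}}\det(I_k+MM^\top)^{-\ds/2}\,dM\lesssim(\ds-k)^{-m/2}$. This is the one genuinely computational step and the main obstacle: the integral is a classical matrix-variate Cauchy/Beta integral equal to an explicit ratio of Gamma functions (for $k=1$ it is $\pi^{\ds/2}/\Gamma(\ds/2)$, whose $(\ds-1)$-th root is $\Theta(\ds^{-1/2})$ by Stirling, matching the claim), and one must push the Stirling estimate through for general $k$; the mild uniformity of the chart density over the ball and its decay near the edge of the cell are also verified here. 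Granting this bound, $|\cM|\le1/v(\epsilon/2)\le(c_3\sqrt k/\epsilon)^{k(\ds-k)}$.

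A cleaner packaging — essentially Pajor's — is to prove the analogous bound first in the operator-norm metric $d_{\mathrm{op}}(V,W)=\opnorm{P_V-P_W}$, in which $G(\ds,k)$ has diameter at most $1$ and the same volumetric argument yields $\cN(G(\ds,k),d_{\mathrm{op}},\delta)\le(C/\delta)^{k(\ds-k)}$ with an \emph{absolute} constant $C$, and then convert: since $\mathrm{rank}(P_V-P_W)\le2k$ we have $\fronorm{P_V-P_W}\le\sqrt{2k}\,\opnorm{P_V-P_W}$, so an $(\epsilon/\sqrt{2k})$-net in operator norm is an $\epsilon$-net in $d_F$, giving $\cN(G(\ds,k),d_F,\epsilon)\le(C\sqrt{2k}/\epsilon)^{k(\ds-k)}$ — the claim with $c_3=C\sqrt2$. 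The matrix-volume estimate is not truly avoided this way, only made tidier, since the operator-norm bound is itself proved volumetrically.
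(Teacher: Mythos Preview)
The paper does not supply its own proof of this lemma; it simply quotes it as Proposition~8 of \citet{pajor1998metric}. So there is nothing in the paper to compare your argument against line by line.

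That said, your proposal is sound and your ``cleaner packaging'' at the end is essentially Pajor's argument: bound the covering number of the Grassmannian in the operator-norm (spectral) metric by $(C/\delta)^{k(\ds-k)}$ via the standard volumetric/homogeneous-space argument, and then convert to the Frobenius metric using the rank bound $\mathrm{rank}(P_V-P_W)\le 2k$, which gives $\fronorm{P_V-P_W}\le\sqrt{2k}\,\opnorm{P_V-P_W}$ and hence the extra $\sqrt{k}$ in the numerator. Your identification $\rho(Q,Q')=\fronorm{P_{\mathrm{row}(Q)}-P_{\mathrm{row}(Q')}}$ is correct and is exactly why the lemma is a statement about the Grassmannian rather than the Stiefel manifold.

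Your first route---working directly in the Frobenius metric via the affine chart and the matrix-Cauchy integral---also works, but as you yourself note it is computationally heavier and the normalization constant $c_{k,\ds}$ must be controlled via multivariate Gamma asymptotics. The operator-norm detour sidesteps this by pushing the $\sqrt{k}$ factor into the metric comparison rather than the volume computation, which is why it is the preferred presentation. Either way, the result you need is standard and your sketch is accurate; there is no gap.
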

For any odd integer $e < E$, define $\Delta_2^e = \Sigma_2^e - \Sigma_2^{e+1}  = (\overline{\Sigma_2^e}-\overline{\Sigma_2^{e+1}}) + 
(G_e G_e^\top - G_{e+1} G_{e+1}^\top)$. 

For any $Q \in \cQ$, let $q_i$ be the $i$-th row of $Q$, for $i \in [k]$.
Let $Z_{i j e}=(q_i^\top \Delta_2^e q_j)^2$. Define $A_{i j e} = q_i^\top (\overline{\Sigma_2^e} -\overline{\Sigma_2^{e+1}}) q_j$, and $A = \sum_{\text{odd }e<E, i, j \in [k],i \ne j} A_{i j e}^2$. The main lemma below shows that the sum of $Z_{i j e}$'s are bounded away from 0.

\begin{lemma}
\label{lem:main}
There exists constants $c_1, c_2, b_1, b_2 > 0$ such that for any integer $2 \le k \le \ds/2$, for all  $E$ satisfying
\begin{align*}
    b_1 \frac{\ds-k}{k-1} \max\left\{1, \log\left(\frac{D}{ (k-1) \ds}\right), \log\left(\frac{\ds}{ k-1}\right)\right\} < E <  b_2 \ds,
\end{align*} where $ \max_e{\|\overline{\Sigma_2^e}\|_2^2} \le D$ for some constant $D$, with probability $1-c_1 \exp{(-\ds)}$, for all $Q \in \cQ$, 
\begin{align*}
   \sum_{\text{odd }e<E, i, j \in [k],i \ne j} Z_{i j e} > c_2 (A + E k (k-1) \ds).
\end{align*}
\end{lemma}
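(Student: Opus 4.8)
The plan is to prove a lower bound of the form $\sum Z_{ije} \gtrsim A + Ek(k-1)\ds$ that holds uniformly over $\cQ$, by first establishing it for a fixed $Q$ with an exponentially small failure probability, and then union-bounding over the covering of $\cQ$ furnished by Lemma~\ref{lem:covering}, after controlling how much $\sum Z_{ije}$ can change when $Q$ moves by $\epsilon$ in the metric $\rho$. I would proceed as follows. First, for fixed $Q$ and fixed $(i,j,e)$ with $i\neq j$, write out $q_i^\top\Delta_2^e q_j = A_{ije} + \big(q_i^\top G_e G_e^\top q_j - q_i^\top G_{e+1}G_{e+1}^\top q_j\big)$. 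Conditioning on the Gaussian randomness, the term $q_i^\top G_e G_e^\top q_j$ has, for orthonormal rows, mean $0$ (since $q_i \perp q_j$) and variance $\|q_i\|^2\|q_j\|^2 = 1$; similarly for $e+1$. So $q_i^\top\Delta_2^e q_j$ is a shifted quadratic form in Gaussians with mean $A_{ije}$ and $\Theta(1)$ fluctuation, hence $Z_{ije} = (q_i^\top\Delta_2^e q_j)^2$ has expectation $\Theta(A_{ije}^2 + 1)$ and is sub-exponential. Summing over all $e$ odd $<E$ and all ordered pairs $i\neq j$ (which are pairwise ``mostly independent'' — the $G_e$ across distinct environments are independent, and for fixed $e$ the collection $\{q_i^\top G_e G_e^\top q_j\}_{i\neq j}$ is a quadratic form in one Gaussian matrix) gives a random variable with mean $\Theta(A + Ek(k-1))$ and, crucially, $\Theta(Ek(k-1)\ds)$ worth of ``room'': each of the $\Theta(Ek(k-1))$ summands that survives concentration contributes an $\Omega(\ds)$ term once we account for the full ambient dimension in the associated quadratic forms (this is where the $\ds$ factor enters — it is really $\Omega(\ds)$ per environment, not per pair, coming from $\mathrm{tr}$ of the relevant Gram matrices). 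I would then apply a Hanson–Wright / Bernstein-type lower tail bound for sums of independent sub-exponential variables to get: for fixed $Q$, with probability $1 - \exp(-\Omega(Ek(k-1)\ds))$ — which we want to be $\le \exp(-\Omega(k(\ds-k)\log(\sqrt{k}/\epsilon)))$ after union bound — the sum $\sum Z_{ije}$ is at least a constant fraction of its mean.

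Second, I would handle the union bound. Lemma~\ref{lem:covering} gives $|\tilde Q| \le (c_3\sqrt{k}/\epsilon)^{k(\ds-k)}$, so I need the per-$Q$ failure probability $\exp(-c\,Ek(k-1)\ds)$ to beat $(c_3\sqrt k/\epsilon)^{k(\ds-k)} = \exp(k(\ds-k)\log(c_3\sqrt k/\epsilon))$, which forces $E = \Omega\!\big(\tfrac{\ds-k}{(k-1)\ds}\log(\sqrt k/\epsilon)\big)$ — and with the eventual choice $\epsilon = \poly(k-1,\ds)/D$ or so, this is exactly the stated bound $E = \Omega\!\big(\tfrac{\ds-k}{k-1}\max\{1,\log(\tfrac{D}{(k-1)\ds}),\log(\tfrac{\ds}{k-1})\}\big)$. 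The upper bound $E < b_2\ds$ is needed so that the total failure probability $|\tilde Q|\cdot\exp(-\Omega(Ek(k-1)\ds))$ — which I have to keep in the regime where it is dominated by a clean $c_1\exp(-\ds)$ — stays controlled; I'd check that when $E\asymp \ds$ the exponent $Ek(k-1)\ds$ is comfortably larger than both $\ds$ and $k(\ds-k)\log(\sqrt k/\epsilon)$.

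Third, the discretization error. For $Q^*\in\cQ$ pick $Q\in\tilde Q$ with $\rho(Q,Q^*)\le\epsilon$, i.e. $\|Q^\top Q - {Q^*}^\top Q^*\|_F\le\epsilon$. I need to bound $|\sum Z_{ije}(Q^*) - \sum Z_{ije}(Q)|$ in terms of $\epsilon$ and the operator norms of $\Delta_2^e$. Writing $\sum_{i\neq j}(q_i^\top M q_j)^2 = \|QMQ^\top\|_F^2 - \sum_i (q_i^\top M q_i)^2$ for $M=\Delta_2^e$, and noting $QMQ^\top$ depends on $Q$ only through $Q^\top Q$ in a Lipschitz way (with constant $\|M\|_{\mathrm{op}}^2$ roughly), the change is $O(\epsilon \cdot \sum_e \|\Delta_2^e\|_{\mathrm{op}}^2 \cdot k)$. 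Here is where the norm bound $D$ is used: on the high-probability event (using standard bounds on $\|G_eG_e^\top\|_{\mathrm{op}} = O(\ds)$ and $\|\overline{\Sigma_2^e}\|_{\mathrm{op}}\le\sqrt D$), each $\|\Delta_2^e\|_{\mathrm{op}}^2 = O(D + \ds^2)$, so choosing $\epsilon$ inversely polynomial in $D,\ds$ makes the discretization error a lower-order term compared to $c_2(A + Ek(k-1)\ds)$. Putting the three pieces together — fixed-$Q$ lower bound, union bound over $\tilde Q$ with the stated $E$, and absorbing the discretization slack — yields the uniform bound over all $Q\in\cQ$ with probability $1-c_1\exp(-\ds)$.

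The main obstacle I anticipate is the concentration step with the correct dependence on all three parameters $k$, $\ds$, and $D$ simultaneously: I have to track that the ``variance room'' really scales like $Ek(k-1)\ds$ (not merely $Ek(k-1)$), which requires being careful that the $\ds$ factor comes from the ambient trace terms in the quadratic forms and is not an artifact, and then matching the resulting sub-exponential lower-tail exponent against the $k(\ds-k)\log(\sqrt k/\epsilon)$ cost of the covering — the inequality is tight enough that the $\tfrac{\ds-k}{k-1}$ prefactor and the three logarithmic terms in the hypothesis on $E$ must come out exactly, leaving no slack. The secondary subtlety is that the $Z_{ije}$ are not fully independent (for fixed $e$ they share the Gaussian matrix $G_e$), so a clean Bernstein bound over independent summands does not directly apply; I'd either group by environment and apply Hanson–Wright per environment then sum the independent per-environment contributions, or verify a suitable variance/hypercontractivity estimate for the block structure.
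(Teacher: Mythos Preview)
Your high-level three-step structure (fixed-$Q$ concentration, union bound over the cover from Lemma~\ref{lem:covering}, discretization via $\|\Delta_2^e\|_2$) matches the paper's proof. But the quantitative tracking of the $\ds$ factor is off in two places, and this creates a genuine gap.

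First, the variance of $q_i^\top G_e G_e^\top q_j$ is $\ds$, not $\|q_i\|^2\|q_j\|^2=1$. Writing $V_{i,e}=G_e q_i\sim\cN(0,I_{\ds})$ for orthonormal rows, one has $q_i^\top G_e G_e^\top q_j = V_{i,e}^\top V_{j,e}$, a sum of $\ds$ independent products of standard normals, so $\E[(V_{i,e}^\top V_{j,e})^2]=\ds$. Hence $\E[Z_{ije}]=A_{ije}^2+2\ds$, and the mean of the full sum is already $A+\Theta(Ek(k-1)\ds)$ --- this is where the $\ds$ in the target lower bound comes from, not from some separate ``trace room'' on top of a mean $\Theta(A+Ek(k-1))$.

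Second, and more seriously, the concentration exponent for the lower tail at a constant fraction of the mean is only $\Theta(Ek(k-1))$, not $\Theta(Ek(k-1)\ds)$. Each $R_{ije}:=q_i^\top\Delta_2^e q_j - A_{ije}$ is sub-exponential at scale $\sqrt{\ds}$, so $R_{ije}^2$ has mean $\Theta(\ds)$ and variance $\Theta(\ds^2)$; a Bernstein/Hanson--Wright bound for a deviation of order the mean $t\asymp Ek(k-1)\ds$ gives exponent $\min\{t^2/\mathrm{Var},\,t/\mathrm{scale}\}\asymp Ek(k-1)$. This is exactly what the paper obtains (their final bound is $\exp(-c_8 Ek(k-1))$), and it is what forces $E\gtrsim \tfrac{\ds-k}{k-1}\log(\sqrt{k}/\epsilon)$ after the union bound over $|\tilde Q|\le (c_3\sqrt{k}/\epsilon)^{k(\ds-k)}$. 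Your claimed exponent $Ek(k-1)\ds$ would instead yield $E\gtrsim \tfrac{\ds-k}{(k-1)\ds}\log(\sqrt{k}/\epsilon)$, which is a factor $\ds$ too weak and does \emph{not} match the lemma's hypothesis --- your assertion that it does is simply a mismatch of a factor $\ds$.

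Finally, on the dependence issue you flag: ``Hanson--Wright per environment'' does not directly apply because $\sum_{i\neq j}Z_{ije}$ is a degree-4 polynomial in $G_e,G_{e+1}$. The paper resolves this with a decoupling inequality (de la Pe\~na's Theorem~1), replacing $W_{i,e}^\top I^* W_{j,e}$ by $X_{i,e}^\top Y_{j,e}$ with independent copies, then conditions on the $Y$'s (controlling $\sum_e\|Y_e\|_F^2$ and $\|Y_e\|_2$ on a high-probability event), after which the remaining sum is quadratic in the independent $X_{i,e}$'s and Hanson--Wright applies cleanly. You should incorporate this decoupling-then-condition step rather than leaving it as an unresolved alternative.
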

\begin{proof}
For any odd $e<E$ and $i \in [k]$, by definition
\begin{align*}
    \sum_{j \ne i} Z_{i j e} = \sum_{j \ne i} (A_{i j e} + q_i^\top G_e G_e^\top q_j - q_i^\top G_{e+1} G_{e+1}^\top q_j)^2
\end{align*}
Define $V_{i,e}=G_e q_i$ for $e \in [E], i \in [k]$. For fixed orthonormal $Q$, $V_{i,e} \sim \cN(0, I_{\ds})$ and the ensemble $\{V_{i,e}\}_{i \in [k], e \in [E]}$'s is independent. Therefore
\begin{align*}
    q_i^\top G_e G_e^\top q_j - q_i^\top G_{e+1} G_{e+1}^\top q_j =  V_{i,e}^\top V_{j,e} -V_{i,e+1}^\top V_{j,e+1}
\end{align*}
For further simplification, we define $W_{i,e} = [V_{i,e}; V_{i, e+1}] \in \R^{2\ds}$, and $I^* = [I_{\ds}, \mathbf{0}; \mathbf{0}, -I_{\ds}]$, so
\begin{align*}
    V_{i,e}^\top V_{j,e} -V_{i,e+1}^\top V_{j,e+1} = W_{i,e}^\top I^* W_{j,e}
\end{align*}
We use the following lemma to decouple the correlations between $W_{i,e}^\top I^* W_{j,e}$ and $W_{i,e}^\top I^* W_{j',e}$ for $j' \ne j, j' \ne i, i \ne j$:
\begin{lemma}[Theorem 1 of~\citet{de1995decoupling}]
\label{lem:PMS95}
Suppose $\{X_i\}$ ($i \in [k]$) are independent random variables, $X_i$ and $Y_i$ have the same distribution. There exists some absolute constant $c_4$ such that
\begin{align*}
    \Pr\left[\left\lvert\sum_{i,j \in [k], i \ne j} f(X_i,X_j)\right\rvert \ge t\right] \le c_4 \Pr\left[\left\lvert\sum_{i,j \in [k], i \ne j} f(X_i,Y_j)\right\rvert \ge t/c_4\right].
\end{align*}
\end{lemma}
We apply Lemma~\ref{lem:PMS95} with $X_i = W_{i,e}$ and $f(X_i,X_j) = Z_{i j e} - \E[Z_{i j e}]$ to get
\begin{align*}
    \Pr\left[\left\lvert\sum_{i,j \in [k], i \ne j} Z_{i j e} - \E[Z_{i j e}]\right\rvert \ge t\right] \le c_4 \Pr\left[\left\lvert\sum_{i,j \in [k], i \ne j} Z'_{i j e} - \E[Z'_{i j e}]\right\rvert \ge t/c_4\right].
\end{align*} where $Y_{i, e}$ and $X_{i, e}$ are identically distributed and
\begin{align*}
    Z'_{i j e} = A_{i j e}^2+ 2 A_{i j e} X_{i,e}^\top I^* Y_{j,e} + (X_{i,e}^\top I^* Y_{j,e})^2.
\end{align*} Note that $\{Z'_{i j e}\}$ and $\{Z''_{i j e}\}$ are identically distributed, where
\begin{align*}
    Z_{i j e}''=A_{i j e}^2+ 2 A_{i j e} X_{i,e}^\top Y_{j,e} + (X_{i,e}^\top Y_{j,e})^2.
\end{align*}

Below we first consider the randomness in $\{Y_{i,e}\}$, and prove that with high probability $\{Y_{i,e}\}$ satisfies some good properties; we then show the concentration of $\sum_{i, j, e}Z_{i j e}''$ conditioned on the event that  $\{Y_{i,e}\}$ satisfies these properties.

First, for fixed $Q$, since $Y_{i,e}=[G_e v_i; G_{e+1} v_i] \sim \cN(0, I_{2\ds})$, if we write
$Y_e = [Y_{1, e}; \dots;Y_{k, e}] \in \R^{k \times 2 \ds}$, it is a random matrix with iid standard normal entries. We show that the $\|Y_e\|_F^2 = \Theta(k \ds)$ with high probability.
The following lemma is a standard concentration bound for chi-squared variable:
\begin{lemma}[Corollary of Lemma 1 in~\citet{laurent2000}]
\label{lem:chi_square}
Suppose $Z_i \sim \cN(0,1)$ for $i \in [n]$. For any $t>0$,
\begin{align*}
    \Pr\left[ \sum_{i=1}^n Z_i^2 \ge n+2\sqrt{n t}+2t\right] &\le \exp(-t), \\
    \Pr\left[\sum_{i=1}^n Z_i^2 \le n-2\sqrt{n t}\right] &\le \exp(-t).
\end{align*}
\end{lemma}
Applying Lemma~\ref{lem:chi_square} to $n= E k  \ds$ entries of $\{Y_e\}_{e=1}^E$ and setting $t= E k \ds/16$ we get with probability $1-2 \exp(-E k \ds/16)$,
\begin{align}
    \frac{E k \ds}{2} \le \sum_e \|Y_e\|_F^2 \le \frac{13 E k \ds}{8}.
    \label{eq:y_e_fro_norm}
\end{align}

Second, we show that with high probability over the randomness of $G_e$, $\|Y_e\|_2$ viewed as a function of $Q$ satisfies $\|Y_e\|_2 =O(\sqrt{\ds})$ for all orthonormal $Q$. We use the following lemma to upper bound $\|G_e\|_2$:
\begin{lemma}[Corollary 5.35 of~\citet{Vershynin12}]
\label{lem:vershynin}
Suppose $G \in \R^{D \times d}$ and $[G]_{i j} \sim \cN(0, 1)$ for all $i \in [D],j \in [d]$. For every $t \ge 0$, with probability $1-2\exp(-t^2/2)$, 
\begin{align*}
    \|G\|_2 \le \sqrt{D}+\sqrt{d}+t
\end{align*}
\end{lemma}
Applying Lemma~\ref{lem:vershynin} with $G = [G_e; G_{e+1}]$, $D=2 \ds$, $d=\ds$, $t=\sqrt{\ds}$, we get with probability $1-2 \exp(-\ds/2)$, $ \|G\|_2 \le (2+\sqrt{2}) \sqrt{\ds} $, and therefore for all orthonormal $Q \in \R^{k \times \ds}$,
\begin{align}
    \|Y_e\|_2  = \|Q G^\top\|_2 \le \|Q\|_2 \|G\|_2\le (2+\sqrt{2}) \sqrt{\ds}.
    \label{eq:y_e_2_norm}
\end{align}

For any odd $e<E$, $i \in [k]$, and fixed $Y_e$, we prove $P_{e i}=\sum_{j \ne i}Z''_{i j e}$ concentrates. Once we fix $Y_e$, the $Er/2$ random variables $\{P_{e i}\}$ are independent, so the concentration of their sum is immediate. Let $Y_{-i, e}$ be $Y_e$ without the $i$-th row,
\begin{align}
    P_{e i}=\sum_{j \ne i}Z''_{i j e} = \sum_{j \ne i} A_{i j e}^2 + 2 X_{i,e}^\top \left( \sum_{j \ne i} A_{i j e} Y_{j,e}\right) + X_{i,e} ^\top Y_{-i,e} Y_{-i,e}^\top X_{i,e}
    \label{eq:pei}
\end{align}
Define $B_{i,e} = Y_{-i,e} Y_{-i,e}^\top$. Let $a_{i,e} \in \R^{k-1}$ be the column vector consisting of $A_{i j e}$ for $j \ne i$.

Since $X_{i,e} \sim \cN(0, I_{2\ds})$, $X_{i,e}^\top \left( \sum_{j \ne i} A_{i j e} Y_{j,e}\right)$ is a Gaussian variable with mean 0 and variance $a_{i,e}^\top B_{i,e} a_{i,e} \le \|a_{i,e}\|_2^2 \|B_{i,e}\|_2$, so by Hoeffding's inequality, for all $t \ge 0$,
\begin{align}
    \Pr\left[2X_{i,e}^\top \left( \sum_{j \ne i} A_{i j e} Y_{j,e}\right) > t \mid Y_e\right] \le \exp{\left(-\frac{ t^2}{8\|a_{i,e}\|^2 \|B_{i,e}\|_2}\right)}.
    \label{eq:middle}
\end{align}
By Hanson-Wright Inequality (e.g. Theorem 1.1 of~\citet{10.1214/ECP.v18-2865}), there exists constant $c_5$ such that
\begin{align}
    \Pr\left[\E[X_{i,e}^\top B_{i,e} X_{i,e}]-X_{i,e}^\top B_{i,e} X_{i,e} >t \mid Y_e\right] \le  \exp\left(-c_5 \min\left\{\frac{t^2}{\|B_{i,e}\|_F^2}, \frac{t}{\|B_{i,e}\|_2}\right\}\right).
    \label{eq:hanson}
\end{align}

Combining equations~\eqref{eq:pei}, \eqref{eq:middle}, \eqref{eq:hanson}, we get
\begin{align*}
    \Pr\left[\E[P_{e i}] -P_{e i} > t \mid Y_e\right] \le \exp{\left(-\frac{ t^2}{32\|a_{i,e}\|_2^2 \|B_{i,e}\|_2}\right)}+ \exp\left(-c_5 \min\{\frac{t^2}{4\|B_{i,e}\|_F^2}, \frac{t}{2\|B_{i,e}\|_2}\}\right).
\end{align*}
Summing over all $e \in [E]$ and $i \in [k]$ we get
\begin{align*}
    \Pr\left[\E\left[\sum_{e, i}P_{e i}\right] -\sum_{e, i}P_{e i}  > t \mid Y_1, \dots, Y_E\right] \le \exp{\left(-\frac{ t^2}{32\sum_{e, i}\|a_{i,e}\|_2^2 \|B_{i,e}\|_2}\right)} \\+ \exp\left(-c_5 \min\left\{\frac{t^2}{4\sum_{e, i} \|B_{i,e}\|_F^2}, \frac{t}{2\max_{e, i}\|B_{i,e}\|_2}\right\}\right).
\end{align*}
Note that $\E[X_{i,e}^\top B X_{i,e}]=\E\sum_{j \ne i} (X_{i,e}^\top Y_{j,e})^2 = \|Y_{-i,e}\|_F^2$ so
\begin{align*}
    E\left[\sum_{e, i} P_{ei} \right] = \sum_{e, i}\|a_{i,e}\|_2^2+ \sum_{e, i} \|Y_{-i,e}\|_F^2 = A + (k-1) \sum_{e} \|Y_e\|_F^2.
\end{align*}
Since $\|B_{i,e}\|_2 \le \|Y_e\|_2^2$, $\|B_{i,e}\|_F^2 \le \|Y_{-i,e}\|_F^2 \|Y_e\|_2^2$, taking $t = \frac{1}{2} E[\sum_{e, i} P_{ei} ]$,
\begin{align*}
    \Pr\left[\sum_{e, i}P_{e i} < \frac{1}{2}\left(A + (k-1) \sum_{e} \|Y_e\|_F^2\right) \mid Y_1, \dots, Y_E\right] \le \exp{\left(-\frac{ (A + (k-1) \sum_{e} \|Y_e\|_F^2)^2}{128\sum_{e, i}\|a_{i,e}\|_2^2 \|Y_{e}\|_2^2}\right)} \\+ \exp\left(-c_5 \min\left\{\frac{(k-1)^2 (\sum_{e} \|Y_e\|_F^2)^2}{16 (k-1) \sum_{e} \|Y_{e}\|_F^2 \|Y_e\|_2^2}, \frac{(k-1) \sum_{e} \|Y_e\|_F^2}{2\max_{e}\|Y_{e}\|_2^2}\right\}\right).
\end{align*}
Let $\cE_1$ denote the event that for all odd $e < E$, $[G_e; G_{e+1}] \in \R^{2\ds \times \ds}$ denote the matrix with $G_e, G_{e+1} \in \R^{\ds \times \ds}$ in its first and last $\ds$ rows, respectively, we have
\begin{align*}
    \|[G_e; G_{e+1}]\|_2 \le (2+\sqrt{2})\sqrt{\ds}.
\end{align*} Due to equation~\eqref{eq:y_e_2_norm} and the union bound, $\Pr[\cE_1] \ge 1- E\exp(-\ds/2)$.
Conditioned on $\cE_1$, for all $Q \in \cQ$ and odd $e<E$,
\begin{align*}
    \|Y_e\|_2  \le (2+\sqrt{2}) \sqrt{\ds}.
\end{align*}

Let $\cE_2$ denote the event that for all cover elements $Q \in \Tilde{Q}$,
\begin{align*}
    \frac{E k \ds}{2} \le \sum_e \|Y_e\|_F^2 \le \frac{13 E k \ds}{8}.
\end{align*}
Due to equation~\eqref{eq:y_e_fro_norm} and the union bound, $\Pr\left[ \cE_2 \right] \ge 1-2 |\Tilde{Q}|  \exp{\left(-E k \ds/16\right)}$.

Conditioned on $\cE_1$ and $\cE_2$, for fixed $Q \in \Tilde{Q}$, there exists constants $c_6, c_7$ such that 
\begin{align*}
    \Pr\left[\sum_{e, i}P_{e i} < \frac{1}{2}A + \frac{1}{4}E k(k-1) \ds \right] \le \exp{\left(-c_6 \frac{ (A + E k(k-1) \ds)^2}{A \ds}\right)} \\+ \exp\left(-c_7 \min\left\{\frac{(k-1)^2 E^2 k^2 \ds^2}{E k(k-1) \ds^2}, \frac{E k (k-1) \ds}{\ds}\right\}\right),
\end{align*} which implies there exists constants $c_8$ such that
\begin{align*}
    \Pr\left[\sum_{e, i}P_{e i} < \frac{1}{2}A + \frac{1}{4}E k(k-1) \ds \right] \le \exp\left(-c_8 \min\left\{\frac{ (A + E k(k-1) \ds)^2}{A \ds}, E k(k-1)\right\}\right).
\end{align*}

Note that we always have $\frac{ (A + E k(k-1) \ds)^2}{A \ds} \ge E k (k-1)$. To see this, for $A> E k (k-1) \ds$, $\frac{(A + E k(k-1) \ds)^2}{A \ds} > \frac{A}{\ds} > E k (k-1)$. For $A \le E k (k-1) \ds$,  $\frac{(A + E k(k-1) \ds)^2}{A \ds} \ge \frac{(E k (k-1) \ds)^2}{E k (k-1) \ds^2} = E k (k-1)$.

In other words, with probability $1-\delta$, where
\begin{align*}
    \delta = E\exp(-\ds/2)+2 |\Tilde{Q}|  \exp{\left(-E k \ds/16\right)} +|\Tilde{Q}| \exp{\left(-c_8 E k (k-1) \right)},
\end{align*} all $Q \in \Tilde{Q}$ satisfies $\sum_{e, i}P_{e i} \ge \frac{1}{4}(A + E k(k-1) \ds)$.
 Combined with Lemma~\ref{lem:PMS95}, with probability $1- c_9 \delta$, all  $Q \in \Tilde{Q}$ satisfies
$\sum_{e, i, j}Z_{i j e} < c_{10} (A + E k(k-1) \ds)$ for some constants $c_9, c_{10}$.

For any $Q^* \in \cQ$, let $Q$ be the element in the cover closest to it, so that $\rho(Q, Q^*) = \|Q^\top Q - {Q^*}^\top {Q^*}\|_F \le \epsilon$. Let $q_i^*$ be the $i$-th row of $Q^*$, and $Z^*_{i j e} = ({q_i^*}^\top \Delta_2^e q_j^*)$. Then
\begin{align*}
    \sum_{e i j} Z^*_{i j e} &= \sum_{e} \|Q^* \Delta_2^e {Q^*}^\top\|_F^2 \\
    &= \sum_{e} \|\Delta_2^e {Q^*}^\top Q^*\|_F^2 \\
    &\ge \frac{1}{2}\sum_e \|\Delta_2^e {Q}^\top Q\|_F^2 - \|\Delta_2^e \left(Q^\top Q - {Q^*}^\top {Q^*}\right) \|_F^2 \\
    &\ge \frac{1}{2}\sum_{e i j} Z_{i j e} - \|\Delta_2^e\|_2^2 \rho(Q, Q^*)^2.
\end{align*}
Since $\|\Delta_2^e\|_2^2 \le 2\|\overline{\Sigma_2^e}\|_2^2 + 2\|G_e G_e^\top\|_2^2$, and conditioned on $\cE_1$, $\|G_e G_e^\top\|_2^2 \le c_{11} \ds^2$ for all $e$, if $ \max_e{\|\overline{\Sigma_2^e}\|_2^2} \le D$ for some constant $D$, we have with probability $1-\delta$,
\begin{align}
     \sum_{e i j} Z^*_{i j e} \ge \frac{c_{10}}{2} (A+E k (k-1) \ds) - 2 E(D +c_{11} \ds^2) \epsilon^2.
     \label{eq:any_q}
\end{align}
We choose $\epsilon^2 < \frac{c_{10} k (k-1) \ds}{8 (D+ c_{11} \ds^2)}$  so that $2E(D+c_{11} \ds^2) \epsilon^2 < \frac{c_{10}}{4} E k (k-1) \ds$.

With this choice of $\epsilon$, by equation~\eqref{eq:any_q} we have
\begin{align*}
     \sum_{e i j} Z^*_{i j e} \ge \frac{c_{10}}{4} (A+E k (k-1) \ds).
\end{align*}
By Lemma~\ref{lem:covering}, $\log(|\Tilde{Q}|) \le k(\ds-k) \log(c_3\sqrt{k}/\epsilon) \le  c_{12} k (\ds-k) \log{\left(\frac{D}{(k-1) \ds}+\frac{\ds}{k-1}\right)}$.

Therefore there exists $b_1, b_2>0$ such that for $E$ satisfying
\begin{align*}
    b_2 \ds > E > b_1 \frac{\ds-k}{k-1} \max\left\{1, \log\left(\frac{D}{ (k-1) \ds}\right), \log\left(\frac{\ds}{ k-1}\right)\right\},
\end{align*} we have
\begin{align*}
    \delta &\le \exp(-\ds/2+\log{(b_2 \ds)}) + 2  \exp{\left(c_{12} k (\ds-k) \log{\left(\frac{D}{(k-1) \ds}+\frac{\ds}{k-1}\right)}-E k \ds/16\right)} \\ &+  \exp\left(c_{12} k (\ds-k) \log{\left(\frac{D}{(k-1) \ds}+\frac{\ds}{k-1}\right)} -c_8 E k (k-1) \right) \\
    &\le c_1 \exp{(-\ds)}
\end{align*} for some constant $c_1$.
Therefore with probability $1-c_1 \exp{(-\ds)}$, for all $Q^* \in \cQ$, and $c_2 = c_{10}/4$,
\begin{align*}
     \sum_{e i j} Z^*_{i j e} \ge c_2 (A+  E k (k-1) \ds).
\end{align*}
\end{proof}

\begin{corollary}[Corollary of Lemma~\ref{lem:main}]
\label{cor:recursive}
Suppose $2 \le k \le r/2 \le \ds/2$. Let $\cP = \{P\in \R^{r \times \ds}: P P^\top =I_r\}$, $\cQ = \{Q \in \R^{k \times r}: Q Q^\top =I_k\}$. For fixed $P \in \cP$, there exists constants $c_1, c_2, b_1, b_2 > 0$ such that for all  $E$ satisfying
\begin{align*}
    b_1 \frac{r-k}{k-1} \max\left\{1, \log\left(\frac{D}{ (k-1) \ds}\right), \log\left(\frac{\ds}{ k-1}\right)\right\} < E <  b_2 \ds,
\end{align*} where $ \max_e{\|\overline{\Sigma_2^e}\|_2^2} \le D$ for some constant $D$, with probability $1-c_1 \exp{(-\ds)}$, for all $Q \in \cQ$, 
\begin{align*}
   \sum_{\text{odd }e<E} \|Q P \Delta_2^e P^\top Q^\top\|_F^2 > c_2  E k (k-1) \ds.
\end{align*}
\end{corollary}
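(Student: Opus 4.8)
The plan is to transcribe the proof of Lemma~\ref{lem:main}, changing only the discretization step. The crucial observation is an algebraic identity: for every $Q\in\cQ$ and the fixed $P\in\cP$, the matrix $QP\in\R^{k\times\ds}$ is again orthonormal, since $(QP)(QP)^\top=Q(PP^\top)Q^\top=QQ^\top=I_k$; moreover $\|P\|_2=1$ because $P$ has orthonormal rows. Writing $q_1,\dots,q_k$ for the rows of $Q$, the rows $q_i^\top P$ of $QP$ are therefore orthonormal in $\R^\ds$, and
\begin{align*}
\sum_{\text{odd }e<E}\|QP\,\Delta_2^e\,P^\top Q^\top\|_F^2
&=\sum_{\text{odd }e<E}\sum_{i,j\in[k]}\big(q_i^\top P\Delta_2^e P^\top q_j\big)^2\\
&\ge\sum_{\text{odd }e<E}\sum_{i\ne j}\big(q_i^\top P\Delta_2^e P^\top q_j\big)^2,
\end{align*}
so it suffices to lower-bound the last sum by $c_2 Ek(k-1)\ds$ for every $Q\in\cQ$; the extra nonnegative $A$-term that would appear in Lemma~\ref{lem:main}'s conclusion is simply dropped.

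First I would note that the per-matrix concentration estimate inside the proof of Lemma~\ref{lem:main} applies verbatim to $QP$ for each fixed $Q$: that argument uses only that the rows of the orthonormal matrix are orthonormal --- so that $G_e(P^\top q_i)\sim\cN(0,I_\ds)$ and the ensemble $\{G_e(P^\top q_i),G_{e+1}(P^\top q_i)\}_{i\in[k],\ \text{odd }e<E}$ is independent --- together with the event $\cE_1$ bounding $\|[G_e;G_{e+1}]\|_2\le(2+\sqrt2)\sqrt\ds$, which in turn bounds $\|QP[G_e;G_{e+1}]^\top\|_2\le\|[G_e;G_{e+1}]\|_2$ uniformly over all orthonormal matrices. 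Hence, conditioning on $\cE_1$ and on the Frobenius-norm event $\cE_2$ (now applied to the cover of $\cQ$ introduced below), for each fixed $Q$ the bound $\sum_{e,i\ne j}(q_i^\top P\Delta_2^e P^\top q_j)^2\ge\tfrac14(A+Ek(k-1)\ds)$ fails with probability at most $\exp(-\Omega(Ek(k-1)))$, exactly as before.

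The one genuine change is the covering. Instead of discretizing $\{Q':Q'Q'^\top=I_k,\ Q'\in\R^{k\times\ds}\}$, which by Lemma~\ref{lem:covering} costs $(c_3\sqrt k/\epsilon)^{k(\ds-k)}$, I would apply Lemma~\ref{lem:covering} with ambient dimension $r$ (valid since $2\le k\le r/2$) to obtain an $\epsilon$-cover $\Tilde{Q}$ of $\cQ$, in the metric $\rho_r(Q,Q')=\|Q^\top Q-Q'^\top Q'\|_F$, of size $(c_3\sqrt k/\epsilon)^{k(r-k)}$. The map $Q\mapsto QP$ contracts the relevant metrics: $\|(QP)^\top(QP)-(Q'P)^\top(Q'P)\|_F=\|P^\top(Q^\top Q-Q'^\top Q')P\|_F\le\|P\|_2^2\,\rho_r(Q,Q')=\rho_r(Q,Q')$, so $\{QP:Q\in\Tilde{Q}\}$ is an $\epsilon$-cover of $\{QP:Q\in\cQ\}$ in the $\ds\times\ds$ projector metric used in Lemma~\ref{lem:main}. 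Consequently the discretization-error step of Lemma~\ref{lem:main} --- choosing $\epsilon^2<\frac{c_{10}k(k-1)\ds}{8(D+c_{11}\ds^2)}$ so that $2E(D+c_{11}\ds^2)\epsilon^2<\tfrac14 c_{10}Ek(k-1)\ds$, using $\|\Delta_2^e\|_2^2\lesssim D+\ds^2$ on $\cE_1$ --- goes through unchanged, and with this $\epsilon$ one has $\log|\Tilde{Q}|\lesssim k(r-k)\log\!\big(\tfrac{D}{(k-1)\ds}+\tfrac{\ds}{k-1}\big)$. Feeding this smaller covering exponent into the union bound over $\Tilde{Q}$ and combining with $\Pr[\cE_1^c]\le E\exp(-\ds/2)$ and $\Pr[\cE_2^c]\le 2|\Tilde{Q}|\exp(-Ek\ds/16)$ gives, for $b_1\frac{r-k}{k-1}\max\{1,\log\tfrac{D}{(k-1)\ds},\log\tfrac{\ds}{k-1}\}<E<b_2\ds$ and suitable constants, total failure probability $c_1\exp(-\ds)$; on the complementary event every $Q\in\cQ$ satisfies the stated bound with $c_2=c_{10}/4$.

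The step I expect to be the main obstacle is precisely this final combination: having replaced the covering exponent $k(\ds-k)$ by $k(r-k)$, one must re-check that the $\cE_1$-term, the $\cE_2$-term, and the main term $|\Tilde{Q}|\exp(-c_8 Ek(k-1))$ still sum to $\exp(-\Omega(\ds))$ under the weaker environment hypothesis $E=\Omega\!\big(\tfrac{r-k}{k-1}\max\{\dots\}\big)$. This requires choosing $b_1$ large enough and carefully using $2\le k\le r/2\le\ds/2$ to control the ratios (such as $\tfrac{\ds}{k(k-1)}$ and $\tfrac{r-k}{\ds}$) that surface when the exponents are matched against $\ds$; everything else is a line-by-line copy of the proof of Lemma~\ref{lem:main}.
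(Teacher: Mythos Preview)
Your proposal is essentially identical to the paper's proof: both reduce to Lemma~\ref{lem:main} by noting that $QP$ is orthonormal in $\R^{k\times\ds}$ (so the per-$Q$ concentration argument applies verbatim with rows $q_i^\top P$), replace the covering of $\R^{k\times\ds}$ by a covering $\Tilde Q$ of $\cQ\subset\R^{k\times r}$ with $\log|\Tilde Q|\lesssim k(r-k)\log\!\big(\tfrac{D}{(k-1)\ds}+\tfrac{\ds}{k-1}\big)$, and keep the same choice of $\epsilon$ in the discretization step using $\|P\Delta_2^e P^\top\|_2\le\|\Delta_2^e\|_2$. The only cosmetic difference is that you phrase the discretization error via the contraction $\|P^\top(Q^\top Q-Q'^\top Q')P\|_F\le\rho_r(Q,Q')$ and then quote Lemma~\ref{lem:main}'s calculation, whereas the paper redoes that calculation directly with $P\Delta_2^e P^\top$ in place of $\Delta_2^e$; your flagged ``main obstacle'' (recombining the union bound with the smaller covering exponent under the weaker hypothesis on $E$) is precisely what the paper dismisses with ``the rest of the argument is identical.''
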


\begin{proof}
The proof mostly follows that of Lemma~\ref{lem:main}, with a few modifications below. We discretize over $\cQ$ and get a $\epsilon$-covering $\Tilde{Q}$ of size $(c_3\sqrt{k}/\epsilon)^{r(r-k)}$.

For any $Q \in \cQ$, let $v_i$ be the $i$-th row of $Q P$ and define $Z_{i j e}, A_{i j e}$ accordingly. For any $Q^* \in \cQ$, let $Q$ be its cover element, so $\rho(Q, Q^*) = \|Q^\top Q - {Q^*}^\top {Q^*}\|_F \le \epsilon$. Let $q_i^*$ be the $i$-th row of $Q^* P$, and $Z^*_{i j e} = ({q_i^*}^\top \Delta_2^e q_j^*)$. Then
\begin{align*}
    \sum_{e i j} Z^*_{i j e} &= \sum_{e} \|Q^* P \Delta_2^e P^\top {Q^*}^\top\|_F^2 \\
    &= \sum_{e} \|P \Delta_2^e P^\top {Q^*}^\top Q^*\|_F^2 \\
    &\ge \frac{1}{2}\sum_e \|P \Delta_2^e P^\top {Q}^\top Q\|_F^2 - \|P \Delta_2^e P^\top \left(Q^\top Q - {Q^*}^\top {Q^*}\right) \|_F^2 \\
    &\ge \frac{1}{2}\sum_{e i j} Z_{i j e} - \|P \Delta_2^e P^\top\|_2^2 \rho(Q, Q^*)^2 \\
    &\ge \frac{1}{2}\sum_{e i j} Z_{i j e} - \| \Delta_2^e \|_2^2 \rho(Q, Q^*)^2
\end{align*}
Thus with the same choice of $\epsilon$ as Lemma~\ref{lem:main}, $\log(|\Tilde{Q}|) \le k(r-k) \log(c_3\sqrt{k}/\epsilon) \le  c_{12} k (r-k) \log{\left(\frac{D}{(k-1) \ds}+\frac{\ds}{k-1}\right)}$. The rest of the argument is identical.
\end{proof}

\begin{lemma}
Let $\cP = \{P\in \R^{2 \times \ds}: P P^\top =I_2\}$. Suppose $\Sigma_2 = \overline{\Sigma_2^1}-\overline{\Sigma_2^2} + G_1 G_1^\top - G_2 G_2^\top$ and $\Sigma_2' = \overline{\Sigma_2^1}-\overline{\Sigma_2^3} + G_1 G_1^\top - G_3 G_3^\top$, where $G_e \in \R^{\ds \times \ds}$ and $[G_e]_{i j} \sim \cN(0,1)$ for all $e \in [3]$, $i, j \in [\ds]$. For fixed $P \in \cP$, with probability 1, no vector $q \in \R^2$ satisfies $\|q\|_2=1$ and
\begin{align*}
    q^\top \Sigma_2 q = 0, \quad q^\top \Sigma_2' q = 0.
\end{align*}
\label{lem:upper_2to1}
\end{lemma}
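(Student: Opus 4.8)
The relevant objects are the two $2\times 2$ symmetric matrices $M:=P\Sigma_2 P^\top$ and $M':=P\Sigma_2'P^\top$, and the statement is equivalent to the assertion that almost surely no $q\in\mathbb{S}^1$ satisfies $q^\top M q=q^\top M' q=0$ (this is precisely how the lemma is invoked in the proof of Lemma~\ref{thm:upper_ours}, with $P=Q_{T-1}$). The plan is to exploit that $G_3$ is independent of $(G_1,G_2)$: once we condition on $(G_1,G_2)$, the matrix $M$ is frozen and its set of null directions is a fixed set of at most two antipodal pairs, while $M'$ still carries a fresh $\chi^2$'s worth of Gaussian randomness and hence almost surely avoids any prescribed direction.

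First I would record the reduction that turns every quadratic form in sight into a shifted chi-square. Writing $p_1,p_2$ for the rows of $P$, orthonormality $PP^\top=I_2$ implies that for any unit $q\in\R^2$ the vector $P^\top q\in\R^{\ds}$ is again a unit vector, so $q^\top P G_e G_e^\top P^\top q=\|G_e^\top P^\top q\|_2^2\sim\chi^2_{\ds}$ for each $e$, and these are independent across $e$. In particular, conditioning on $G_1$, each entry of $M$ is a constant minus an independent shifted $\chi^2_{\ds}$, so $M$ has a density on the three-dimensional space of symmetric $2\times 2$ matrices; hence $\Pr[M=0]=0$. On the a.s.\ event $M\neq 0$, the zero set $\{q\in\mathbb{S}^1:q^\top Mq=0\}$ is empty when $M$ is definite and otherwise consists of one or two antipodal pairs of directions, and these null directions, when they exist, can be chosen as measurable functions of $M$.

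Next I would condition on the pair $(G_1,G_2)$. Then $M$, and therefore the finite set of its null directions, is determined, whereas $M'=P\big(\overline{\Sigma_2^1}-\overline{\Sigma_2^3}+G_1G_1^\top\big)P^\top-(PG_3)(PG_3)^\top$ has the conditional law ``a fixed symmetric matrix minus a $2\times 2$ Wishart'', because $G_3\perp(G_1,G_2)$. For each fixed unit vector $q$ we have $q^\top M'q=(\text{const})-\|G_3^\top P^\top q\|_2^2$, a shifted $\chi^2_{\ds}$ variable, which is absolutely continuous, so $\Pr[q^\top M'q=0\mid G_1,G_2]=0$. Applying this to each of the (at most two) null directions of $M$ and union bounding, conditionally almost surely none of them is a null direction of $M'$. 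Since on $\{M\neq 0\}$ every common null direction must be a null direction of $M$, this gives that the conditional probability of a common null direction is zero; taking expectation over $(G_1,G_2)$ completes the proof.

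The content is entirely elementary: this is the $\R^2$ base case of the phenomenon that the intersection of two ellipsoidal constraints is generically trivial, so no differential topology (unlike in the proof of Theorem~\ref{thm:irm}) is required. The only places needing a little care are bookkeeping ones --- verifying that $G_e^\top P^\top q$ is genuinely $\cN(0,I_{\ds})$ so that the forms reduce to chi-squares, checking that the null directions of a $2\times 2$ form can be selected measurably so the conditioning step is legitimate, and disposing of the degenerate event $M=0$ (and, if desired, $\det M=0$) which has probability zero. I expect the one thing to be careful about is organizing the two-stage conditioning so that the independence of $G_3$ from $(G_1,G_2)$ is used correctly; there is no analytic difficulty.
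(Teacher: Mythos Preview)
Your approach is correct and essentially the same as the paper's: fix $(G_1,G_2)$ so that the null directions of the first quadratic form on $\mathbb{S}^1$ become a finite (at most four-point) set, then use the independent $G_3$ to show each such fixed direction is almost surely not a null direction of the second form via the absolute continuity of a shifted $\chi^2_{\ds}$. One small inaccuracy worth cleaning up: conditioned on $G_1$, the three entries of $M$ are \emph{not} independent shifted chi-squares (the off-diagonal entry is an inner product of two independent Gaussian vectors, and all three entries share $G_2$); but all you actually need is $\Pr[M=0]=0$, which follows immediately from any single diagonal entry being a shifted $\chi^2_{\ds}$.
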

\begin{proof}
For any fixed $G_1, G_2$, consider the system of quadratic equations over two variables,
\begin{align*}
    \{q^\top \Sigma_2 q = 0, \|q\|_2=1\}.
\end{align*} With probability 1, it has at most 4 real solutions.
Conditioned on $G_1, G_2$, consider the third quadratic equation where the randomness is in $G_3$.
\begin{align*}
    \{q^\top \Sigma_2' q = 0\}.
\end{align*} With probability 1, any fixed solution from the first system does not satisfy this.
\end{proof}

The following lemma is trivial so proof is omitted:
\begin{lemma}
Suppose $p \in \R^{\ds}$ and $\|p\|_2=1$. Suppose $\Sigma_2 = \overline{\Sigma_2^1}-\overline{\Sigma_2^2} + G_1 G_1^\top - G_2 G_2^\top$, where $G_e \in \R^{\ds \times \ds}$ and $[G_e]_{i j} \sim \cN(0,1)$ for $e \in [2]$, $i, j \in [\ds]$. With probability 1, no scalar $q \ne 0$ satisfies
\begin{align*}
    q^2 p^\top \Sigma_2 p = 0.
\end{align*}
\label{lem:upper_1to0}
\end{lemma}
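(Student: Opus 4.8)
The plan is to reduce the claim to a one-line absolute-continuity statement. The first step is to observe that for any $q \ne 0$ we have $q^2 > 0$, so the equation $q^2\, p^\top \Sigma_2 p = 0$ can hold for some nonzero scalar $q$ if and only if $p^\top \Sigma_2 p = 0$. Thus it suffices to show that $\Pr[p^\top \Sigma_2 p = 0] = 0$ for the fixed unit vector $p$.

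Next I would expand the quadratic form. Setting $c := p^\top(\overline{\Sigma_2^1} - \overline{\Sigma_2^2})\, p$, which is a deterministic scalar, we have
\begin{align*}
    p^\top \Sigma_2\, p = c + p^\top G_1 G_1^\top p - p^\top G_2 G_2^\top p = c + \|G_1^\top p\|_2^2 - \|G_2^\top p\|_2^2 .
\end{align*}
Since $p$ is fixed with $\|p\|_2 = 1$ and the entries of $G_1, G_2$ are i.i.d.\ $\cN(0,1)$ with $G_1$ independent of $G_2$, the vectors $G_1^\top p$ and $G_2^\top p$ are independent $\cN(0, I_{\ds})$ random vectors, so $\|G_1^\top p\|_2^2$ and $\|G_2^\top p\|_2^2$ are independent $\chi^2_{\ds}$ random variables.

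Finally I would condition on $G_2$: given $G_2$, the number $\|G_2^\top p\|_2^2 - c$ is fixed, whereas $\|G_1^\top p\|_2^2$ follows a $\chi^2_{\ds}$ law, which (as $\ds \ge 1$) is absolutely continuous with respect to Lebesgue measure on $\R$. Hence $\Pr\big[\,\|G_1^\top p\|_2^2 = \|G_2^\top p\|_2^2 - c \,\big|\, G_2\,\big] = 0$ almost surely, and taking expectations over $G_2$ yields $\Pr[p^\top \Sigma_2\, p = 0] = 0$, as desired. I do not expect any genuine obstacle here; the only point worth a word is the measurability needed to integrate out $G_2$, which is immediate because $(G_1, G_2)\mapsto p^\top \Sigma_2\, p$ is a polynomial and hence Borel.
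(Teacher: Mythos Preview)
Your argument is correct; the paper itself omits the proof as trivial, and your reduction to the absolute continuity of $\|G_1^\top p\|_2^2$ is exactly the intended one-line justification.
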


\subsection{Proof of Theorem~\ref{thm:ERM}}
\label{sec:ERM_proof}

\begin{proof}
Denote the unit-norm classifier $\beta$. For any environment with mean $(\mu_1, \mu_2^i)$ and covariance $\Sigma_1, \Sigma_2^i$, the accuracy of $\beta$ can be written
\begin{align*}
    \mathbb{E}[\mathbf{1}(\sign(\beta^\top x)=y] &= p(y=1)p(\beta^\top x \geq 0\mid y=1) + p(y=-1)p(\beta^\top x < 0\mid y=-1) \\
    &= \frac{1}{2}\left[1-\Phi\left(-\frac{\beta_1^\top \mu_1 + \beta_2^\top \mu_2^i}{\sqrt{\beta_1^\top \Sigma_1\beta_1 + \beta_2^\top \Sigma_2^i\beta_2}}\right)\right] + \frac{1}{2} \Phi\left(\frac{\beta_1^\top \mu_1 + \beta_2^\top \mu_2^i}{\sqrt{\beta_1^\top \Sigma_1\beta_1 + \beta_2^\top \Sigma_2^i\beta_2}}\right) \\
    &= \Phi\left(\frac{\beta_1^\top \mu_1 + \beta_2^\top \mu_2^i}{\sqrt{\beta_1^\top \Sigma_1\beta_1 + \beta_2^\top \Sigma_2^i\beta_2}}\right),
\end{align*}
where $\Phi$ is the standard normal CDF. Observe that $\Phi$ is monotone and that $\sigma_2^2 I \preceq \Sigma_2^i$. Therefore, a training accuracy of at least $\gamma$ on each environment implies that for each environment,
\begin{align*}
    \gamma &\leq {\Phi\left(\frac{\beta_1^\top \mu_1 + \beta_2^\top \mu_2^i}{\sqrt{\beta_1^\top \Sigma_1 \beta_1+\beta_2^\top \Sigma_2^i \beta_2}}\right)} \\
    &\leq {\Phi\left(\frac{\beta_1^\top \mu_1 + \beta_2^\top \mu_2^i}{\sqrt{\sigma_1^2 \|\beta_1\|^2+\sigma_2^2 \|\beta_2\|^2}}\right)}.
\end{align*}
For brevity, moving forward we will denote $\psi := \sqrt{\sigma_1^2 \|\beta_1\|^2+\sigma_2^2 \|\beta_2\|^2}$. Applying the inverse CDF (which is also monotone) and rearranging, we have
\begin{align*}
    \beta_2^\top\mu_2^i &\geq \psi\Phi^{-1}(\gamma) - \beta_1^\top \mu_1,
\end{align*}
which implies
\begin{align*}
    \beta_1^\top\mu_1 - \beta_2^\top\mu_2^i &\leq 2\beta_1^\top\mu_1 - \psi\Phi^{-1}(\gamma).
\end{align*}
If $\gamma \geq \Phi\left(\frac{2\|\mu_1\|}{\min(\sigma_1,\sigma_2)}\right) \geq \Phi\left(\frac{2\beta_1^\top\mu_1}{\psi}\right)$ then we have $\beta_1^\top\mu_1 - \beta_2^\top\mu_2^i \leq 0$ for all environments and therefore the classifier has accuracy $< \frac{1}{2}$ on all test environments.
\end{proof}

\subsection{Proof of Theorem~\ref{thm:irm}}

\begin{definition}

For a positive definite matrix $A\in Mat_{d\times d}(\mathbb R)$ and vector $b\in \mathbb R^d$, the associated ellipsoid $E_{A,b}\subseteq\mathbb R^d$ is given by

\[E_{A,b}=\{x\in\mathbb R^d:x^{\top}Ax-b^{\top}x=0\}.\]

\end{definition}

Observe that the origin is contained in any such ellipsoid $E_{A,b}$. Therefore, any collection of ellipsoids $E_{A_i,b_i}$ has the origin as a trivial point in its intersection. Our main result ensures the existence of another (non-trivial) intersection of any $d$ such ellipses whenever the vectors $b_i$ are linearly independent.

\begin{theorem}\label{thm:irm_main}

If $b_1,\dots,b_d\in\mathbb R^d$ are linearly independent and $A_1,\dots,A_d$ are positive-definite matrices, then 
\begin{align*}
    \left|\bigcap_{i=1}^d E_{A_i,b_i}\right|\geq 2.
\end{align*}

\end{theorem}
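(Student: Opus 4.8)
The plan is to recast the existence of a second intersection point as a surjectivity statement for an odd self-map of a sphere, which I will then settle with the Borsuk--Ulam theorem. First I would parametrize potential nontrivial intersection points by the ray they lie on. Writing a nonzero $x \in \R^d$ as $x = t u$ with $u \in \mathbb{S}^{d-1}$ and $t > 0$, the defining equation $x^\top A_i x = b_i^\top x$ reduces (after dividing by $t$) to $t = t_i(u) := b_i^\top u / (u^\top A_i u)$, where the denominator is strictly positive because $A_i$ is positive definite and $u \neq 0$. Hence $x = t u$ lies in $\bigcap_{i=1}^d E_{A_i,b_i} \setminus \{0\}$ exactly when the $d$ numbers $t_1(u), \dots, t_d(u)$ all coincide and their common value is nonzero. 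So it suffices to produce a single $u \in \mathbb{S}^{d-1}$ with $t_1(u) = \dots = t_d(u) \neq 0$.

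Next I would package these into the map $h : \mathbb{S}^{d-1} \to \R^d$, $h(u) = (t_1(u), \dots, t_d(u))$. Two properties drive the argument. It is antipodal: $t_i(-u) = -t_i(u)$, so $h(-u) = -h(u)$. And it is nowhere vanishing on $\mathbb{S}^{d-1}$ --- this is precisely where the linear independence hypothesis is used, since $h(u) = 0$ would force $b_i^\top u = 0$ for every $i$, and the $b_i$ spanning $\R^d$ then forces $u = 0$, a contradiction. Therefore $\tilde h := h / \|h\|$ is a continuous odd map $\mathbb{S}^{d-1} \to \mathbb{S}^{d-1}$. If I can show $\tilde h$ is surjective, then in particular it hits $\tfrac{1}{\sqrt d}(1, \dots, 1) \in \mathbb{S}^{d-1}$, yielding a $u$ with all $t_i(u)$ equal to a common value $c = \|h(u)\| / \sqrt d > 0$; then $x := c u \neq 0$ lies in every $E_{A_i,b_i}$, and together with the origin this gives $|\bigcap_{i=1}^d E_{A_i,b_i}| \geq 2$.

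The crux --- and the step I expect to need the most care --- is the surjectivity of the odd map $\tilde h$. One clean route: if $\tilde h$ missed some $v \in \mathbb{S}^{d-1}$, then by oddness it would also miss $-v$ (otherwise $\tilde h(u) = -v$ would give $\tilde h(-u) = v$); composing $\tilde h$ with the antipodal-equivariant retraction $w \mapsto (w - (w^\top v) v)/\|w - (w^\top v) v\|$ of $\mathbb{S}^{d-1} \setminus \{\pm v\}$ onto the equatorial sphere $\{w \in \mathbb{S}^{d-1} : w^\top v = 0\} \cong \mathbb{S}^{d-2}$ would then produce a continuous odd map $\mathbb{S}^{d-1} \to \mathbb{S}^{d-2}$, contradicting the Borsuk--Ulam theorem; alternatively one may simply invoke that an odd self-map of a sphere has odd (hence nonzero) degree and is therefore surjective. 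An equivalent, more geometric presentation --- closer to the ``even number of intersection points'' picture --- replaces this with a Brouwer-degree computation for $F(x) = (x^\top A_1 x - b_1^\top x, \dots, x^\top A_d x - b_d^\top x)$ on a large ball containing the (compact) intersection: the degree is $0$ because at scale $R \to \infty$ the map is dominated by $x \mapsto (x^\top A_i x)_i$, which takes values in the positive orthant and is hence null-homotopic, whereas the local index of $F$ at the origin equals $\sign \det$ of the matrix with rows $-b_i^\top$, which is $\pm 1$ by linear independence; the mismatch forces another zero of $F$, i.e. another point of $\bigcap_i E_{A_i,b_i}$. Finally, Theorem~\ref{thm:irm} follows from Theorem~\ref{thm:irm_main} by extending $\mu_2^1, \dots, \mu_2^E$ to a basis of $\R^{\ds}$ and attaching arbitrary positive definite matrices to the padded coordinates.
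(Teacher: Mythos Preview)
Your proof is correct and takes a genuinely different route from the paper's. The paper argues via differential topology: it shows, using transversality and mod-2 intersection theory (their Proposition~\ref{prop:mod2}), that for \emph{almost all} tuples $(A_1,\dots,A_d)$ the intersection $\bigcap_i E_{A_i,b_i}$ is a finite set of even cardinality, hence of size at least $2$; it then pushes this to arbitrary positive-definite $A_i$ by a careful limiting/compactness argument, using the linear independence of the $b_i$ to rule out the nontrivial intersection point collapsing to the origin. Your approach instead exploits the odd symmetry $t_i(-u)=-t_i(u)$ to reduce directly to surjectivity of an odd self-map of $\mathbb{S}^{d-1}$, settled by Borsuk--Ulam (equivalently, odd degree). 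This is more elementary and avoids both the genericity step and the limit argument entirely; it also makes transparent exactly where linear independence of the $b_i$ enters (nonvanishing of $h$). Your alternative Brouwer-degree computation for $F(x)=(x^\top A_i x - b_i^\top x)_i$ is essentially the same idea phrased globally, and is likewise correct: the boundary map lands in the open positive orthant for large $R$, forcing $\deg(F,B_R,0)=0$, while the nondegenerate zero at the origin has index $\pm 1$ since $DF(0)$ has rows $-b_i^\top$. The paper's transversality machinery would generalize more readily to intersections of hypersurfaces lacking such a symmetry, whereas your argument is tailored to (and cleaner for) the homogeneous-quadratic-minus-linear structure here.
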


To prove this result we use technical tools from differential topology. The most central tool, Proposition~\ref{prop:mod2}, ensures that the total number of intersection points between two manifolds of complementary dimensions $k,d-k$ is even when certain generic tranversality conditions hold. Using these techniques, we show that $\left|\bigcap_{i=1}^d E_{A_i,b_i}\right|\geq 2$ for almost all matrices $A_1,\dots,A_d$, as long as $b_1,\dots,b_d$ are linearly independent. Then we use a continuity argument to extend the result to all positive definite matrices $A_1,\dots,A_d$. 

Throughout we say a function is \emph{smooth} to mean it is infinitely differentiable, i.e. $C^{\infty}$. All manifolds considered are smooth, i.e. they have a smooth structure. When $F(x,y)$ has two arguments we denote by $F_x$ the function $F_x(y)=F(x,y)$ of $y$ given by fixing $x$, and similarly define $F_y$. If $x\in X$ is a point in the smooth manifold $X$, we denote by $T_x(X)$ its \emph{tangent space}, which is intuitively the vector space of all tangent vectors to $X$ at $x$. The derivative of a smooth map $f:X\to Y$ at $x\in X$ is a linear map $df_x:T_x(X)\to T_{f(x)}(Y)$. 

\begin{definition}\cite[Chapter 1.5]{guilleminpollack}

Let $X,Y,Z$ be smooth manifolds (without boundary) such that $Z\subseteq Y$. The smooth map $f:X\to Y$ is \emph{tranverse} to $Z$ if for each $x\in X$ with $f(x)\in Z$, it holds that

\[\text{Image}(df_x)+T_{f(x)}(Z)=T_{f(x)}(Y).\]

If $X,Z\subseteq Y$ are both submanifolds of $Y$, we say they are transverse if the inclusion $\iota_X:X\hookrightarrow Y$ is transverse to $Z$. Equivalently, this means that for any $x\in X\cap Z$,

\[T_x(X)+T_x(Z)=T_x(Y).\]

\end{definition}

Roughly speaking, smooth two manifolds $X,Z$ are transversal if all intersection points are ``typical". For example, if $\dim(X)+\dim(Z)<\dim(Y)$, then $X,Z$ being transverse is equivalent to their intersection being empty. This corresponds to the intuition that their total dimension is too small for them to generically intersect. If $\dim(X)+\dim(Z)=\dim(Y)$, transversality rules out ``unstable" intersections such as a line tangent to a circle. 

\begin{proposition}\cite[Chapter 1.5]{guilleminpollack}\label{prop:transversality0}

The intersection $W=X\cap Z$ of two transversal submanifolds $X,Z\subseteq Y$ is itself a submanifold of $Y$, and $\dim(W)=\dim(X)+\dim(Z)-\dim(Y)$.

\end{proposition}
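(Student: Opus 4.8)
The plan is to establish the statement locally: since being a submanifold of $Y$ is a local condition and its dimension is then forced, it suffices to show that every point $x\in W=X\cap Z$ has a neighborhood in which $W$ looks like a submanifold of dimension $\dim X+\dim Z-\dim Y$. The main tool is the local description of the submanifold $Z\subseteq Y$ as a regular level set: by the local submersion theorem (a consequence of the implicit function theorem), there is an open neighborhood $V\subseteq Y$ of $x$ together with a submersion $g\colon V\to\mathbb{R}^{\ell}$, where $\ell=\dim Y-\dim Z$, such that $Z\cap V=g^{-1}(0)$; in particular $\ker(dg_y)=T_y(Z)$ for all $y\in Z\cap V$.

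The key step is to restrict $g$ to $X$. Set $h:=g|_{X\cap V}\colon X\cap V\to\mathbb{R}^{\ell}$, a smooth map with $h^{-1}(0)=X\cap Z\cap V=W\cap V$. I claim $0$ is a regular value of $h$. Indeed, for any $x'\in W\cap V$ we have $dh_{x'}=dg_{x'}\big|_{T_{x'}(X)}$, and since $T_{x'}(Z)=\ker(dg_{x'})$,
\[
\mathrm{Image}(dh_{x'})=dg_{x'}\!\left(T_{x'}(X)\right)=dg_{x'}\!\left(T_{x'}(X)+T_{x'}(Z)\right)=dg_{x'}\!\left(T_{x'}(Y)\right)=\mathbb{R}^{\ell},
\]
where the middle equality uses the transversality hypothesis $T_{x'}(X)+T_{x'}(Z)=T_{x'}(Y)$ and the last uses that $g$ is a submersion. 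Hence $dh_{x'}$ is surjective, so $0$ is a regular value of $h$.

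Now apply the Preimage Theorem to the smooth map $h$ between boundaryless manifolds: $W\cap V=h^{-1}(0)$ is a submanifold of $X\cap V$, hence of $Y$, with $\dim(W\cap V)=\dim(X\cap V)-\ell=\dim X-(\dim Y-\dim Z)=\dim X+\dim Z-\dim Y$. Since $x\in W$ was arbitrary, $W$ is a submanifold of $Y$ of the claimed dimension, and along the way one records the useful identity $T_x(W)=\ker(dh_x)=T_x(X)\cap T_x(Z)$.

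I expect the only real obstacle to be setting up the local regular-level-set description of $Z$ cleanly and verifying that the transversality condition is precisely what forces $0$ to be a regular value of the restricted map $h$; once that translation is in place, the Preimage Theorem supplies the submanifold structure and the dimension count is immediate. It is also worth noting that the standing hypothesis that $X$, $Y$, $Z$ are without boundary is exactly what permits the clean boundaryless form of the Preimage Theorem to be used here without case analysis.
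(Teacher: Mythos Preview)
Your proof is correct. The paper does not supply its own proof of this proposition; it simply cites it from \cite[Chapter 1.5]{guilleminpollack}, and your argument is precisely the standard proof given there (locally realize $Z$ as a regular level set via a submersion, restrict to $X$, use transversality to verify $0$ is a regular value of the restriction, and apply the Preimage Theorem).
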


\begin{proposition}\cite[Chapter 2.3]{guilleminpollack}\label{prop:transversality1}

Suppose that $F:X\times S\to Y$ is a smooth map of manifolds, and let $Z$ be a sub-manifold of $Y$. If $F$ is transversal to $Z$, then for almost every $s\in S$, the map $f_s=F(\cdot,s):X\to Y$ is also transversal to $Z$.

\end{proposition}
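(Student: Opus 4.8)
The plan is to run the classical argument that derives parametric transversality from the transversal preimage theorem and Sard's theorem. First I would assemble the ``universal'' solution set $W := F^{-1}(Z) \subseteq X \times S$. Since $F$ is transverse to $Z$, the two submanifolds $\mathrm{graph}(F)$ and $(X \times S) \times Z$ of $(X \times S) \times Y$ are transverse, so Proposition~\ref{prop:transversality0} applies and shows that their intersection --- which is canonically diffeomorphic to $W$ --- is a smooth submanifold of $X \times S$, with $\operatorname{codim}_{X \times S}(W) = \operatorname{codim}_Y(Z)$. (If $W = \emptyset$ the claim is vacuous, since then $f_s^{-1}(Z) = \emptyset$ and $f_s$ is trivially transverse to $Z$ for every $s$.)

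Next I would restrict the canonical projection $X \times S \to S$ to $W$ to obtain a smooth map $\pi : W \to S$, and invoke Sard's theorem: the set of regular values of $\pi$ has full measure in $S$. The heart of the proof is then the implication: \emph{if $s \in S$ is a regular value of $\pi$, then $f_s$ is transverse to $Z$.} To prove it, I would fix such an $s$ and a point $x \in X$ with $y := f_s(x) = F(x,s) \in Z$, and show $\operatorname{Image}(d(f_s)_x) + T_y(Z) = T_y(Y)$; for this it suffices to take an arbitrary $a \in T_y(Y)$ and write it as a sum of a vector in $\operatorname{Image}(d(f_s)_x)$ and a vector in $T_y(Z)$. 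Transversality of $F$ to $Z$ gives $(v,w) \in T_x(X) \times T_s(S) = T_{(x,s)}(X \times S)$ and $b \in T_y(Z)$ with $a = dF_{(x,s)}(v,w) + b$. Since $(x,s) \in W$ and $s$ is a regular value of $\pi = \mathrm{proj}_S|_W$, the differential $d\pi_{(x,s)} : T_{(x,s)}(W) \to T_s(S)$ is surjective, so I can choose $(v',w) \in T_{(x,s)}(W)$ with the same $S$-component $w$; and because $F$ maps $W$ into $Z$, $dF_{(x,s)}(v',w) \in T_y(Z)$. Then $(v,w) - (v',w) = (v-v',0)$, so
\begin{align*}
    a = dF_{(x,s)}(v,w) + b = dF_{(x,s)}(v-v',0) + \big( dF_{(x,s)}(v',w) + b \big) = d(f_s)_x(v-v') + \tilde{b},
\end{align*}
with $\tilde{b} := dF_{(x,s)}(v',w) + b \in T_y(Z)$. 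This exhibits $a \in \operatorname{Image}(d(f_s)_x) + T_y(Z)$; as $a$ and $x$ were arbitrary, $f_s$ is transverse to $Z$, and combining with Sard's theorem gives the conclusion for almost every $s$.

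I expect the only substantive step to be the middle implication --- converting ``$s$ is a regular value of the projection $W \to S$'' into ``$f_s$ is transverse to $Z$'' --- whose essence is the bookkeeping trick of using a tangent vector to $W$ (which exists precisely because $s$ is regular) to cancel the $S$-component of an arbitrary tangent vector, thereby converting it into a pure $X$-direction while staying inside $T_y(Z)$. Everything else is plugging in standard machinery: the transversal preimage theorem (here extracted from Proposition~\ref{prop:transversality0}) and Sard's theorem, the latter needing only the standing assumption that the manifolds are second countable. I would also remark that the degenerate regimes ($W = \emptyset$, or $\dim W < \dim S$, in which case almost every $s$ avoids $\pi(W)$ entirely and $f_s^{-1}(Z) = \emptyset$) are handled automatically by the same argument.
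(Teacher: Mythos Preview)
Your argument is correct and is exactly the standard proof of the parametric transversality theorem as presented in \cite[Chapter 2.3]{guilleminpollack}. Note, however, that the paper does not give its own proof of this proposition at all: it is simply cited as a black-box result from Guillemin--Pollack and used later in the proof of Lemma~\ref{lem:induct}. So there is nothing in the paper to compare against; you have supplied the textbook proof that the paper chose to omit.
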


\begin{proposition}\cite[Chapter 2.4, Exercise 5]{guilleminpollack}\label{prop:mod2}

Suppose the smooth, compact manifolds $X,Y\subseteq \mathbb R^d$ are transversal, and that $\dim(X)+\dim(Y)=d$. Then $|X\cap Y|$ is finite and even.

\end{proposition}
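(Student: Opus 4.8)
The plan is to prove the two assertions separately, deriving finiteness directly from the transversality statements already recorded and reserving the real work for the parity claim.

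\emph{Finiteness.} Since $X$ and $Y$ are transversal and $\dim X + \dim Y = d = \dim\R^d$, Proposition~\ref{prop:transversality0} shows that $W := X\cap Y$ is a submanifold of $\R^d$ of dimension $\dim X + \dim Y - d = 0$, i.e. a discrete set of points. As a closed subset of the compact set $X$, $W$ is compact, and a compact discrete space is finite; hence $|X\cap Y|<\infty$.

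\emph{Parity.} The idea is that $|X\cap Y|\bmod 2$ is a cobordism invariant: it does not change when the inclusion $\iota_X:X\hookrightarrow\R^d$ is deformed, by a translation, into a copy of $X$ that misses $Y$ entirely. Concretely, I would consider the family $G:X\times[0,1]\times\R^d\to\R^d$ given by $G(x,t,v)=x+tv$. For $t>0$ the map $G$ is a submersion in the $v$ variable, hence transversal to $Y$ there; at $t=0$ one computes $\mathrm{Image}(dG_{(x,0,v)}) = T_x X + \R v$, so transversality of $G$ — and of its restriction to $X\times\{0\}\times\R^d$ — to $Y$ at a point with $x\in X\cap Y$ reduces to $T_xX + T_xY = \R^d$, which is exactly the hypothesis that $X$ and $Y$ are transversal. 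The restriction to $X\times\{1\}\times\R^d$ is again a submersion in $v$. Thus $G$ and its boundary restriction are transversal to $Y$, so by the (with-boundary form of the) parametric transversality theorem, Proposition~\ref{prop:transversality1}, for almost every $v\in\R^d$ the map $g_v(x,t) = x+tv$ and its restriction to $X\times\{0,1\}$ are both transversal to $Y$. Since $X$ and $Y$ are bounded, I may further choose such a $v$ with $\|v\|$ large enough that $(X+v)\cap Y=\emptyset$.

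Now $g_v^{-1}(Y)$ is a compact $1$-manifold with boundary: it is compact because $Y$ is closed in $\R^d$ and $X\times[0,1]$ is compact, and it has dimension $(\dim X + 1) - \mathrm{codim}\,Y = 1$; its boundary is $g_v^{-1}(Y)\cap\partial(X\times[0,1])$. The part lying over $X\times\{1\}$ is empty because $x+v\notin Y$, while the part over $X\times\{0\}$ equals $(X\cap Y)\times\{0\}$ since $g_v(\cdot,0)=\iota_X$. Therefore $|X\cap Y|$ is the number of boundary points of a compact $1$-manifold, and the classification of compact $1$-manifolds makes this number even. The main obstacle — and the only genuinely nontrivial input — is this last step together with the fact that a map transversal to $Y$ whose boundary restriction is also transversal to $Y$ has a preimage that is a manifold with boundary; these are precisely the ingredients of mod-$2$ intersection theory. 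The one other point to watch is that $\R^d$ is noncompact, so one must use that $Y$ is a \emph{closed} subset of $\R^d$ (which holds because $Y$ is compact) for the transversality statements to apply with the noncompact ambient manifold.
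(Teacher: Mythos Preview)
Your proposal is correct and follows essentially the same route the paper sketches in its remark: translate $X$ by a homotopy to a copy disjoint from $Y$, and invoke the mod-$2$ intersection theory of \cite[Chapter 2.4]{guilleminpollack} (equivalently, the classification of compact $1$-manifolds applied to the preimage of $Y$ under the homotopy). You have simply fleshed out the details---the parametric-transversality step selecting a good $v$, the check that $G$ is transversal to $Y$ at $t=0$ using the hypothesis $T_xX+T_xY=\R^d$, and the observation that $Y$ being compact (hence closed in $\R^d$) is what makes the preimage arguments go through in the noncompact ambient space.
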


\begin{remark}

Proposition~\ref{prop:mod2} follows from the methods of \cite[Chapter 2.4]{guilleminpollack}, which shows that the parity of $|X\cap Y|$ is invariant under homotopy as long as transversality is enforced. One simply argues that by a homotopy $X\to X',Y\to Y'$, we can arrange that $|X'\cap Y'|=0$ by translating $X$ far away and invoking compactness.

\end{remark}

\begin{lemma}

The tangent space $T_0 E_{A,b}$ is exactly the orthogonal complement $b^{\perp}.$

\end{lemma}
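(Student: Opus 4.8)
The plan is to realize $E_{A,b}$ as a regular level set of a smooth function near the origin and then read off the tangent space as the kernel of the differential. Define $g:\mathbb R^d\to\mathbb R$ by $g(x)=x^{\top}Ax-b^{\top}x$, so that $E_{A,b}=g^{-1}(0)$ and $g(0)=0$. Since $A$ is positive definite it is in particular symmetric, so $g$ is smooth with derivative $dg_x(v)=v^{\top}(A+A^{\top})x-b^{\top}v=2v^{\top}Ax-b^{\top}v$. Evaluating at $x=0$ gives $dg_0(v)=-b^{\top}v$.

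Next I would observe that $b\neq 0$: this is immediate in the setting of Theorem~\ref{thm:irm_main} where $b_1,\dots,b_d$ are linearly independent, and more generally it is the hypothesis under which the statement is used. Consequently $dg_0:\mathbb R^d\to\mathbb R$ is surjective, i.e. $0$ is a regular point of $g$. By the regular value theorem (preimage theorem), there is a neighborhood of $0$ in which $E_{A,b}$ is a smooth submanifold of $\mathbb R^d$ of dimension $d-1$, and its tangent space at $0$ is
\[
T_0 E_{A,b}=\ker\bigl(dg_0\bigr)=\{v\in\mathbb R^d: b^{\top}v=0\}=b^{\perp},
\]
which is exactly the claim.

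I do not anticipate a genuine obstacle here; the only points requiring a word of care are (i) noting $A=A^{\top}$ so that the gradient is $2Ax-b$ rather than $(A+A^{\top})x-b$, and (ii) emphasizing that the conclusion is purely local at $0$ — away from the origin $E_{A,b}$ is still a smooth hypersurface since $\nabla g(x)=2Ax-b$ vanishes only at $x=\tfrac12A^{-1}b$, which does not lie on $E_{A,b}$ because $g(\tfrac12A^{-1}b)=-\tfrac14 b^{\top}A^{-1}b<0$, but this global remark is not needed for the lemma. This lemma will then feed into the transversality/parity argument (Propositions~\ref{prop:transversality0}–\ref{prop:mod2}) by identifying, at the common trivial intersection point $0$, the tangent spaces $T_0 E_{A_i,b_i}=b_i^{\perp}$, whose transversality is governed by linear independence of the $b_i$.
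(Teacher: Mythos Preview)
Your proposal is correct and follows essentially the same approach as the paper: both arguments differentiate the defining relation $x^\top A x = b^\top x$ at the origin to obtain the constraint $b^\top v = 0$ on tangent vectors. The only cosmetic difference is that the paper phrases this via smooth curves $\gamma$ through $0$ and then uses a dimension count ($E_{A,b}$ is a $(d-1)$-dimensional ellipsoid) to upgrade the inclusion $T_0E_{A,b}\subseteq b^\perp$ to equality, whereas you invoke the regular value theorem directly to get both the manifold structure and $T_0E_{A,b}=\ker(dg_0)=b^\perp$ in one stroke.
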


\begin{proof}

Since $E_{A,b}$ is an ellipsoid, it is a smooth manifold of dimension $d-1$. If $\gamma:[0,1]\to E_{A,b}$ is a smooth curve with $\gamma(0)=0$, then we claim $\langle b,\gamma'(t)\rangle=0$. This suffices to prove the desired result since $\gamma'(t)$ can be any vector in $T_0 E_{A,b}$. Indeed, differentiating the equation for $E_{A,b}$ gives

\begin{align*}0&=2\frac{d}{dt}\langle 0,A\gamma(t)\rangle\\
&=\frac{d}{dt}\langle \gamma(t),A\gamma(t)\rangle|_{t=0}\\
&=\frac{d}{dt}\langle b,\gamma(t)\rangle|_{t=0} \\
&=\langle b,\gamma'(t)\rangle|_{t=0}.
\end{align*}
\end{proof}

Set $\mathcal A^{\circ}$ to be the set of all $d\times d$ strictly positive-definite matrices with distinct eigenvalues. Note that $\mathcal A^{\circ}$ is open in the space of all positive definite matrices, and its complement has Lebesgue measure $0$. Denote by $\mathbb S^{d-1}\subseteq\mathbb R^d$ the unit sphere so that $(c_1,\dots,c_d)\in\mathbb S^{d-1}$ if and only if $\sum_{i=1}^d c_i^2=1$.

\begin{proposition}\cite[Theorem 5.3]{serre}\label{prop:serre}

For any $A_0\in \mathcal A^{\circ}$, there is an open neighborhood $U_{A_0}\subseteq \mathcal A^{\circ}$ of $A_0$ such that the eigenvalues $\lambda_1(A)>\dots>\lambda_d(A)$ and associated orthonormal eigenvectors $v_1,\dots,v_d$ can be chosen to depend smoothly on the entries of $A\in U_{A_0}$.

\end{proposition}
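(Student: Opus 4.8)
\textbf{Proof proposal for Proposition~\ref{prop:serre}.}
The plan is to apply the implicit function theorem to the eigenvalue equation, one eigenpair at a time, with the distinct-eigenvalue hypothesis guaranteeing that the relevant Jacobian is nonsingular. Fix $A_0\in\mathcal A^{\circ}$ with eigenvalues $\lambda_1^0>\dots>\lambda_d^0>0$ and corresponding orthonormal eigenvectors $v_1^0,\dots,v_d^0$. For each $i\in[d]$ define the smooth map
\[
F_i:\mathcal A^{\circ}\times\R^d\times\R\to\R^d\times\R,
\qquad
F_i(A,v,\lambda)=\bigl(Av-\lambda v,\ \|v\|_2^2-1\bigr),
\]
which has $d+1$ scalar equations in the $d+1$ unknowns $(v,\lambda)$ and satisfies $F_i(A_0,v_i^0,\lambda_i^0)=0$.

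The crux is that the partial derivative $D_{(v,\lambda)}F_i$ at $(A_0,v_i^0,\lambda_i^0)$ is invertible; it sends $(h,t)\mapsto\bigl((A_0-\lambda_i^0 I)h-t\,v_i^0,\ 2\langle v_i^0,h\rangle\bigr)$. If this vanishes, pairing the first coordinate with $v_i^0$ and using symmetry of $A_0$ (so that $\langle(A_0-\lambda_i^0 I)h,v_i^0\rangle=\langle h,(A_0-\lambda_i^0 I)v_i^0\rangle=0$) forces $t=0$; then $h$ lies in the $\lambda_i^0$-eigenspace, which is one-dimensional precisely because the eigenvalues are distinct, so $h$ is parallel to $v_i^0$, and $\langle v_i^0,h\rangle=0$ gives $h=0$. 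Hence this square linear map is injective, thus an isomorphism. This is the single step where the distinct-eigenvalue assumption is essential, and I expect it to be the only real content of the argument — everything else is bookkeeping.

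By the implicit function theorem there is an open neighborhood $U^{(i)}\ni A_0$ in $\mathcal A^{\circ}$ and a smooth map $A\mapsto(v_i(A),\lambda_i(A))$ on $U^{(i)}$ with $F_i(A,v_i(A),\lambda_i(A))=0$ and value $(v_i^0,\lambda_i^0)$ at $A_0$. Take $U_{A_0}=\bigcap_{i=1}^d U^{(i)}$ and shrink it once more so that $\lambda_1(A)>\dots>\lambda_d(A)>0$ continues to hold (possible by continuity, since the inequalities are strict at $A_0$); this also keeps $U_{A_0}\subseteq\mathcal A^{\circ}$. On $U_{A_0}$ each $\lambda_i(A)$ is then a genuine eigenvalue of the symmetric matrix $A$ with unit eigenvector $v_i(A)$, and since the $\lambda_i(A)$ are pairwise distinct the identity $(\lambda_i-\lambda_j)\langle v_i(A),v_j(A)\rangle=0$ forces $v_1(A),\dots,v_d(A)$ to be pairwise orthogonal, hence an orthonormal basis; all of them and the eigenvalues depend smoothly on the entries of $A$ by construction.

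As an alternative one could derive the same statement — in fact with real-analytic dependence — from Kato's Riesz-projection formula $P_i(A)=\tfrac{1}{2\pi i}\oint_{\gamma_i}(zI-A)^{-1}\,dz$, where $\gamma_i$ is a small circle in $\mathbb C$ isolating $\lambda_i^0$ from the other eigenvalues: $(zI-A)^{-1}$ is analytic in $A$ wherever it is invertible, $P_i(A)$ is a rank-one projection for $A$ near $A_0$, and $\lambda_i(A)=\operatorname{tr}(A\,P_i(A))$, $v_i(A)=P_i(A)v_i^0/\|P_i(A)v_i^0\|$ are the desired analytic branches. Either route works; the implicit-function-theorem argument above is the most self-contained.
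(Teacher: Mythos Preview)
Your proof is correct. The paper does not actually prove this proposition; it simply cites it as Theorem~5.3 of Serre's matrix-theory text and uses it as a black box. So there is nothing to compare against in the paper itself.

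Your implicit-function-theorem argument is the standard way to establish this smoothness statement, and the key computation --- that the Jacobian of $(v,\lambda)\mapsto(Av-\lambda v,\|v\|^2-1)$ is invertible exactly when the eigenvalue is simple --- is carried out cleanly. The alternative Riesz-projection argument you sketch is also a valid route (and indeed gives analytic rather than merely $C^\infty$ dependence). Either proof would be a fine supplement to the paper, which currently relies on the citation alone.
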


We remark that is it impossible to make a \emph{globally} smooth choice of the eigenvectors and eigenvalues as above. This is because of problems caused by higher multiplicity eigenvalues, and also by the need to choose a sign for the eigenvectors.

\begin{lemma}\label{lem:ellipse}

For $A\in \mathcal A^{\circ}$ and non-zero $b\in\mathbb R^d$, let $\lambda_1>\dots>\lambda_d$ be the eigenvalues of $A$, with associated orthonormal eigenvectors $v_1,\dots,v_d$. Then $x\in E_{A,b}$ if and only if $x=x_0+x_1$ where $x_0=\frac{A^{-1}b}{2}$ and 

\[x_1=\frac{\sqrt{b^{\top}A^{-1}b}}{2} \sum_{i=1}^d \frac{c_i v_i}{\sqrt{\lambda_i}} \]

for $(c_1,\dots,c_d)\in\mathbb S^{d-1}.$

\end{lemma}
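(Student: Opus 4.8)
The plan is to reduce the defining equation of $E_{A,b}$ to a standard ellipsoid equation by completing the square, and then to parametrize that ellipsoid using the orthonormal eigenbasis of $A$. The whole argument is elementary linear algebra, so there is no serious obstacle; I will just be careful about one positivity point.

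First I would complete the square. A direct expansion shows that for every $x\in\R^d$,
\[ x^\top A x - b^\top x = (x-x_0)^\top A (x-x_0) - \tfrac14 b^\top A^{-1} b, \qquad x_0 := \tfrac12 A^{-1}b. \]
Hence, writing $x_1 := x - x_0$, we have $x\in E_{A,b}$ if and only if $x_1^\top A x_1 = \tfrac14 b^\top A^{-1} b$. Since $A$, and therefore $A^{-1}$, is strictly positive definite and $b\neq 0$, the right-hand side equals $\rho^2$ with $\rho := \tfrac12\sqrt{b^\top A^{-1}b} > 0$; in particular the quantity under the square root in the statement is genuinely positive and $\rho$ is well defined.

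Next I would diagonalize. Expanding $x_1 = \sum_{i=1}^d a_i v_i$ in the orthonormal eigenbasis $v_1,\dots,v_d$ of $A$ gives $x_1^\top A x_1 = \sum_{i=1}^d \lambda_i a_i^2$, so the condition becomes $\sum_{i=1}^d \lambda_i a_i^2 = \rho^2$. Because each $\lambda_i>0$, the substitution $a_i = \rho c_i/\sqrt{\lambda_i}$ is a bijection between coefficient vectors $(a_1,\dots,a_d)\in\R^d$ and vectors $(c_1,\dots,c_d)\in\R^d$, and under it the condition $\sum_i \lambda_i a_i^2 = \rho^2$ is exactly $\sum_i c_i^2 = 1$, i.e. $(c_1,\dots,c_d)\in\mathbb S^{d-1}$. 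Substituting back, $x_1 = \sum_i a_i v_i = \rho \sum_i c_i v_i/\sqrt{\lambda_i}$, which is the claimed formula; conversely, any $(c_1,\dots,c_d)\in\mathbb S^{d-1}$ produces a point $x_0 + x_1$ with $x_1^\top A x_1 = \rho^2$, hence lying in $E_{A,b}$. This establishes the equivalence in both directions.

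The only step deserving a word of care is the strict positivity of $b^\top A^{-1}b$, which is what makes $\rho$ (and hence the parametrization, and the square root in the statement) well defined and guarantees that the map onto $\mathbb S^{d-1}$ is surjective rather than hitting only a proper subset; this is immediate from positive-definiteness of $A^{-1}$ together with $b\neq 0$.
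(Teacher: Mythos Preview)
Your proof is correct and follows the same approach as the paper: complete the square to reduce membership in $E_{A,b}$ to the condition $x_1^\top A x_1 = \tfrac14 b^\top A^{-1}b$, then parametrize via the orthonormal eigenbasis of $A$. The paper's version is slightly terser, dismissing the final diagonalization step with ``this easily leads to the parametrization given,'' whereas you spell out the substitution $a_i = \rho c_i/\sqrt{\lambda_i}$ and note the positivity of $b^\top A^{-1}b$ explicitly; but the content is the same.
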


\begin{proof}

Writing $x=x_0+x_1$, we derive 

\begin{align}
    x_1^{\top} Ax_1+x_1^{\top}b+\frac{b^{\top}A^{-1}b}{4}&=x_1^{\top} Ax_1+2x_1^{\top}Ax_0 +x_0Ax_0\nonumber\\
    &=x^{\top}Ax\nonumber\\
    &=b^{\top}(x_1+x_0) \label{eq:Eab}\\
    &=b^{\top}x_1+\frac{b^{\top}A^{-1}b}{2}.\nonumber
\end{align}

Since we used the condition $x\in E_{A,b}$ only in reaching line~\eqref{eq:Eab}, the initial and final expressions are equal if and only if $x\in E_{A,b}$. It follows that $x=x_0+x_1\in E_{A,b}$ if and only if

\[ x_1^{\top} Ax_1=\frac{b^{\top}A^{-1}b}{4}.\]

This easily leads to the parametrization given and concludes the proof.
\end{proof}

\begin{lemma}\label{lem:induct}

Let $M^k\subseteq\mathbb R^d$ be a compact manifold of dimension $k\geq 1$ passing through the origin, and such that $T_0(M^k)\subsetneq b^{\perp}$. Then for all but a measure-zero set of positive-definite matrices $A$, the ellipsoid $E_{A,b}$ is transversal to $M^k$.

\end{lemma}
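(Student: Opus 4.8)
The plan is to realize $E_{A,b}$ as the image of a single smooth parametrized family and invoke the parametric transversality theorem (Proposition~\ref{prop:transversality1}). Concretely, I would fix $b \neq 0$, work on the open set $\mathcal A^\circ$ of positive-definite matrices with distinct eigenvalues (its complement has measure zero, so it suffices to prove transversality for a.e.\ $A \in \mathcal A^\circ$), and use the parametrization from Lemma~\ref{lem:ellipse}: every $x \in E_{A,b}$ can be written $x = x_0(A) + x_1(A,c)$ with $x_0(A) = \tfrac12 A^{-1}b$ and $x_1(A,c) = \tfrac{\sqrt{b^\top A^{-1}b}}{2}\sum_i c_i v_i(A)/\sqrt{\lambda_i(A)}$ for $c \in \mathbb S^{d-1}$. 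By Proposition~\ref{prop:serre} this is locally smooth in $A$; since transversality is a local condition at each intersection point, I can work in a neighborhood $U_{A_0}$ where the eigendata is smooth and then patch. So define $F : \mathbb S^{d-1} \times U_{A_0} \to \mathbb R^d$ by $F(c,A) = x_0(A) + x_1(A,c)$; its image (over all $A$) sweeps out the ellipsoids, and $F(\cdot,A)$ parametrizes $E_{A,b}$.

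Next I would show $F$ is transverse to $M^k$ as a map out of $\mathbb S^{d-1} \times U_{A_0}$. By Proposition~\ref{prop:transversality1}, this yields that for almost every $A$ the restriction $F(\cdot,A)$ — hence $E_{A,b}$ — is transverse to $M^k$, which is the conclusion. To verify transversality of $F$: fix a point $x = F(c,A) \in M^k$. I need $\mathrm{Image}(dF_{(c,A)}) + T_x(M^k) = \mathbb R^d$. It is enough to show that already the $A$-derivatives of $F$, together with $T_x(M^k)$, span $\mathbb R^d$ — i.e.\ varying $A$ alone moves $x$ in enough directions. The key sub-case is $x = 0$ (the origin), where $T_0(M^k) \subsetneq b^\perp$ is a proper subspace of $b^\perp$, so $b^\perp + T_0 M^k$ has dimension $\le d-1$ and I must produce, by perturbing $A$, a tangent vector of $F$ with nonzero component along $b$; at $x=0$ one has $x_0(A) = \tfrac12 A^{-1}b$ and $x_1 = -x_0$ (the constraint forcing $x=0$ pins $c$), and differentiating $A \mapsto \tfrac12 A^{-1}b$ gives directions $-\tfrac12 A^{-1}(\delta A)A^{-1}b$; choosing $\delta A$ appropriately (e.g.\ a rank-one perturbation aligned with $A^{-1}b$) produces a velocity with a nonzero $b$-component, completing the spanning at the origin. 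At a point $x \neq 0$ on $M^k$ there is no constraint of the form $T_x M^k \subseteq b^\perp$, and a dimension count plus the non-degeneracy of the family (the map $c \mapsto x_1(A,c)$ already has image a full $(d-1)$-sphere's worth of directions, and the $A$-variation adds the remaining one transverse direction) shows transversality there too; this case is routine and I would dispatch it quickly.

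The main obstacle is the verification at the origin: making precise that the $b^\perp$ gap left by the hypothesis $T_0(M^k) \subsetneq b^\perp$ is exactly filled by an $A$-perturbation of $x_0(A) = \tfrac12 A^{-1}b$. One has to be careful that when $x = F(c,A) = 0$ the parameter $c$ is forced (so $dF$ has limited freedom in the $c$ variable there), and that the eigenvector normalization and sign ambiguities from Proposition~\ref{prop:serre} do not obstruct smoothness — this is why restricting to $\mathcal A^\circ$ and working locally in $U_{A_0}$ is essential. Once the origin case is handled, patching the local statements over a countable cover of $\mathcal A^\circ$ by neighborhoods $U_{A_0}$ (each contributing a measure-zero bad set) gives a global measure-zero exceptional set, finishing the proof.
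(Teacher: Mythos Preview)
Your overall strategy matches the paper's: parametrize $E_{A,b}$ via Lemma~\ref{lem:ellipse}, show the resulting map $F$ is transverse to $M^k$, invoke parametric transversality (Proposition~\ref{prop:transversality1}), and patch over $\mathcal A^\circ$ by $\sigma$-compactness. The problem is your handling of the origin. You propose to produce a tangent vector of $F$ with nonzero $b$-component by differentiating $x_0(A)=\tfrac12 A^{-1}b$ in $A$, but you are ignoring that $x_1$ also depends on $A$; the two contributions cancel. Indeed no derivative of $F$ at the origin can escape $b^\perp$: differentiating the identity $F(c,A)^\top A\,F(c,A)=b^\top F(c,A)$ (which holds for all $(c,A)$) at a point with $F=0$ gives $b^\top dF=0$, so $\text{Image}(dF)\subseteq b^\perp$ there. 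Since the stated hypothesis forces $T_0(M^k)\subseteq b^\perp$ as well, $F$ cannot be transverse to $M^k$ at the origin --- and in fact, since $T_0(E_{A,b})=b^\perp$ for \emph{every} $A$, the lemma as written is simply false.

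The resolution is that the hypothesis should read $T_0(M^k)\not\subseteq b^\perp$ (this is what linear independence of the $b_i$ actually yields in Lemma~\ref{lem:almostall}, and it is what the paper's one-line argument at the origin tacitly uses). With the corrected hypothesis the origin case is immediate: the $c$-derivatives already give all of $b^\perp = T_0(E_{A,b})$, and any vector in $T_0(M^k)\setminus b^\perp$ completes the span --- no $A$-variation is needed there. For $z\neq 0$ the paper supplies the concrete trick you leave vague: scaling $A\mapsto A/t$ gives $F(A/t,c)=tF(A,c)$, hence $z\in\text{Image}(dF)$; and $z\notin T_z(E_{A,b})$ because the affine tangent hyperplane to the strictly convex $E_{A,b}$ at $z\neq 0$ cannot pass through the other boundary point $0$. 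Together with $T_z(E_{A,b})$ this already spans $\mathbb R^d$, without using $T_z(M^k)$ at all.
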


\begin{proof}[Proof of Lemma~\ref{lem:induct}]

Fixing $A_0\in \mathcal A^{\circ}$, Proposotion~\ref{prop:serre} ensures the existence of an open neighborhood $U_{A_0}\subseteq \mathcal A^{\circ}$ of $A_0$ on which the eigenvalues $\lambda_1(A)>\lambda_2(A)>\dots>\lambda_d(A)$ and associated orthonormal eigenvectors $v_1(A),\dots,v_d(A)$ are defined smoothly on all $A\in U_{A_0}$. Define $F:U_A\times \mathbb S^{d-1}\to \mathbb R^d$ by:

\[F(A,(c_1,\dots,c_d))= \frac{A^{-1}b}{2}+\frac{\sqrt{b^{\top}A^{-1}b}}{2}\sum_{i=1}^{d}\frac{ c_iv_i(A)}{\sqrt{\lambda_i(A)}}.\]

Lemma~\ref{lem:ellipse} implies that for each fixed $A$ we obtain a diffeomorphism $F_A:\mathbb S^{n-1}\to E_{A,b}.$ Moreover, $F$ is smooth by construction. We claim that $F$ and $M^k$ are transversal. To check this, we must verify that for any $z=F(A,c)\in M^k$, it holds that 
\[\text{Image}\left(dF\circ T_{F^{-1}(z)}(U_{A_0}\times \mathbb S^{N-1})\right)+T_z(M^k)=\mathbb R^d.\] 

First, recall that fixing $A=A_0$, the map $F_{A_0}:\mathbb S^{n-1}\to E_{A_0,b}$ is a diffeomorphism. Therefore \[\text{Image}\left(dF\circ T_{F^{-1}(z)}(U_{A_0}\times \mathbb S^{N-1})\right)\] contains the tangent space $T_z(E_{A,b})=b^{\perp}$ of $E_{A,b}$ at $z$. When $z=0$ is the origin, the assumption $T_0(M^k)\subsetneq b^{\perp}$ implies 

\[\dim\left(\text{Image}\left(dF\circ T_{F^{-1}(z)}(U_{A_0}\times \mathbb S^{N-1})\right)+T_z(M^k)\right)\geq \dim(b^{\perp})+1=d\]

and the claim follows. Supposing for the remainder of the proof that $z\neq 0$ is not the zero vector, we claim that in fact \[\text{Image}\left(dF\circ T_{F^{-1}(z)}(U_{A_0}\times \mathbb S^{N-1})\right)+T_z(M^k)=\mathbb R^d,\] i.e. the tangent space of $M^k$ is unnecessary.  Indeed fixing $c\in\mathbb S^{N-1}$, we may vary $A\in U_A$ along the path $\gamma_A(t)= \frac{A}{t}$ for $t\in (1-\varepsilon,1+\varepsilon)$. It is not difficult to see directly that

\[F(tA,c)=tF(A,c).\]

Therefore differentiating $F$ along $\gamma$ gives

\[\frac{d}{dt}F(\gamma_A(t),(c_1,\dots,c_d))|_{t=1}=F(A,c).\]

This means $z\in \text{Image}\left(dF\circ T_{F^{-1}(z)}(U_{A_0}\times \mathbb S^{N-1})\right)+T_z(M^k)$. Because $E_{A,b}$ is strictly convex and passes through the origin, it follows that the tangent hyperplane to $E_{A,b}$ at $z$ does not pass through the origin, hence $z\notin T_z(E_{A,b})$. We have establish that $\text{Image}\left(dF\circ T_{F^{-1}(z)}(U_{A_0}\times \mathbb S^{N-1})\right)+T_z(M^k)$ contains both $T_z(E_{A,b})$ and $z\notin T_z(E_{A,b})$. Since $\dim\left(T_z(E_{A,b})\right)=d-1$ it follows that $\text{Image}\left(dF\circ T_{F^{-1}(z)}(U_{A_0}\times \mathbb S^{N-1})\right)+T_z(M^k)=\mathbb R^d$ for $z\neq 0$ as claimed. This shows the desired transversality for almost all $A\in U_{A_0}$. 

To extend the transversality to all of $\mathcal A_{M^k}^{\circ}$, we use the fact that $\mathcal A_{M^k}^{\circ}$ is $\sigma$-compact, i.e. is the union of countably many compact sets. In fact, any open subset of $\mathbb R^d$ is $\sigma$-compact. As a consequence, $\mathcal A_{M^k}^{\circ}$ is contained the union of countably many of open neighborhoods $U_{A_0}$ as constructed above. Since the set of matrices $A$ inside each $U_{A_0}$ violating the transversality statement has measure $0$, we conclude by countable additivity that the set of $A\in \mathcal A_{M^k}^{\circ}$ violating transversality has measure $0$ as well. This concludes the proof.
\end{proof}

\begin{lemma}\label{lem:almostall}

Fix linearly independent vectors $b_1,\dots,b_d \in \R^d$ and let $A_1,\dots,A_d$ be positive-definite matrices sampled independently from probability distributions on $\R^{\binom{d+1}{2}}$ which are absolutely continuous with respect to Lebesgue measure (i.e. which have a density). Then 

\[\left|\bigcap_{i=1}^d E_{A_i,b_i}\right|\geq 2\]

holds almost surely.

\end{lemma}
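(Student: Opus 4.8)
The plan is to build the intersection $\bigcap_{i=1}^{d}E_{A_i,b_i}$ one ellipsoid at a time, maintaining at each step that the partial intersection is a smooth compact submanifold of the expected dimension that is transversal to the next ellipsoid, and then to invoke the mod-$2$ count of Proposition~\ref{prop:mod2} at the final step, using the origin as a guaranteed intersection point to upgrade ``even'' to ``at least two''.

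Concretely, for $m=0,1,\dots,d$ set $M_m=\bigcap_{i=1}^m E_{A_i,b_i}$, with $M_0=\R^d$. I claim that almost surely, for every $m\le d$, the set $M_m$ is a smooth compact submanifold of $\R^d$ without boundary, of dimension $d-m$, containing the origin, with $T_0(M_m)=\operatorname{span}(b_1,\dots,b_m)^{\perp}$. This is proved by induction on $m$. The base case $m=1$ is immediate: $E_{A_1,b_1}$ is an ellipsoid through the origin (here $b_1\neq 0$ by linear independence and $A_1$ is positive-definite), hence a compact $(d-1)$-manifold with $T_0(E_{A_1,b_1})=b_1^{\perp}$ by the tangent-space lemma. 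For the inductive step, condition on $A_1,\dots,A_{m-1}$; on the full-probability event that $M_{m-1}$ is as described, it is a \emph{fixed} compact manifold of dimension $d-m+1\ge 1$ through the origin, whose tangent space there, $\operatorname{span}(b_1,\dots,b_{m-1})^{\perp}$, is not contained in $b_m^{\perp}$ — equivalently $b_m\notin\operatorname{span}(b_1,\dots,b_{m-1})$ — precisely because $b_1,\dots,b_d$ are linearly independent. This is exactly the condition needed to apply Lemma~\ref{lem:induct} with $M^k=M_{m-1}$ and $b=b_m$, and since $A_m$ is independent of $A_1,\dots,A_{m-1}$ and has a density, the exceptional Lebesgue-null set of matrices is hit with conditional probability $0$; hence with conditional probability $1$, $E_{A_m,b_m}$ is transversal to $M_{m-1}$. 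Proposition~\ref{prop:transversality0} then makes $M_m=M_{m-1}\cap E_{A_m,b_m}$ a submanifold of dimension $(d-m+1)+(d-1)-d=d-m$; it is compact (closed in $M_{m-1}$), contains the origin (which lies in every $E_{A_i,b_i}$), and $T_0(M_m)=T_0(M_{m-1})\cap b_m^{\perp}=\operatorname{span}(b_1,\dots,b_m)^{\perp}$ by transversality. Taking expectations over $A_1,\dots,A_{m-1}$ (the tower property) propagates the full-probability statement to the next level, completing the induction.

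It then remains to apply this at $m=d-1$ and $m=d$. Almost surely $M_{d-1}=\bigcap_{i=1}^{d-1}E_{A_i,b_i}$ is a smooth compact $1$-manifold, and the $m=d$ step of the induction gives that $E_{A_d,b_d}$ is transversal to it. Since $\dim(M_{d-1})+\dim(E_{A_d,b_d})=1+(d-1)=d$ and both are smooth and compact, Proposition~\ref{prop:mod2} shows that $\big|\bigcap_{i=1}^d E_{A_i,b_i}\big|=|M_{d-1}\cap E_{A_d,b_d}|$ is finite and even. The origin belongs to this set, so it is nonempty, and being even it therefore has cardinality at least $2$.

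The main obstacle is the inductive bookkeeping at the origin: every ellipsoid $E_{A_i,b_i}$ passes through $0$, so $0$ is a forced common point, and one must ensure no degenerate tangency develops there as the $d$ successive intersections are taken. This is handled by tracking $T_0(M_m)=\operatorname{span}(b_1,\dots,b_m)^{\perp}$ and observing that linear independence of the $b_i$ forces automatic transversality at the origin at each step — exactly the hypothesis of Lemma~\ref{lem:induct} — while away from the origin transversality is bought by the density of $A_m$. The only other care needed is the measure-theoretic conditioning: one freezes $A_1,\dots,A_{m-1}$, applies Lemma~\ref{lem:induct} to the then-fixed manifold $M_{m-1}$ against the still-independent $A_m$, and uses Fubini / the tower property to recover the unconditional almost-sure conclusion.
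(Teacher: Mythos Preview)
Your proposal is correct and follows essentially the same approach as the paper: build the intersection one ellipsoid at a time, invoke Lemma~\ref{lem:induct} and Proposition~\ref{prop:transversality0} inductively to keep the partial intersection a manifold of the right dimension, and finish with the mod-$2$ count of Proposition~\ref{prop:mod2}. If anything, you are more explicit than the paper in tracking $T_0(M_m)=\operatorname{span}(b_1,\dots,b_m)^{\perp}$ to verify the tangent-space hypothesis of Lemma~\ref{lem:induct} at each step, and in spelling out the conditioning/Fubini argument that justifies applying the lemma to the (random) manifold $M_{m-1}$ against the independent $A_m$.
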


\begin{proof}

We proceed iteratively. For $k=d-1,\dots,1$ set

\[M^k=E_{A_1,b_1}\cap\dots\cap E_{A_{d-k},b_{d-k}}.\]

We show by induction that $M^k$ is almost surely a smooth compact manifold of dimension $k$. The base case $k=d-1$ is obvious, and for smaller $k$, we have 

\[M^{k}=M^{k+1}\cap E_{A,b}.\]

Lemma~\ref{lem:induct} combined with Lemma~\ref{prop:transversality0} now implies that $M^k$ is a smooth compact manifold of dimension $k$ almost surely, completing the inductive step.

Finally Proposition~\ref{prop:mod2} implies that assuming $M^1$ and $E_{A_d,b_d}$ are transverse (which holds with probability $1$), the number of intersection points $|M^1\cap E_{A_d,b_d}|$ is finite and even. Of course $|M^1\cap E_{A_d,b_d}|=\left|\cap_{i=1}^d E_{A_i,b_i}\right|$. Since $\cap_{i=1}^d E_{A_i,b_i}$ trivially contains the origin, it must also contain another point. This completes the proof.
\end{proof}

\begin{proof}[Proof of Theorem~\ref{thm:irm_main}]
\label{sec:irm_proof}

Given $A_1,\dots,A_d$, consider a sequence of $d$-tuples $\left(A_1^{(k)},\dots,A_d^{(k)}\right)_{k\geq 1}$ converging to $(A_1,\dots,A_d)$, i.e. satisfying

\[\lim_{k\to\infty} A_i^{(k)}=A_i \]

for each $i\in [d]$. Moreover assume that $\left|\bigcap_{i\in [d]}E_{A_i^{(k)},b_i}\right|\geq 2$ for each $k$; such a sequence certainly exists by Lemma~\ref{lem:almostall}. We also assume that the estimates 

\begin{equation}\label{eq:eigbound}\ell\leq \lambda_d(A_i^{(k)})\leq \lambda_{1}(A_i^{(k)})\leq L\end{equation}

hold for some positive constants $\ell,L$ where $\lambda_d,\lambda_1$ are the minimum and maximum eigenvalues. This last assumption is without loss of generality by restricting the values of $k$ to $k\geq k_0$ for suitably large $k_0$. For each $k$, choose a non-zero point

\[x_k\in \bigcap_{i\in [d]}E_{A_i^{(k)},b_i}\backslash \{0\}.\]

Such points exist because $|\bigcap_{i\in [d]}E_{A_i^{(k)},b_i}|\geq 2$. We claim the norms $|x_k|$ are bounded away from infinity, bounded away from zero, and that any sub-sequential limit $x_*$ satisfies 

\[x_*\in \bigcap_{i\in [d]}E_{A_i,b_i}.\]

It follows from the above claims that at least one sub-sequential limit $x_*$ exists (using the Bolzano-Weierstrass theorem) and that $|x_*|\neq 0$. Therefore the above claims suffice to finish the proof, and we now turn to their individual proofs.

First, since $x^{\top}_k A_i^{(k)}x_k\geq \lambda_{d}(A_i^{(k)})|x_k|^2\geq \ell|x_k|^2$ and $|b^{\top}_i x_k|\leq |b^{\top}_i|\cdot |x_k|$, it follows that $|x_k|\leq \frac{|b_1|}{\ell}$ for all $k$, so in particular these norms are bounded above. Next we show the values $|x_k|$ are bounded away from $0$. Suppose for sake of contradiction that $|x_{a_j}|\to 0$ along some subsequence $(a_j)_{j\geq 1}$. Then

\[\langle b_i,x_{a_j}\rangle=x_{a_j}^{\top}A_i^{(a_j)}x_{a_j}\leq L|x_{a_j}|^2=o(|x_{a_j}|).\]

Defining the rescaled unit vectors $\widehat{x}_{a_j}=\frac{x_{a_j}}{|x_{a_j}|}$, it follows that 

\[\lim_{j\to\infty}\langle b_i,\widehat{x}_{a_j}\rangle = 0\]

for each $i$. As the $\widehat{x}_{a_j}$ are unit vectors, the Bolzano-Weierstrass theorem guarantees existence of a subsequential limit $\widehat{x}_*$ which is also a unit vector. It follows $\langle b_i,\widehat{x}_*\rangle=0$ for all $i\in [d]$. However because the vectors $b_i$ are linearly independent, this implies $|\widehat{x}_*|=0$ which is a contradiction. We conclude that $|x_k|$ is bounded away from $0$.

Finally we show that any subsequential limit satisfies $x_*\in E_{A,b}$. With $b$ fixed, observe that the functions $g_{A,b}(x)=x^{\top}Ax-b^{\top}x$ are uniformly Lipschitz for $A$ obeying the eigenvalue bound~\eqref{eq:eigbound} and $|x|\leq \frac{|b_1|}{\ell}$. It follows that 

\[\lim_{k\to\infty} g_{A^{(k)}_i,b_i}(x_*)=\lim_{k\to\infty}g_{A^{(k)}_i,b_i}(x_k)=0. \]

Having established the three claims we conclude the proof of Theorem~\ref{thm:irm_main}.
\end{proof}

\section{Additional experimental details}
\label{sec:experiments_app}
For Noised MNIST dataset, for each class $c \in \{0, \dots, 9\}$, we first generative a class signature $x_c \in \R^{28} \sim N(0, 2.5 I_{28})$. For each of the $E=12$ groups, we generate a training spurious covariance $\Sigma_2^e = G_e G_e^\top$ and a test spurious covariance ${\Sigma_2^e}' = G_e' {G_e'}^\top$. The noise code for digit $c$ in training environment $e$ is drawn from $\cN(x_c, \Sigma_2^e)$. In test environment, the noise is drawn from $\cN(x_{c'}, {\Sigma_2^e}')$ for random label $c' \sim \textup{unif}\{0, \dots, 9\})$.

We use SGD optimizer for both datasets. The hyperparameters are the coefficients for coral penalty, orthonormal penalty, and irm penalty $\lambda_{coral}, \lambda_{on}, \lambda_{irm}$, and learning rate $lr$. For each algorithm in Figures~\ref{fig:1} and~\ref{fig:2}, we select penalization strengths from $\{0.1, 1, 10, 100\}$ and $lr$ from $\{0.1, 0.01, 0.001, 0.0001\}$ that achieves highest average test accuracy within $500$ epochs (for Gaussian dataset) and $400$ epochs (for Noised MNIST). Gaussian dataset has batch size 100 and Noised MNIST has batch size 1000 from each training environment.

The average test accuracies for each algorithm with error bars are shown in Figures~\ref{fig:1} and~\ref{fig:2}. We fix the datasets and use different random seeds for algorithmic randomness. Error bar indicates mean and standard deviation across 5 runs.

The MLP architecture in Figure~\ref{fig:2} is in Table~\ref{tab:MLP}:
\begin{table}[h]
\caption{MLP network architectures for Noised MNIST}
\label{tab:MLP}
\centering
\begin{tabular}{ |c|c|c|c|c|c|}
\hline
 Number of layers & 1 & 2 & 3 & 4 & 6 \\ \hline
 Layer widths & 24 & 96, 24 & 128, 50, 24 & 192, 96, 48, 24 & 400, 300, 200, 100, 50, 24  \\ 
 \hline
\end{tabular}
\end{table}

\begin{table}[h]
\caption{Matching features at 3 layers with identical widths does not have significant advantage over matching only at the last layer (CORAL).}
\label{tab:shrink}
\centering
\begin{tabular}{ |c|c|c|c|}
\hline
 Layer widths & 24 & 128, 50, 24 & 24, 24, 24  \\ 
 \hline
 ERM & $58.6 \pm 0.4$ & $56.0 \pm 0.6$ & $62.1 \pm 0.6$ \\
 \hline
 IRM & $59.0 \pm 0.2$ & $56.1 \pm 0.6$ & $62.3 \pm 1.0$ \\
 \hline
 CORAL (only match last layer) & $69.1 \pm 1.0$ & $65.2 \pm 1.0$ & $67.2 \pm 0.4$ \\
 \hline
 CORAL (match-disjoint) & $69.1\pm 1.0$ & $75.5 \pm 1.0$ & $70.6 \pm 0.9$ \\
 \hline
 CORAL (match-all) & $69.1\pm 1.0$ & $77.9 \pm 0.4$ & $70.4 \pm 0.9$ \\
 \hline
\end{tabular}
\end{table}

To answer (Q5), we compare performances of algorithms on a 3-layer MLP that does not shrink feature dimensions (right column) with those on a 3-layer MLP that does (middle column) and a 1-layer MLP (left column) in Table~\ref{tab:shrink}. Results show that without shrinking feature dimensions, matching at multiple layers does not improve over naive CORAL on a smaller architecture.

No run in any of our experiments take more than 10 minutes on a single GPU. MNIST dataset~\citep{mnist} is made available under the terms of the Creative Commons Attribution-Share Alike 3.0 license.

\section{A simple algorithm achieves O(1) environment complexity under Assumption~\ref{ass:new_cov}}
\label{sec:simple_app}

Intuitively, subtracting the label-conditional covariances of any two environments yields the subspace of spurious coordinates (the column subspace of $B \in \R^{d \times \ds}$, the right $\ds$ columns of $S$). Once we obtain the projection matrix onto this subspace denoted as $P_B$, we can transform all observations $(X_i^e, Y_i^e)$ to $({X_i^e}', Y_i^e)$ where ${X_i^e}'=(I-P_B) X_i^e$ is the projection of $X_i^e$ onto the orthogonal subspace of $B$. The transformed inputs have no signal in any spurious dimension, so the optimal classifier on $({X_i^e}', Y_i^e)_{i=1}^\infty$ from any environment $e$ is the invariant predictor $w^*$.

\begin{algorithm}
\caption{A simple algorithm under Assumption~\ref{ass:new_cov} \label{alg_simple}}
\begin{algorithmic}[1]
\Require Invariant feature dimension $r$, spurious feature dimensions $\ds$, 2 training environments with infinite samples $\{(X_i^e,Y_i^e)\}_{i=1}^\infty \sim P_e$, $\{(X_i^{e'},Y_i^{e'})\}_{i=1}^\infty \sim P_{e'}$.
\State Subtract the covariances of class 1 examples between the two environments

$B = Cov_e[X | Y=1] - Cov_{e'}[X | Y=1]$.
\State Perform SVD on $B = Q \Gamma Q^\top$ to get orthonormal $Q \in \R^{d \times d_s}$ and diagonal $\Gamma \in \R^{d_s \times d_s}$. 
\State Project the mean of class 1 examples $\E[X|Y=1]$ to the orthogonal subspace of $B$,

$\mu' = (I-Q Q^\top) \E[X|Y=1]$.
\State Project the covariance of class 1 examples $\Sigma' = (I-Q Q^\top) Cov_e[X | Y=1] (I-Q Q^\top)$.
\State Return classifier $\hat{w} = {\Sigma'}^{\dagger} \mu'$.
\end{algorithmic}
\end{algorithm}

This is formalized as Algorithm~\ref{alg_simple}, and the following theorem provides formal guarantees for the environment complexity of this algorithm:
\begin{theorem}
\label{thm:simple}
Under Assumption~\ref{ass:new_cov}, Algorithm~\ref{alg_simple} satisfies $\hat{w}=w^*$.
\end{theorem}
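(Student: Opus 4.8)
The plan is to reduce Theorem~\ref{thm:simple} to the single fact that Steps~1--2 recover the orthogonal projection $P_{B_0}$ onto the spurious coordinate subspace $\textup{colspan}(B_0)$, where (as in Lemma~\ref{lem:preserve}) $A_0\in\R^{d\times r}$ and $B_0\in\R^{d\times\ds}$ are the left $r$ and right $\ds$ columns of $S$; everything after that is a routine computation of a linear discriminant. Conditioned on $Y=1$ the input is Gaussian, $X\sim N\!\big(A_0\mu_1+B_0\mu_2^e,\ A_0\Sigma_1A_0^\top+B_0\Sigma_2^eB_0^\top\big)$, so the matrix formed in Step~1 is exactly $B=B_0(\Sigma_2^e-\Sigma_2^{e'})B_0^\top$. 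Its column span always lies inside $\textup{colspan}(B_0)$, and since $S$ is invertible (so $B_0$ has full column rank $\ds$) it \emph{equals} $\textup{colspan}(B_0)$ precisely when $\Sigma_2^e-\Sigma_2^{e'}$ is nonsingular; in that case the SVD in Step~2 returns $QQ^\top=P_{B_0}$ and $I-QQ^\top=I-P_{B_0}$.

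First I would show that $\Sigma_2^e-\Sigma_2^{e'}$ is nonsingular with probability one over the draws $G_e,G_{e'}$ of Assumption~\ref{ass:new_cov}. Since $\Sigma_2^e-\Sigma_2^{e'}=\big(\overline{\Sigma_2^e}-\overline{\Sigma_2^{e'}}\big)+\big(G_eG_e^\top-G_{e'}G_{e'}^\top\big)$, the quantity $\det(\Sigma_2^e-\Sigma_2^{e'})$ is a polynomial in the entries of $(G_e,G_{e'})$, and it is not identically zero: taking $G_{e'}=0$ and $G_e$ of full rank with large enough norm (permissible because $\max_e\|\overline{\Sigma_2^e}\|_2^2\le D$), the matrix $\overline{\Sigma_2^e}-\overline{\Sigma_2^{e'}}+G_eG_e^\top$ is strictly positive definite. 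A nonzero polynomial vanishes only on a Lebesgue-null set, so $\det(\Sigma_2^e-\Sigma_2^{e'})\neq 0$ almost surely. This is the only point at which the model's randomness enters, and is why the statement should be read as holding with probability one.

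Given this, I would track the projected quantities. Put $X'=(I-QQ^\top)X=(I-P_{B_0})X$; since $B_0\mu_2^e$ and $B_0\Sigma_2^eB_0^\top$ have column and row span inside $\textup{colspan}(B_0)$, we get $X'=(I-P_{B_0})A_0Z_1$, which depends only on the invariant latent $Z_1$ and has the environment-independent conditional law $X'\mid Y\sim N\!\big(Y\,(I-P_{B_0})A_0\mu_1,\ (I-P_{B_0})A_0\Sigma_1A_0^\top(I-P_{B_0})\big)$. Because $S$ is invertible, $\textup{colspan}(A_0)\cap\textup{colspan}(B_0)=\{0\}$, so $(I-P_{B_0})A_0$ has rank $r$; hence $\mu'=(I-P_{B_0})A_0\mu_1$ (Step~3) and $\Sigma'=(I-P_{B_0})A_0\Sigma_1A_0^\top(I-P_{B_0})$ (Step~4) are supported on the $r$-dimensional subspace $\cV:=\textup{colspan}\big((I-P_{B_0})A_0\big)$, with $\Sigma'$ invertible on $\cV$ since $\Sigma_1\succ0$; equivalently, $\mu'=\E_e[X'\mid Y=1]$ and $\Sigma'=\textup{Cov}_e[X'\mid Y=1]$. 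For $Y$ uniform on $\{\pm1\}$ and $X'\mid Y\sim N(Y\mu',\Sigma')$ supported on $\cV$, the Bayes-optimal rule is $x'\mapsto\sign\!\big((\Sigma'^\dagger\mu')^\top x'\big)$: on $\cV$ the pseudoinverse acts as the true inverse, giving the usual LDA direction, and $\Sigma'^\dagger\mu'\in\cV$, so the rule ignores the directions carrying no data. As $X'$ is a fixed linear image of $Z_1$ alone with the same law across all training and (flipped) test environments, this rule is by definition the optimal invariant predictor $w^*$ (up to the unit-norm convention of Section~\ref{sec:problem-setup}); since Step~5 outputs $\hat w=\Sigma'^\dagger\mu'$ and $\hat w^\top X=\hat w^\top X'$ (because $\hat w\in\cV\subseteq\textup{colspan}(B_0)^\perp$), we conclude $\hat w=w^*$.

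The one substantive step is the genericity argument of the second paragraph: the algorithm is correct exactly when Step~1 recovers the entire $\ds$-dimensional spurious subspace rather than a proper subspace of it, which fails unless $\Sigma_2^e-\Sigma_2^{e'}$ is nonsingular --- a \emph{generic} positive-definite perturbation of the spurious covariances, not merely $\Sigma_2^e\neq\Sigma_2^{e'}$, is what makes this hold, and it is precisely the smoothed-analysis randomness of Assumption~\ref{ass:new_cov} that supplies it. Everything else is bookkeeping with Gaussians and orthogonal projections; the remaining care is only to match the algorithm's output $\hat w$, through its induced classifier $\sign(\hat w^\top\cdot)$, with the precise definition and normalization of $w^*$ from Section~\ref{sec:problem-setup}.
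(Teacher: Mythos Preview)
Your proof is correct and in fact fills a gap the paper's own argument glosses over: the paper never verifies that the matrix computed in Step~1 actually has column span equal to $\textup{colspan}(B_0)$, which, as you observe, requires $\Sigma_2^e-\Sigma_2^{e'}$ to be nonsingular and hence holds only almost surely under Assumption~\ref{ass:new_cov}. Beyond that, the two arguments diverge in style at the last step. The paper writes down the linear equations characterizing $w^*$, namely $\Sigma_1 A_0^\top w^*=\mu_1$ and $P_{B_0}w^*=0$, and then shows by a short pseudoinverse computation that $\hat w={\Sigma'}^\dagger\mu'$ satisfies the same equations. You instead argue statistically: you identify $X'=(I-P_{B_0})X$ as a full-rank linear image of $Z_1$ alone, and then invoke the standard LDA/Bayes characterization to conclude that ${\Sigma'}^\dagger\mu'$ is the optimal invariant direction. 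Your route is arguably cleaner conceptually (it makes transparent \emph{why} the algorithm works, not just that it does), while the paper's is a couple of lines shorter once one grants $QQ^\top=P_{B_0}$. Either way the content is the same; your added care about the genericity of nonsingularity is a genuine improvement.
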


\begin{proof}
Define $A, B$ as the left $r$ and right $\ds$ columns of $S$.
The optimal output is characterized by \begin{align*}
    \begin{bmatrix}
        A^\top\\
        B^\top
        \end{bmatrix} w^* = \begin{bmatrix}
        \Sigma_1^{-1} \mu_1\\
        0
        \end{bmatrix} \iff A^\top w^* = \Sigma_1^{-1} \mu_1, B^\top w^* = 0 \iff \Sigma_1 A^\top  w^* = \mu_1, P_B w^* = 0.
\end{align*}
The algorithmic output $\hat{w}$ satisfies
\begin{align*}
    (I-P_B) A \Sigma_1 A^\top (I-P_B) \hat{w} = (I-P_B) A \mu_1, P_B \hat{w} = 0 \\ \implies (I-P_B) A \Sigma_1 A^\top  \hat{w} = (I-P_B) A \mu_1 , P_B \hat{w} = 0
\end{align*}
Multiplying the first equation on the RHS by its pseudo-inverse, we get:
\begin{align*}
    (A^\top (I-P_B) A)^{-1} A^\top (I-P_B) A \Sigma_1 A^\top  \hat{w} =  (A^\top (I-P_B) A)^{-1} A^\top (I-P_B) A \mu_1 
    \implies \Sigma_1 A^\top  \hat{w} = \mu_1.
\end{align*} Therefore $\hat{w} = w^*$.
\end{proof}
\end{document}